\newtheorem{theorem}{Theorem}
\newtheorem{proposition}{Proposition}
\newtheorem{lemma}{Lemma}
\newtheorem{corollary}{Corollary}
\newtheorem{definition}{Definition}
\newtheorem{remark}{Remark}
\newcommand{\reals}{\mathbb{R}}
\newcommand{\E}{\mathbb{E}}
\newcommand{\bx}{\mathbf{x}}
\newcommand{\bw}{\mathbf{w}}
\newcommand{\bg}{\mathbf{g}}
\newcommand{\bb}{\mathbf{b}}
\newcommand{\bu}{\mathbf{u}}
\newcommand{\bv}{\mathbf{v}}
\newcommand{\bn}{\mathbf{n}}
\newcommand{\bs}{\mathbf{s}}
\newcommand{\Ocal}{\mathcal{O}}
\newcommand{\Ucal}{\mathcal{U}}
\newcommand{\Rcal}{\mathcal{R}}
\newcommand{\Scal}{\mathcal{S}}
\newcommand{\Vcal}{\mathcal{V}}
\newcommand{\Wcal}{\mathcal{W}}
\newcommand{\norm}[1]{\left\|#1\right\|}
\newcommand{\inner}[1]{\left\langle#1\right\rangle}
\newcommand{\secref}[1]{Section~\ref{#1}}
\newcommand{\subsecref}[1]{Subsection~\ref{#1}}
\renewcommand{\eqref}[1]{Eq.~(\ref{#1})}
\newcommand{\lemref}[1]{Lemma~\ref{#1}}
\newcommand{\corref}[1]{Corollary~\ref{#1}}
\newcommand{\thmref}[1]{Thm.~\ref{#1}}
\title{Without-Replacement Sampling for Stochastic Gradient Methods: Convergence Results and Application to Distributed Optimization}
\author{Ohad Shamir\\Weizmann Institute of Science\\\texttt{ohad.shamir@weizmann.ac.il}}
\date{}
\begin{document}

\maketitle

\begin{abstract}
	Stochastic gradient methods for machine learning and optimization problems are usually analyzed assuming data points are sampled \emph{with} replacement. In practice, however, sampling \emph{without} replacement is very common, easier to implement in many cases, and often performs better. In this paper, we provide competitive convergence guarantees for without-replacement sampling, under various scenarios, for three types of algorithms: Any algorithm with online regret guarantees, stochastic gradient descent, and SVRG. A useful application of our SVRG analysis is a nearly-optimal algorithm for regularized least squares in a distributed setting, in terms of both communication complexity and runtime complexity, when the data is randomly partitioned and the condition number can be as large as the data size per machine (up to logarithmic factors). Our proof techniques combine ideas from stochastic optimization, adversarial online learning, and transductive learning theory, and can potentially be applied to other stochastic optimization and learning problems.
\end{abstract}

\section{Introduction}

Many canonical machine learning problems boil down to solving a convex empirical risk minimization problem of the form
\begin{equation}\label{eq:obj}
\min_{\bw\in \Wcal} F(\bw) = \frac{1}{m}\sum_{i=1}^{m}f_i(\bw),
\end{equation}
where each individual function $f_i(\cdot)$ is convex (e.g. the loss on a given example in the training data), and the set $\Wcal\subseteq \reals^d$ is convex. In large-scale applications, where both $m,d$ can be huge, a very popular approach is to employ stochastic gradient methods. Generally speaking, these methods maintain some iterate $\bw_t\in \Wcal$, and at each iteration, sample an individual function $f_i(\cdot)$, and perform some update to $\bw_t$ based on $\nabla f_i(\bw_t)$. Since the update is with respect to a single function, this update is usually computationally cheap. Moreover, when the sampling is done independently and uniformly at random, $\nabla f_i(\bw_t)$ is an unbiased estimator of the true gradient $\nabla F(\bw_t)$, which allows for good convergence guarantees after a reasonable number of iterations (see for instance \cite{shalev2014understanding,bertsekas1999nonlinear,bubeck2015convex,rakhlin2011making}).

However, in practical implementations of such algorithms, it is actually quite common to use \emph{without-replacement} sampling, or equivalently, pass sequentially over a random shuffling of the functions $f_i$. Intuitively, this forces the algorithm to process more equally all data points, and often leads to better empirical performance. Moreover, without-replacement sampling is often easier and faster to implement, as it requires sequential data access, as opposed to the random data access required by with-replacement sampling (see for instance \cite{bottou2009curiously,bottou2012stochastic,feng2012towards,recht2012beneath,gurbuzbalaban2015random}).

\subsection{What is Known so Far?}

Unfortunately, without-replacement sampling is not covered well by current theory. The challenge is that unlike with-replacement sampling, the functions processed at every iteration are not statistically independent,  and their correlations are difficult to analyze. Since this lack of theory is the main motivation for our paper, we describe the existing known results in some detail, before moving to our contributions.

To begin with, there exist classic convergence results which hold deterministically for every order in which the individual functions are processed, and in particular when we process them by sampling without replacement (e.g. \cite{nedic2001convergence}). However, these can be exponentially worse than those obtained using random without-replacement sampling, and this gap is inevitable (see for instance \cite{recht2012beneath}). 

More recently, Recht and R\'{e} \cite{recht2012beneath} studied this problem, attempting to show that at least for least squares optimization, without-replacement sampling is always better (or at least not substantially worse) than with-replacement sampling on a given dataset. They showed this reduces to a fundamental conjecture about arithmetic-mean inequalities for matrices, and provided partial results in that direction, such as when the individual functions themselves are assumed to be generated i.i.d. from some distribution. However, the general question remains open. 

In a recent breakthrough, G\"{u}rb\"{u}zbalaban et al. \cite{gurbuzbalaban2015random} provided a new analysis of gradient descent algorithms for solving \eqref{eq:obj} based on random reshuffling: Each epoch, the algorithm draws a new permutation on $\{1,\ldots,m\}$ uniformly at random, and processes the individual functions in that order. Under smoothness and strong convexity assumptions, the authors obtain convergence guarantees of essentially $\Ocal(1/k^2)$ after $k$ epochs, vs. $\Ocal(1/k)$ using with-replacement sampling (with the $\Ocal(\cdot)$ notation including certain dependencies on the problem parameters and data size). Thus, without-replacement sampling is shown to be strictly better than with-replacement sampling, after sufficiently many passes over the data. However, this leaves open the question of why without-replacement sampling works well after a few -- or even just one -- passes over the data. Indeed, this is often the regime at which stochastic gradient methods are most useful, do not require repeated data reshuffling, and their good convergence properties are well-understood in the with-replacement case.

\subsection{Our Results}

In this paper, we provide convergence guarantees for stochastic gradient methods, under several scenarios, in the natural regime where the number of passes over the data is small, and in particular that no data reshuffling is necessary. We emphasize that our goal here will be more modest than those of \cite{recht2012beneath,gurbuzbalaban2015random}: Rather than show superiority of without-replacement sampling, we only show that it will not be significantly worse (in a worst-case sense) than with-replacement sampling. Nevertheless, such guarantees are novel, and still justify the use of with-replacement sampling, especially in situations where it is advantageous due to data access constraints or other reasons. Moreover, these results have a useful application in the context of distributed learning and optimization, as we will shortly describe.

Our main contributions can be summarized as follows:
\begin{itemize}
	\item For convex functions on some convex domain $\Wcal$, we consider algorithms which perform a single pass over a random permutation of $m$ individual functions, and show that their suboptimality can be characterized by a combination of two quantities, each from a different field: First, the \emph{regret} which the algorithm can attain in the setting of \emph{adversarial online convex optimization} \cite{cesa2006prediction,shalev2011online,hazan2015introduction}, and second, the \emph{transductive Rademacher complexity} of $\Wcal$ with respect to the individual functions, a notion stemming from transductive learning theory \cite{vapnik1998statistical,elyanivpech09}.
	\item As a concrete application of the above, we show that if each function $f_i(\cdot)$ corresponds to a convex Lipschitz loss of a linear predictor, and the algorithm belongs to the class of algorithms which in the online setting attain $\Ocal(\sqrt{T})$ regret on $T$ such functions (which includes, for example, stochastic gradient descent), then the suboptimality using without-replacement sampling, after processing $T$ functions, is $\Ocal(1/\sqrt{T})$. Up to constants, the guarantee is the same as that obtained using with-replacement sampling.
	\item We turn to consider more specifically the stochastic gradient descent algorithm, and show that if the objective function $F(\cdot)$ is $\lambda$-strongly convex, and the functions $f_i(\cdot)$ are also smooth, then the suboptimality bound becomes $\Ocal(1/\lambda T)$, which matches the with-replacement guarantees (although with replacement, smoothness is not needed, and the dependence on some parameters hidden in the $\Ocal(\cdot)$ is somewhat better). 
	\item In recent years, a new set of fast stochastic algorithms to solve \eqref{eq:obj} has emerged, such as SAG \cite{roux2012stochastic}, SDCA (as analyzed in \cite{shalev2013stochastic,shalev2016accelerated}, SVRG \cite{johnson2013accelerating}, SAGA \cite{defazio2014saga}, Finito  \cite{defazio2014finito}, S2GD \cite{konevcny2013semi} and others. These algorithms are characterized by cheap stochastic iterations, involving computations of individual function gradients, yet unlike traditional stochastic algorithms, enjoy a linear convergence rate (runtime scaling logarithmically with the required accuracy). To the best of our knowledge, all existing analyses require sampling with replacement. We consider a representative algorithm from this set, namely SVRG, and the problem of regularized least squares, and show that similar guarantees can be obtained using without-replacement sampling. This result has a potentially interesting implication: Under the mild assumption that the problem's condition number is smaller than the data size, we get that SVRG can converge to an arbitrarily accurate solution (even up to machine precision), \emph{without} the need to reshuffle the data -- only a single shuffle at the beginning suffices. Thus, at least for this problem, we can obatin fast and high-quality solutions even if random data access is expensive. 
	\item A further application of the SVRG result is in the context of distributed learning: By simulating without-replacement SVRG on data randomly partitioned between several machines, we get a nearly-optimal algorithm for regularized least squares, in terms of communication and computational complexity, as long as the condition number is smaller than the data size per machine (up to logarithmic factors). This builds on the work of Lee et al. \cite{lee2015distributed}, who were the first to recognize the applicability of SVRG to distributed  optimization. However, their results relied on with-replacement sampling, and are applicable only for much smaller condition numbers. 
\end{itemize}

We note that our focus is on scenarios where no reshufflings are necessary. In particular, the $\Ocal(1/\sqrt{T})$ and $\Ocal(1/\lambda T)$ bounds apply for all $T\in \{1,2,\ldots,m\}$, namely up to one full pass over a random permutation of the entire data. However, our techniques are also applicable to a constant $(>1)$ number of passes, by randomly reshuffling the data after every pass. In a similar vein, our SVRG result can be readily extended to a situation where each epoch of the algorithm is done on an independent permutation of the data. We leave a full treatment of this to future work. 

The paper is organized as follows. In \secref{sec:prelim}, we introduce the basic notation and definitions used throughout the paper. In \secref{sec:convex}, we discuss the case of convex and Lipschitz functions. In \secref{sec:strconvex}, we consider strongly convex functions, and in \secref{sec:svrg}, we provide the analysis for the SVRG algorithm. We provide full proofs in \secref{sec:proofs}, and conclude with a summary and open problems in \secref{sec:summary}. Some technical proofs and results are relegated to appendices.

\section{Preliminaries and Notation}\label{sec:prelim}

We use bold-face symbols to denote vectors. Given a vector $\bw$, $w_i$ denotes it's $i$-th coordinate. We utilize the standard $\Ocal(\cdot),\Theta(\cdot),\Omega(\cdot)$ notation to hide constants, and $\tilde{\Ocal},\tilde{\Theta}(\cdot),\tilde{\Omega}(\cdot)$ to hide constants and logarithmic factors.

Given convex functions $f_1(\cdot),f_2(\cdot),\ldots,f_m(\cdot)$ from $\reals^d$ to $\reals$, we define our objective function $F:\reals^d\rightarrow\reals$ as 
\[
F(\bw)=\frac{1}{m}\sum_{i=1}^{m}f_i(\bw),
\]
with some fixed optimum $\bw^*\in \arg\min_{\bw\in\Wcal}F(\bw)$. In machine learning applications, each individual $f_i(\cdot)$ usually corresponds to a loss with respect to a data point, hence will use the terms ``individual function'', ``loss function'' and ``data point'' interchangeably throughout the paper.

We let $\sigma$ be a permutation over $\{1,\ldots,m\}$ chosen uniformly at random. In much of the paper, we consider methods which draw loss functions without replacement according to that permutation (that is, $f_{\sigma(1)}(\cdot),f_{\sigma(2)}(\cdot),f_{\sigma(3)}(\cdot),\ldots$). We will use the shorthand notation
\[
F_{1:t-1}(\cdot) = \frac{1}{t-1}\sum_{i=1}^{t-1}f_{\sigma(i)}(\cdot)~~,~~
F_{t:m}(\cdot) = \frac{1}{m-t+1}\sum_{i=t}^{m}f_{\sigma(i)}(\cdot)
\]
to denote the average loss with respect to the first $t-1$ and last $m-t+1$ loss functions respectively, as ordered by the permutation (intuitively, the losses in $F_{1:t-1}(\cdot)$ are those already observed by the algorithm at the beginning of iteration $t$, whereas the losses in $F_{t:m}(\cdot)$ are those not yet observed). We use the convention that $F_{1:1}(\cdot) \equiv 0$, and the same goes for other expressions of the form $\frac{1}{t-1}\sum_{i=1}^{t-1}\cdots$ throughout the paper, when $t=1$. We also define quantities such as $\nabla F_{1:t-1}(\cdot)$ and $\nabla F_{t:m}(\cdot)$ similarly. 

A function $f:\reals^d\rightarrow \reals$ is \emph{$\lambda$-strongly convex}, if for any $\bw,\bw'$, 
\[
f(\bw')~\geq~ f(\bw)+\inner{\bg,\bw'-\bw}+\frac{\lambda}{2}\norm{\bw'-\bw}^2,
\]
where $\bg$ is any (sub)-gradient of $f$ at $\bw$. Note that any convex function is $0$-strongly convex. We also say a  function $f$ is \emph{$\mu$-smooth} if for any $\bw,\bw'$, 
\[
f(\bw')~\leq~ f(\bw)+\inner{\bg,\bw'-\bw}+\frac{\mu}{2}\norm{\bw'-\bw}^2.
\]
$\mu$-smoothness also implies that the function $f$ is differentiable, and its gradient is $\mu$-Lipschitz. Based on these properties, it is easy to verify that if $\bw^*\in \arg\min f(\bw)$, and $f$ is $\lambda$-strongly convex and $\mu$-smooth, then 
\[
\frac{\lambda}{2}\norm{\bw-\bw^*}^2~\leq~ f(\bw)-f(\bw^*)~\leq~ \frac{\mu}{2}\norm{\bw-\bw^*}^2.
\]

We will also require the notion of trandsuctive Rademacher complexity, as developed by El-Yaniv and Pechyony \cite[Definition 1]{elyanivpech09}, with a slightly different notation adapted to our setting:

\begin{definition}
	Let $\Vcal$ be a set of vectors $\bv=(v_1,\ldots,v_m)$ in $\reals^m$. Let $s,u$ be positive integers such that $s+u=m$, and denote $p:=\frac{su}{(s+u)^2}\in (0,1/2)$. We define
	the transductive Rademacher Complexity $\Rcal_{s,u}(\Vcal)$ as
	\[
	\Rcal_{s,u}(\Vcal) = \left(\frac{1}{s}+\frac{1}{u}\right)\cdot\E_{r_1,\ldots,r_m}\left[\sup_{\bv\in\Vcal}\sum_{i=1}^{m}r_i v_i  \right],
	\]
	where $r_1,\ldots,r_m$ are i.i.d. random variables such that 
	\[
	r_i = \begin{cases} 1 & \text{w.p.}~p \\ -1 & \text{w.p.}~p \\ 0 & \text{w.p.}~1-2p\end{cases}
	\]
\end{definition}
This quantity is an important parameter is studying the richness of the set $\Vcal$,  and will prove crucial in providing some of the convergence results presented later on. Note that it differs slightly from standard Rademacher complexity, which is used in the theory of learning from i.i.d. data (e.g. \cite{bartlett2003rademacher,mohri2012foundations,shalev2014understanding}, where the Rademacher variables $r_i$ only take $-1,+1$ values, and $(1/s+1/u)$ is replaced by $1/m$).

\section{Convex Lipschitz Functions}\label{sec:convex}

We begin by considering loss functions $f_1(\cdot),f_2(\cdot),\ldots,f_m(\cdot)$ which are convex and $L$-Lipschitz over some convex domain $\Wcal$. We assume the algorithm sequentially goes over some permuted ordering of the losses, and before processing the $t$-th loss function, produces an iterate $\bw_t\in \Wcal$. Moreover, we assume that the algorithm has a regret bound in the adversarial online setting, namely that for \emph{any} sequence of $T$ convex Lipschitz losses $f_1(\cdot),\ldots,f_T(\cdot)$, and any $\bw\in\Wcal$,
\[
\sum_{t=1}^{T}f_{t}(\bw_t)-\sum_{t=1}^{T}f_{t}(\bw)\leq R_T
\]
for some $R_T$ scaling sub-linearly in $T$\footnote{For simplicity, we assume the algorithm is deterministic given $f_1,\ldots,f_m$, but all results in this section also hold for randomized algorithms (in expectation over the algorithm's randomness), assuming the \emph{expected} regret of the algorithm w.r.t. any $\bw\in\Wcal$ is at most $R_T$.}. For example, online gradient descent (which is equivalent to stochastic gradient descent when the losses are i.i.d.), with a suitable step size, satisfies $R_T=\Ocal(BL\sqrt{T})$, where $L$ is the Lipschitz parameter and $B$ upper bounds the norm of any vector in $\Wcal$ \cite{zinkevich2003online}. A similar regret bound can also be shown for other online algorithms (see \cite{hazan2015introduction,shalev2011online,xiao2010dual}).

Since the ideas used for analyzing this setting will also be used in the more complicated results which follow, we provide the analysis in some detail. We first have the following straightforward theorem, which upper bounds the expected suboptimality in terms of regret and the expected difference between the average loss on prefixes and suffixes of the data.

\begin{theorem}\label{thm:regret}
	Suppose the algorithm has a regret bound $R_T$, and sequentially processes   $f_{\sigma(1)}(\cdot),\ldots, f_{\sigma(T)}(\cdot)$ where $\sigma$ is a random permutation on $\{1,\ldots,m\}$. Then in expectation over $\sigma$,
\[
\E\left[\frac{1}{T}\sum_{t=1}^{T}F(\bw_t)-F(\bw^*)\right] ~\leq~\frac{R_T}{T}+\frac{1}{mT}\sum_{t=2}^{T}(t-1)\cdot \E[F_{1:t-1}(\bw_t)-F_{t:m}(\bw_t)].
\]
\end{theorem}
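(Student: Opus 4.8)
The plan is to combine a conditioning argument with the definition of regret. I would start from the regret guarantee applied to the realized sequence $f_{\sigma(1)},\ldots,f_{\sigma(T)}$ with comparator $\bw^*$, which gives $\sum_{t=1}^{T}f_{\sigma(t)}(\bw_t)-\sum_{t=1}^{T}f_{\sigma(t)}(\bw^*)\le R_T$ deterministically for every realization of $\sigma$, and then take expectations. Since $\sigma(t)$ is marginally uniform on $\{1,\ldots,m\}$ and $\bw^*$ is fixed, $\E[f_{\sigma(t)}(\bw^*)]=F(\bw^*)$, so the comparator sum contributes exactly $T\,F(\bw^*)$.

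The crux is to rewrite $\E[f_{\sigma(t)}(\bw_t)]$. Here I would condition on the ordered tuple $(\sigma(1),\ldots,\sigma(t-1))$: because the algorithm is deterministic given the losses it has already processed, $\bw_t$ is a fixed vector under this conditioning, while $\sigma(t)$ is uniformly distributed over the $m-t+1$ indices not appearing in the tuple. Hence $\E[f_{\sigma(t)}(\bw_t)\mid \sigma(1),\ldots,\sigma(t-1)]$ equals the average of $f_i(\bw_t)$ over exactly those unseen indices, which is $F_{t:m}(\bw_t)$; note this quantity depends on $\sigma$ only through the unordered set of unseen indices, so it is measurable with respect to the conditioning and may be pulled out. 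Taking expectations, $\E[f_{\sigma(t)}(\bw_t)]=\E[F_{t:m}(\bw_t)]$.

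Next I would use the elementary identity $F=\frac{t-1}{m}F_{1:t-1}+\frac{m-t+1}{m}F_{t:m}$, valid for every argument and every $\sigma$, and a one-line rearrangement to obtain $F_{t:m}(\bw_t)=F(\bw_t)-\frac{t-1}{m}\bigl(F_{1:t-1}(\bw_t)-F_{t:m}(\bw_t)\bigr)$, where the convention $F_{1:0}\equiv 0$ makes the $t=1$ term vanish. Summing this over $t=1,\ldots,T$, substituting into the expected regret inequality $\sum_{t=1}^{T}\E[f_{\sigma(t)}(\bw_t)]\le R_T+T\,F(\bw^*)$, rearranging, and dividing by $T$ yields precisely the claimed bound.

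I do not expect a serious obstacle; the only point I would state carefully is the measurability claim in the conditioning step, namely that $F_{t:m}(\bw_t)$ is genuinely determined by $(\sigma(1),\ldots,\sigma(t-1))$, which is what legitimizes removing it from the conditional expectation. For randomized algorithms one additionally conditions on, and then takes expectation over, the algorithm's internal randomness, invoking the assumed bound on the \emph{expected} regret; everything else goes through verbatim.
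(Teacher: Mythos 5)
Your proposal is correct and follows essentially the same route as the paper: your conditioning step (replacing $\E[f_{\sigma(t)}(\bw_t)]$ by $\E[F_{t:m}(\bw_t)]$) together with the identity $F=\frac{t-1}{m}F_{1:t-1}+\frac{m-t+1}{m}F_{t:m}$ is precisely the content of the paper's \lemref{lem:key}, which the paper then combines with the regret bound and $\E[f_{\sigma(t)}(\bw^*)]=F(\bw^*)$ exactly as you do. The only difference is organizational (you re-derive the lemma inline and apply the regret bound before rather than after the decomposition), so no gap remains.
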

The left hand side in the inequality above can be interpreted as an expected bound on $F(\bw_t)-F(\bw^*)$, where $t$ is drawn uniformly at random from $1,2,\ldots,T$. Alternatively, by Jensen's inequality and the fact that $F(\cdot)$ is convex, the same bound also applies on $\E[F(\bar{\bw}_T)-F(\bw^*)]$, where $\bar{\bw}_T=\frac{1}{T}\sum_{t=1}^{T}\bw_t$.

The proof of the theorem relies on the following simple but key lemma, which expresses the expected difference between with-replacement and without-replacement sampling in an alternative form, similar to \thmref{thm:regret} and one which lends itself to tools and ideas from transductive learning theory. This lemma will be used in proving all our main results.

\begin{lemma}\label{lem:key}
	Let $\sigma$ be a permutation over $\{1,\ldots,m\}$ chosen uniformly at random. Let $s_1,\ldots,s_m\in \reals$ be random variables which conditioned on $\sigma(1),\ldots,\sigma(t-1)$, are independent of $\sigma(t),\ldots,\sigma(m)$. Then
	\[
	\E\left[\frac{1}{m}\sum_{i=1}^{m}s_i-s_{\sigma(t)}\right] = \frac{t-1}{m}\cdot \E\left[s_{1:t-1}-s_{t:m}\right]
	\]
	for $t>1$, and for $t=1$ we have $\E\left[\frac{1}{m}\sum_{i=1}^{m}s_i-s_{\sigma(t)}\right]=0$.
\end{lemma}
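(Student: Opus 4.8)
The plan is to prove the identity by a conditioning-and-averaging argument: condition on enough information that every term except $s_{\sigma(t)}$ becomes deterministic, evaluate the conditional expectation, and then take an outer expectation. Fix $t>1$, and let $\Gcal$ be the $\sigma$-algebra generated jointly by $\sigma(1),\ldots,\sigma(t-1)$ and by the random variables $s_1,\ldots,s_m$. Write $S=\{1,\ldots,m\}\setminus\{\sigma(1),\ldots,\sigma(t-1)\}$ for the set of indices occupying positions $t,\ldots,m$; note $|S|=m-t+1$. Intuitively, the point is the familiar fact that for sampling without replacement the expected value of the next item equals the average of the items not yet drawn.

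The first step is to record two consequences of the independence hypothesis. Conditioned on $\sigma(1),\ldots,\sigma(t-1)$ alone, the tuple $(\sigma(t),\ldots,\sigma(m))$ is a uniformly random ordering of $S$; since $(s_1,\ldots,s_m)$ is, conditionally on $\sigma(1),\ldots,\sigma(t-1)$, independent of $(\sigma(t),\ldots,\sigma(m))$, this distribution is unchanged by the extra conditioning, so given $\Gcal$ the index $\sigma(t)$ is uniform on $S$ and $\E[s_{\sigma(t)}\mid\Gcal]=\frac{1}{m-t+1}\sum_{j\in S}s_j$. Moreover the \emph{set} $\{\sigma(t),\ldots,\sigma(m)\}$ equals $S$ regardless of the ordering, so $s_{t:m}=\frac{1}{m-t+1}\sum_{i=t}^{m}s_{\sigma(i)}=\frac{1}{m-t+1}\sum_{j\in S}s_j$ is $\Gcal$-measurable, as is $s_{1:t-1}=\frac{1}{t-1}\sum_{k=1}^{t-1}s_{\sigma(k)}$. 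In particular $\E[s_{\sigma(t)}\mid\Gcal]=s_{t:m}$.

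Next I would use the elementary decomposition $\sum_{i=1}^{m}s_i=\sum_{k=1}^{t-1}s_{\sigma(k)}+\sum_{j\in S}s_j=(t-1)\,s_{1:t-1}+(m-t+1)\,s_{t:m}$, which gives $\frac{1}{m}\sum_{i=1}^{m}s_i=\frac{t-1}{m}s_{1:t-1}+\frac{m-t+1}{m}s_{t:m}$. Subtracting $\E[s_{\sigma(t)}\mid\Gcal]=s_{t:m}$ and simplifying the coefficient of $s_{t:m}$ (using $\frac{m-t+1}{m}-1=-\frac{t-1}{m}$) yields $\E\!\left[\frac{1}{m}\sum_{i=1}^{m}s_i-s_{\sigma(t)}\;\middle|\;\Gcal\right]=\frac{t-1}{m}\big(s_{1:t-1}-s_{t:m}\big)$. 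Taking the outer expectation over $\Gcal$ and invoking the tower rule proves the case $t>1$. For $t=1$ the conditioning is vacuous: the hypothesis reduces to $(s_1,\ldots,s_m)$ being independent of the uniform index $\sigma(1)$, hence $\E[s_{\sigma(1)}]=\E\!\left[\frac{1}{m}\sum_{i=1}^{m}s_i\right]$ and the difference vanishes (consistent with the displayed formula, whose prefactor $t-1$ is then zero).

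The only step I expect to require some care is the claim that $\sigma(t)$ remains conditionally uniform on the unseen index set $S$ \emph{after} also conditioning on $s_1,\ldots,s_m$: this is precisely where the hypothesis ``$s_1,\ldots,s_m$ are conditionally independent of $\sigma(t),\ldots,\sigma(m)$ given $\sigma(1),\ldots,\sigma(t-1)$'' is used, and it is worth stating explicitly that conditioning further on a variable that is (conditionally) independent of $\sigma(t)$ does not alter $\sigma(t)$'s conditional law. Everything else is unpacking the definitions of $s_{1:t-1}$ and $s_{t:m}$ together with the one-line arithmetic simplification above.
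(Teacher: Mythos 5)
Your proof is correct and follows essentially the same route as the paper: condition on the observed prefix, use the fact that $\sigma(t)$ is uniform on the unseen indices together with the conditional-independence hypothesis to replace $s_{\sigma(t)}$ (in expectation) by the suffix average $s_{t:m}$, and then perform the same coefficient arithmetic $\frac{m-t+1}{m}-1=-\frac{t-1}{m}$ that the paper carries out by splitting $\frac{1}{m}\sum_i s_{\sigma(i)}$ into prefix and suffix parts. The only difference is cosmetic: you condition additionally on $s_1,\ldots,s_m$ and invoke the tower rule explicitly, whereas the paper conditions only on $\sigma(1),\ldots,\sigma(t-1)$ and keeps the $s$-randomness inside the expectation.
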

The proof of the Lemma appears in \subsecref{subsec:prooflemkey}. We now turn to prove \thmref{thm:regret}:

\begin{proof}[Proof of \thmref{thm:regret}]
	Adding and subtracting terms, and using the facts that $\sigma$ is a permutation chosen uniformly at random, and $\bw^*$ is fixed, 
	\begin{align*}
	\E\left[\frac{1}{T}\sum_{t=1}^{T}F(\bw_t)-F(\bw^*)\right] &~=~
	\E\left[\frac{1}{T}\sum_{t=1}^{T}\left(f_{\sigma(t)}(\bw_t)-F(\bw^*)\right)\right]+\E\left[\frac{1}{T}\sum_{t=1}^{T}\left(F(\bw_t)-f_{\sigma(t)}(\bw_t)\right)\right]\\
	&~=~
	\E\left[\frac{1}{T}\sum_{t=1}^{T}\left(f_{\sigma(t)}(\bw_t)-f_{\sigma(t)}(\bw^*)\right)\right]+\E\left[\frac{1}{T}\sum_{t=1}^{T}\left(F(\bw_t)-f_{\sigma(t)}(\bw_t)\right)\right]\\ 
	\end{align*}
	Applying the regret bound assumption on the sequence of losses $f_{\sigma(1)}(\cdot),\ldots,f_{\sigma(T)}(\cdot)$, the above is at most
	\[
	\frac{R_T}{T}+\frac{1}{T}\sum_{t=1}^{T}\E\left[F(\bw_t)-f_{\sigma(t)}(\bw_t)\right].
	\]
	Since $\bw_t$ (as a random variable over the permutation $\sigma$ of the data) depends only on $\sigma(1),\ldots,\sigma(t-1)$, we can use \lemref{lem:key} (where $s_i=f_i(\bw_t)$, and noting that the expectation above is $0$ when $t=1$), and get that the above equals
	\[
	\frac{R_T}{T}+\frac{1}{mT}\sum_{t=2}^{T}(t-1)\cdot \E[F_{1:t-1}(\bw_t)-F_{t:m}(\bw_t)].
	\]
\end{proof}

Having reduced the expected suboptimality to the expected difference $\E[F_{1:t-1}(\bw_t)-F_{t:m}(\bw_t)]$, the next step is to upper bound it with 
$\E[\sup_{\bw\in \Wcal} \left(F_{1:t-1}(\bw)-F_{t:m}(\bw)\right)]$: Namely, having split our loss functions at random to two groups of size $t-1$ and $m-t+1$, how large can be the difference between the average loss of any $\bw$ on the two groups? Such uniform convergence bounds are exactly the type studied in transductive learning theory, where a fixed dataset is randomly split to a training set and a test set, and we consider the generalization performance of learning algorithms ran on the training set. Such results can be provided in terms of the transductive Rademacher complexity of $\Wcal$, and combined with \thmref{thm:regret}, lead to the following bound in our setting:
\begin{theorem}\label{thm:sqrtt}
		Suppose that each $\bw_t$ is chosen from a fixed domain $\Wcal$,  that the algorithm enjoys a regret bound $R_T$, and that $\sup_{i,\bw\in\Wcal}|f_i(\bw)|\leq B$. Then in expectation over the random permutation $\sigma$,
		\[
		\E\left[\frac{1}{T}\sum_{t=1}^{T}F(\bw_t)-F(\bw^*)\right]~\leq~
		\frac{R_T}{T}+\frac{1}{mT}\sum_{t=2}^{T}(t-1)\Rcal_{t-1:m-t+1}(\Vcal)+\frac{24B}{\sqrt{m}},		
		\]
		where $\Vcal=\left\{(f_1(\bw),\ldots,f_m(\bw)~|~\bw\in\Wcal\right\}$.
\end{theorem}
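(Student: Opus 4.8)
The plan is to continue from \thmref{thm:regret}, whose right-hand side contains the terms $\E[F_{1:t-1}(\bw_t)-F_{t:m}(\bw_t)]$, and bound each of them. The key point is that $\bw_t$, although random through $\sigma$, is a deterministic function of $f_{\sigma(1)},\ldots,f_{\sigma(t-1)}$ alone, and in particular $\bw_t\in\Wcal$ for every realization of $\sigma$. Hence pointwise $F_{1:t-1}(\bw_t)-F_{t:m}(\bw_t)\le \sup_{\bw\in\Wcal}\big(F_{1:t-1}(\bw)-F_{t:m}(\bw)\big)$, so
\[
\E\left[F_{1:t-1}(\bw_t)-F_{t:m}(\bw_t)\right]~\le~\E\left[\sup_{\bw\in\Wcal}\left(F_{1:t-1}(\bw)-F_{t:m}(\bw)\right)\right].
\]
Because $F_{1:t-1}$ and $F_{t:m}$ are plain averages, this last expectation depends on $\sigma$ only through the uniformly random partition of $\{1,\ldots,m\}$ into two blocks of sizes $s=t-1$ and $u=m-t+1$; rewriting each $\bw$ through its image vector in $\Vcal$, it is precisely the expected train/test discrepancy studied in transductive learning theory for the pool $\Vcal$.

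The second step is to upper bound this discrepancy by the transductive Rademacher complexity, invoking the bound of El-Yaniv and Pechyony \cite{elyanivpech09}: for a uniformly random split of a pool of $m$ vectors with coordinates in $[-B,B]$ (which holds here since $\sup_{i,\bw\in\Wcal}|f_i(\bw)|\le B$) into blocks of sizes $s$ and $u$,
\[
\E\left[\sup_{\bv\in\Vcal}\left(\frac{1}{s}\sum_{i\in S}v_i-\frac{1}{u}\sum_{i\in U}v_i\right)\right]~\le~\Rcal_{s,u}(\Vcal)+\frac{cB}{\sqrt{\min\{s,u\}}}
\]
for an absolute constant $c$ — the choice $p=su/m^2$ and the $\{-1,0,1\}$-valued variables in Definition~1 are exactly what makes the symmetrization step lose no multiplicative factor, leaving only the additive concentration correction. (El-Yaniv and Pechyony state a high-probability version; the in-expectation form follows by integrating over the confidence parameter.) Substituting $s=t-1$, $u=m-t+1$ back into \thmref{thm:regret} produces the first two terms of the claimed inequality, plus the error $\frac{c}{mT}\sum_{t=2}^{T}\frac{(t-1)B}{\sqrt{\min\{t-1,m-t+1\}}}$.

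It then remains to check that this error is at most $24B/\sqrt{m}$ for every $T\in\{1,\ldots,m\}$, which I would do by splitting the sum at $t\approx m/2$. For $t-1\le m-t+1$ the summand equals $\frac{cB\sqrt{t-1}}{mT}$, and $\sum_{t\le T}\sqrt{t-1}=O(T^{3/2})$ bounds this part by $O(B\sqrt{T}/m)=O(B/\sqrt{m})$ using $T\le m$. For $t-1> m-t+1$ one uses $t-1\le m$ together with $\sum 1/\sqrt{m-t+1}=O(\sqrt{m})$; since this regime is nonempty only when $T> m/2$, dividing by $T$ again gives $O(B/\sqrt{m})$. Tracking the constant from \cite{elyanivpech09} through these two estimates yields the factor $24$.

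The step I expect to be the main obstacle is the second one: importing the transductive Rademacher bound with the nonstandard normalization $(1/s+1/u)$ and the $\{-1,0,1\}$-valued Rademacher variables so that it matches $\E[\sup_{\bw}(F_{1:t-1}(\bw)-F_{t:m}(\bw))]$ with coefficient exactly one on $\Rcal_{s,u}(\Vcal)$, and extracting a clean in-expectation statement with an explicit constant rather than the usual high-probability form. The final summation is routine, but it does genuinely need the case split above to cover the whole range $T\le m$ at once.
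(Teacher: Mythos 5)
Your proposal is correct and follows essentially the same route as the paper: pointwise domination of $F_{1:t-1}(\bw_t)-F_{t:m}(\bw_t)$ by the supremum over $\Wcal$, then the El-Yaniv--Pechyony transductive bound converted to expectation by integrating out the confidence parameter (the paper packages this as \corref{cor:rad} via \thmref{thm:rad} and \lemref{lem:expprob}), and finally a summation estimate giving the $O(B/\sqrt{m})$ additive term. The only cosmetic difference is in that last step, where the paper bounds $\frac{1}{mT}\sum_{t=2}^{T}(t-1)\bigl(\frac{1}{\sqrt{t-1}}+\frac{1}{\sqrt{m-t+1}}\bigr)\leq \frac{2}{\sqrt{m}}$ by comparison with integrals (\lemref{lem:tsqrtbound}) rather than your split at $t\approx m/2$; both work, and keeping the two-term form $\frac{1}{\sqrt{t-1}}+\frac{1}{\sqrt{m-t+1}}$ instead of $\frac{1}{\sqrt{\min\{t-1,m-t+1\}}}$ is what makes the constant come out as exactly $24$.
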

Thus, we obtained a generic bound which depends on the online learning characteristics of the algorithm, as well as the statistical  learning properties of the class $\Wcal$ on the loss functions. The proof (as the proofs of all our results from now on) appears in \secref{sec:proofs}.

We now instantiate the theorem to the prototypical case of bounded-norm linear predictors, where the loss is some convex and Lipschitz function of the prediction $\inner{\bw,\bx}$ of a predictor $\bw$ on an instance $\bx$, possibly with some regularization:

\begin{corollary}\label{cor:sqrtt}
	Under the conditions of \thmref{thm:sqrtt}, suppose that $\Wcal\subseteq \{\bw:\norm{\bw}\leq \bar{B}\}$, and each loss function $f_i$ has the form $\ell_i(\inner{\bw,\bx_i})+r(\bw)$ for some $L$-Lipschitz $\ell_i$, $\norm{\bx_i}\leq 1$, and a fixed function $r$. Then
	\[
	\E\left[\frac{1}{T}\sum_{t=1}^{T}F(\bw_t)-F(\bw^*)\right]~\leq~\frac{R_T}{T}+\frac{2\left(12+\sqrt{2}\right)\bar{B}L}{\sqrt{m}}.
	\]
\end{corollary}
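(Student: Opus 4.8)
The plan is to invoke \thmref{thm:regret} and then sharpen the bound on its prefix-minus-suffix terms using the special form $f_i(\bw)=\ell_i(\inner{\bw,\bx_i})+r(\bw)$, so that the final estimate depends only on $\bar B$ and $L$ and not on the a priori magnitude $B=\sup_{i,\bw}|f_i(\bw)|$ appearing in \thmref{thm:sqrtt}. Three ingredients are needed: (i) a reduction stripping off the regularizer $r$ and the offsets $\ell_i(0)$; (ii) the transductive uniform-convergence bound underlying the proof of \thmref{thm:sqrtt}, re-used for the reduced losses, which are bounded by $L\bar B$; and (iii) an explicit estimate of the transductive Rademacher complexity of norm-bounded linear predictors composed with Lipschitz losses.

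For the reduction I set $\hat\ell_i(z)=\ell_i(z)-\ell_i(0)$, so each $\hat\ell_i$ is $L$-Lipschitz with $\hat\ell_i(0)=0$, and $f_i(\bw)=\hat\ell_i(\inner{\bw,\bx_i})+\ell_i(0)+r(\bw)$. In $F_{1:t-1}(\bw)-F_{t:m}(\bw)$ the summand $r(\bw)$ cancels (it contributes exactly $r(\bw)$ to each of the two averages), while the summands $\ell_i(0)$ contribute a quantity $c_t(\sigma)$ independent of $\bw$; \lemref{lem:key} applied with $s_i=\ell_i(0)$ gives $\E_\sigma[c_t(\sigma)]=0$. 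Hence, for $t\geq 2$,
\[
\E\left[F_{1:t-1}(\bw_t)-F_{t:m}(\bw_t)\right]=\E\left[\hat F_{1:t-1}(\bw_t)-\hat F_{t:m}(\bw_t)\right]\leq \E\left[\sup_{\bw\in\Wcal}\left(\hat F_{1:t-1}(\bw)-\hat F_{t:m}(\bw)\right)\right],
\]
where $\hat F_{1:t-1},\hat F_{t:m}$ denote prefix/suffix averages of $\hat f_i(\bw):=\hat\ell_i(\inner{\bw,\bx_i})$, and the last inequality is pointwise in $\sigma$. Note $|\hat f_i(\bw)|=|\hat\ell_i(\inner{\bw,\bx_i})|\leq L\norm{\bw}\norm{\bx_i}\leq L\bar B$ for $\bw\in\Wcal$.

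Next, the transductive uniform-convergence bound from the proof of \thmref{thm:sqrtt} controls $\E[\sup_\bw(\hat F_{1:t-1}(\bw)-\hat F_{t:m}(\bw))]$ by $\Rcal_{t-1:m-t+1}(\hat\Vcal)$ plus a concentration term proportional to $\sup_{i,\bw}|\hat f_i(\bw)|\le L\bar B$, where $\hat\Vcal=\{(\hat f_1(\bw),\ldots,\hat f_m(\bw))\mid\bw\in\Wcal\}$; combined with \thmref{thm:regret} and the reduction above, this gives $\E[\tfrac1T\sum_{t=1}^T F(\bw_t)-F(\bw^*)]\le \tfrac{R_T}{T}+\tfrac1{mT}\sum_{t=2}^T(t-1)\Rcal_{t-1:m-t+1}(\hat\Vcal)+\tfrac{24\bar BL}{\sqrt m}$. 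It remains to bound the middle term. Using the contraction property of transductive Rademacher complexity (valid for the $\{-1,0,1\}$-valued variables by conditioning on which $r_i$ are nonzero and invoking the standard contraction inequality for $\pm1$ variables, since $\hat\ell_i(0)=0$), $\Rcal_{s,u}(\hat\Vcal)\leq L\,\Rcal_{s,u}(\Vcal_{\mathrm{lin}})$ with $\Vcal_{\mathrm{lin}}=\{(\inner{\bw,\bx_1},\ldots,\inner{\bw,\bx_m})\mid\bw\in\Wcal\}$. For $\Vcal_{\mathrm{lin}}$ we have $\sup_{\bw\in\Wcal}\sum_i r_i\inner{\bw,\bx_i}=\sup_{\bw\in\Wcal}\inner{\bw,\sum_i r_i\bx_i}\leq\bar B\norm{\sum_i r_i\bx_i}$, and since $\E[r_ir_j]$ vanishes for $i\neq j$ and equals $2p$ otherwise, Jensen yields $\E_r\norm{\sum_i r_i\bx_i}\leq\sqrt{2p\sum_i\norm{\bx_i}^2}\leq\sqrt{2pm}$; with $p=su/m^2$ this gives $\Rcal_{s,u}(\Vcal_{\mathrm{lin}})\leq(\tfrac1s+\tfrac1u)\bar B\sqrt{2su/m}=\bar B\sqrt{2m/(su)}$. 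Plugging in $s=t-1$, $u=m-t+1$ and simplifying, $\tfrac1{mT}\sum_{t=2}^T(t-1)\Rcal_{t-1:m-t+1}(\hat\Vcal)\le\tfrac{\sqrt2\,L\bar B}{\sqrt m\,T}\sum_{k=1}^{T-1}\sqrt{k/(m-k)}$, and the elementary estimate $\sum_{k=1}^{T-1}\sqrt{k/(m-k)}\leq 2(T-1)$ (valid for all $1\leq T\leq m$) bounds this by $\tfrac{2\sqrt2\,\bar BL}{\sqrt m}$. Summing the two contributions gives the claimed $\tfrac{R_T}{T}+\tfrac{2(12+\sqrt2)\bar BL}{\sqrt m}$.

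I expect the main obstacle to be ingredient (ii) — legitimately trading the a priori bound $B$ for $L\bar B$: \thmref{thm:sqrtt} carries a term proportional to $\sup_{i,\bw}|f_i(\bw)|$, which here may be arbitrarily large (e.g.\ for a ridge penalty $r(\bw)=\tfrac{\lambda}{2}\norm{\bw}^2$ with large $\lambda$, or losses with large offsets $\ell_i(0)$), so the cancellation of $r$ in $F_{1:t-1}-F_{t:m}$ and the zero-mean behaviour of the $\ell_i(0)$-terms are doing essential work. The remaining pieces — the contraction step and the uniform-in-$T$ summation $\sum_{k=1}^{T-1}\sqrt{k/(m-k)}\leq 2(T-1)$, which needs a little care since the summand blows up as $t\to m$ but follows from $\sqrt{k/(m-k)}\leq\sqrt{T-1}/\sqrt{m-k}$, an integral comparison, and $\sqrt{m-1}\geq\sqrt{T-1}$ — are routine.
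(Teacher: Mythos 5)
Your proposal is correct and follows essentially the same route as the paper's proof: cancel the fixed regularizer $r$ and center the $\ell_i$ so the effective losses are bounded by $L\bar{B}$, then apply \lemref{lem:contraction} and \lemref{lem:linear} to the resulting class and sum over $t$ together with the concentration term from \thmref{thm:sqrtt}. The only cosmetic differences are that you justify the centering explicitly via \lemref{lem:key} (where the paper treats it as a without-loss-of-generality step) and replace \lemref{lem:tsqrtbound} for the Rademacher terms by your own elementary estimate $\sum_{k=1}^{T-1}\sqrt{k/(m-k)}\leq 2(T-1)$, which indeed holds and yields the same constant.
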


As discussed earlier, in the setting of Corollary \ref{cor:sqrtt}, typical regret bounds are on the order of $\Ocal(\bar{B}L\sqrt{T})$. Thus, the expected suboptimality is $\Ocal(\bar{B}L/\sqrt{T})$, all the way up to $T=m$ (i.e. a full pass over a random permutation of the data). Up to constants, this is the same as the suboptimality attained by $T$ iterations of with-replacement sampling, using stochastic gradient descent or similar algorithms \cite{shalev2014understanding}.

\section{Faster Convergence for Strongly Convex Functions}
\label{sec:strconvex}

We now consider more specifically the stochastic gradient descent algorithm on a convex domain $\Wcal$, which can be described as follows: We initialize at some $\bw_1\in \Wcal$, and perform the update steps
\[
\bw_{t+1}=\Pi_{\Wcal}(\bw_t-\eta_t \bg_t),
\]
where $\eta_t>0$ are fixed step sizes, $\Pi_{\Wcal}$ is projection on $\Wcal$, and $\bg_t$ is a subgradient of $f_{\sigma(t)}(\cdot)$ at $\bw_t$. Moreover, we assume the objective function $F(\cdot)$ is $\lambda$-strongly convex for some $\lambda>0$. In this scenario, using with-replacement sampling (i.e. $\bg_t$ is a subgradient of an independently drawn $f_i(\cdot)$), performing $T$ iterations as above and returning a randomly sampled iterate $\bw_t$ or their average results in expected suboptimality $\tilde{\Ocal}(G^2/\lambda T)$, where $G^2$ is an upper bound on the expected squared norm of $\bg_t$ \cite{rakhlin2011making,shalev2014understanding}. Here, we study a similar situation in the without-replacement case. 

In the result below, we will consider specifically the case where each $f_i(\bw)$ is a Lipschitz and smooth loss of a linear predictor $\bw$, possibly with some regularization. The smoothness assumption is needed to get a good bound on the transductive Rademacher complexity of quantities such as $\inner{\nabla f_i(\bw),\bw}$. However, the technique can be potentially applied to more general cases. 

\begin{theorem}\label{thm:1t}
	Suppose $\Wcal$ has diameter $B$, and that $F(\cdot)$ is $\lambda$-strongly convex on $\Wcal$. Assume that  $f_i(\bw)=\ell_i(\inner{\bw,\bx_i})+r(\bw)$ where $\norm{\bx_i}\leq 1$, $r(\cdot)$ is possibly some regularization term, and each $\ell_i$ is $L$-Lipschitz and $\mu$-smooth on $\{z:z=\inner{\bw,\bx},\bw\in\Wcal,\norm{\bx}\leq 1\}$. Furthermore, suppose $\sup_{\bw\in\Wcal}\norm{\nabla f_i(\bw)}\leq G$. Then for any $1<T\leq m$, if we run SGD for $T$ iterations with step size $\eta_t=2/\lambda t$, we have
\begin{align*}
	\E\left[\frac{1}{T}\sum_{t=1}^{T}F(\bw_t)-F(\bw^*)\right]&~\leq~
	c\cdot\frac{\left(L+\mu B\right)^2\left(1+\log\left(\frac{m}{m-T+1}\right)\right)+G^2\log(T)}{\lambda T}\\
	&\leq~ c\cdot \frac{((L+\mu B)^2 + G^2)\log(T)}{\lambda T},
\end{align*}
where $c$ is a universal positive constant.
\end{theorem}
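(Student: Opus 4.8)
The plan is to follow the same template as for Theorems \ref{thm:regret} and \ref{thm:sqrtt}: first invoke a regret-type decomposition, then control the extra "without-replacement" term by transductive Rademacher complexity. The starting point is the standard SGD analysis for $\lambda$-strongly convex functions with step size $\eta_t = 2/\lambda t$: for the sequence of losses $f_{\sigma(1)},\ldots,f_{\sigma(T)}$ actually processed, one has a regret-like bound of the form $\sum_{t=1}^T \big(f_{\sigma(t)}(\bw_t) - f_{\sigma(t)}(\bw^*)\big) \le c' G^2 \log(T)/\lambda$. Crucially, here I cannot simply compare $\bw_t$ to a fixed comparator as in the online-to-batch argument, because the strongly convex SGD guarantee leans on the curvature of the empirical objective at $\bw^*$, and the relevant curvature estimate involves $f_{\sigma(t)}$ rather than $F$. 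So I would instead write the suboptimality as in \thmref{thm:regret}:
\[
\E\!\left[\frac1T\sum_{t=1}^T F(\bw_t) - F(\bw^*)\right] \le \frac{c' G^2\log T}{\lambda T} + \frac{1}{mT}\sum_{t=2}^T (t-1)\,\E\big[F_{1:t-1}(\bw_t) - F_{t:m}(\bw_t)\big],
\]
using \lemref{lem:key} exactly as before (with $s_i = f_i(\bw_t)$), plus an extra correction term accounting for the fact that the strong convexity of $F$ does not directly give strong convexity of each $f_{\sigma(t)}$. That correction is itself of the form $\frac1{mT}\sum (t-1)\E[\text{difference of prefix/suffix gradients paired with }\bw_t-\bw^*]$, which is why the theorem needs smoothness.

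The heart of the argument is bounding the two prefix-minus-suffix terms. For the function-value term I would pass, as in the proof of \thmref{thm:sqrtt}, to $\E[\sup_{\bw\in\Wcal}(F_{1:t-1}(\bw)-F_{t:m}(\bw))]$ and bound it by the transductive Rademacher complexity of $\Vcal = \{(f_1(\bw),\ldots,f_m(\bw)):\bw\in\Wcal\}$; since $f_i = \ell_i(\inner{\bw,\bx_i}) + r$, the $r$-term cancels and Lipschitz contraction plus the linear-predictor structure gives $\Rcal_{t-1,m-t+1}(\Vcal) = O\big((L+\mu B)(1/(t-1) + 1/(m-t+1))^{1/2}\big)$ — here the $\mu B$ appears because to control the magnitude of $\inner{\nabla f_i(\bw),\bw}$-type quantities (needed for the strong-convexity correction) one uses $|\ell_i'| \le L + \mu B$ on the relevant interval. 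For the gradient correction term one does the analogous thing with the vector class $\{(\inner{\nabla f_1(\bw),\bw-\bw^*},\ldots):\bw\in\Wcal\}$, which is where smoothness of $\ell_i$ is essential since $\nabla f_i$ is not Lipschitz without it. Summing $\frac{1}{mT}\sum_{t=2}^T (t-1)\cdot\big(\tfrac{1}{t-1}+\tfrac{1}{m-t+1}\big)^{1/2}$ is the routine-but-delicate calculation: the $1/(t-1)$ piece contributes $O(\sqrt{t-1})$ per term and hence $O(1/\sqrt m)$ after the $1/mT$ normalization and summation, while the $1/(m-t+1)$ piece contributes the $\log(m/(m-T+1))$ factor after summing $\sum_{t=2}^T (t-1)/\sqrt{m-t+1}$ and dividing by $mT$.

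The main obstacle I anticipate is the bookkeeping around strong convexity: the clean online-to-batch reduction used for Theorems \ref{thm:regret}–\ref{thm:sqrtt} compares iterates to a fixed $\bw^*$, but the $\tilde O(1/\lambda T)$ rate genuinely requires exploiting strong convexity of the objective being optimized, and the objective the SGD iterates "see" at step $t$ is a random single function, not $F$. Reconciling this — i.e., showing that replacing $f_{\sigma(t)}$'s (missing) strong convexity by $F$'s strong convexity costs only another prefix/suffix difference term that is again controllable by transductive Rademacher complexity of a Lipschitz linear class — is the crux. Once that reduction is set up correctly, everything else is contraction inequalities for Rademacher complexity and the two summation estimates, yielding the stated bound with the second, simpler inequality following from $1 + \log(m/(m-T+1)) \le c''\log T$ in the regime $T\le m$ (being slightly careful near $T=m$, where one uses $\log(m/1) = \log m \le$ const $\cdot\log T$ only up to $T$ being polynomially related to $m$; more honestly the $\log(m/(m-T+1))$ term is simply absorbed since it is at most $\log m$ and the statement's second line is the coarser bound).
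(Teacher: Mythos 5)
Your high-level ingredients (Lemma \ref{lem:key}, passing to gradient--iterate inner products, contraction plus the linear-predictor structure, smoothness to make $\ell_i'$ Lipschitz) are the right ones, but the quantitative mechanism that actually produces the $\Ocal(1/\lambda T)$ rate is missing, and the decomposition you propose cannot give the stated bound. Concretely: if you keep a function-value prefix/suffix term $\frac{1}{mT}\sum_{t=2}^T (t-1)\E[F_{1:t-1}(\bw_t)-F_{t:m}(\bw_t)]$ and bound it as in \thmref{thm:sqrtt} via $\E[\sup_{\bw\in\Wcal}(\cdots)]$ and transductive Rademacher complexity, the resulting additive term is of order $(L+\mu B)/\sqrt{m}$ --- you say so yourself (``$O(1/\sqrt m)$'') --- and for, say, $T=m$ and constant $\lambda$ this is far larger than the claimed $c\,((L+\mu B)^2+G^2)\log(T)/\lambda T$. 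Relatedly, your summation claim is off: $\frac{1}{mT}\sum_{t=2}^T(t-1)\bigl(\frac{1}{\sqrt{t-1}}+\frac{1}{\sqrt{m-t+1}}\bigr)$ is $\Theta(\sqrt{T}/m)$ plus a piece that is again $O(1/\sqrt m)$; the factor $\log(m/(m-T+1))/(\lambda T)$ does not come out of this sum. Also, the regret bound $c'G^2\log(T)/\lambda$ you start from is not available, since the individual $f_{\sigma(t)}$ are not strongly convex (only $F$ is); you flag this, but the ``correction term'' you propose is never derived, and bounding it separately by concentration would again be stuck at the $1/\sqrt{t}$ scale.

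The paper's proof avoids any regret/online-to-batch decomposition here. It runs the one-step SGD recursion on $\E[\norm{\bw_t}^2]$ (after centering so that $\bw^*=\mathbf{0}$), uses strong convexity of $F$ alone via $\inner{\nabla F(\bw_t),\bw_t}\ge F(\bw_t)+\frac{\lambda}{2}\norm{\bw_t}^2$, and the entire without-replacement effect appears as $\E[\inner{\nabla F(\bw_t)-\nabla f_{\sigma(t)}(\bw_t),\bw_t}]$, which by \lemref{lem:key} and Cauchy--Schwarz is at most $\frac{t-1}{m}\sqrt{\E[\norm{\bw_t}^2]}$ times the Rademacher-type bound of \corref{cor:linear}. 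The crux you are missing is the next step: the factor $\sqrt{\E[\norm{\bw_t}^2]}$ is absorbed into the strong-convexity margin via $\sqrt{ab}\le\frac{\lambda}{4}a+\frac{1}{\lambda}b$, which converts the $1/\sqrt{t-1}$-scale concentration into a term of order $\frac{(19L+2\mu B)^2}{\lambda m^2}\bigl(t-1+\frac{(t-1)^2}{m-t+1}\bigr)$; only after this squaring-and-dividing-by-$\lambda$ does summing over $t$ produce the $\frac{(L+\mu B)^2(1+\log(m/(m-T+1)))}{\lambda T}$ term, while the telescoping of the $\norm{\bw_t}^2$ coefficients with $\eta_t=2/\lambda t$ produces the $G^2\log(T)/\lambda T$ term. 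Without exploiting the decay of $\norm{\bw_t-\bw^*}$ in this coupled way, any argument that bounds the concentration terms separately from the SGD recursion is capped at $\Ocal(1/\sqrt{m})$ and cannot reach the theorem's rate.
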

As in the results of the previous section, the left hand side is the expected optimality of a single $\bw_t$ where $t$ is chosen uniformly at random, or an upper bound on the expected suboptimality of the average $\bar{\bw}_T=\frac{1}{T}\sum_{t=1}^{T}\bw_t$. This result is similar to the with-replacement case, up to numerical constants and the additional $(L+\mu B^2)$ term in the numerator. We note that in some cases, $G^2$ is the dominant term anyway\footnote{$G$ is generally on the order of $L+\lambda B$, which is the same as $L+\mu B$ if $L$ is the dominant term. This happens for instance with the squared loss, whose Lipschitz parameter is on the order of $\mu B$.}. However, it is not clear that the $(L+\mu B^2)$ term is necessary, and removing it is left to future work.

\begin{remark}[$\log(T)$ Factor]
	It is possible to remove the logarithmic factors in \thmref{thm:1t}, by considering not $\frac{1}{T}\sum_{t=1}^{T}F(\bw_t)$, but rather only an average over a suffix of the iterates ($\bw_{\epsilon T},\bw_{\epsilon T+1}\ldots,\bw_{T}$ for some fixed $\epsilon\in (0,1)$), or by a weighted averaging scheme. This has been shown in the case of with-replacement sampling (\cite{rakhlin2011making,lacoste2012simpler,shamir2013stochastic}), and the same proof techniques can be applied here. 
\end{remark}

The proof of \thmref{thm:1t} is somewhat more challenging than the results of the previous section, since we are attempting to get a faster rate of $\Ocal(1/T)$ rather than $\Ocal(1/\sqrt{T})$, all the way up to $T=m$. A significant technical obstacle is that our proof technique relies on concentration of averages around expectations, which on $T$ samples does not go down faster than $\Ocal(1/\sqrt{T})$. To overcome this, we apply concentration results not on the function values (i.e. $F_{1:t-1}(\bw_t)-F_{t:m}(\bw_t)$ as in the previous section), but rather on gradient-iterate inner products, i.e. $\inner{\nabla F_{1:t-1}(\bw_t)-\nabla F_{t:m}(\bw_t),\bw_t-\bw^*}$, where $\bw^*$ is the optimal solution. To get good bounds, we need to assume these gradients have a certain structure, which is why we need to make the assumption in the theorem about the form of each $f_i(\cdot)$. Using transductive Rademacher complexity tools, we manage to upper bound the expectation of these inner products by quantities roughly of the form $\sqrt{\E[\norm{\bw_t-\bw^*}^2]}/\sqrt{t}$ (assuming here $t<m/2$ for simplicity). We now utilize the fact that in the strongly convex case, $\norm{\bw_t-\bw^*}$ itself decreases to zero with $t$ at a certain rate, to get fast rates decreasing as $1/t$. The full proof appears in \subsecref{subsec:proofthm1t}.

\section{Without-Replacement SVRG for Least Squares}\label{sec:svrg}

In this section, we will consider a more sophisticated stochastic gradient approach, namely the SVRG algorithm of \cite{johnson2013accelerating}, designed to solve optimization problems with a finite sum structure as in \eqref{eq:obj}. Unlike purely stochastic gradient procedures, this algorithm does require several passes over the data. However, assuming the condition number $\mu/\lambda$ is smaller than the data size (where $\mu$ is the smoothness of each $f_i(\cdot)$, and $\lambda$ is the strong convexity parameter of $F(\cdot)$), only $\Ocal(m\log(1/\epsilon))$ gradient evaluations are required to get an $\epsilon$-accurate solution, for any $\epsilon$. Thus, we can get a high-accuracy solution after the equivalent of a small number of data passes. As discussed in the introduction, over the past few years several other algorithms have been introduced and shown to have such a behavior. We will focus on the algorithm in its basic form, where the domain $\Wcal$ is unconstrained and equals $\reals^d$.

The existing analysis of SVRG (\cite{johnson2013accelerating}) assumes stochastic iterations, which involves sampling the data with replacement. Thus, it is natural to consider whether a similar convergence behavior occurs using without-replacement sampling. As we shall see, a positive reply has at least two implications: The first is that as long as the condition number is smaller than the data size, SVRG can be used to obtain a high-accuracy solution, \emph{without} the need to reshuffle the data: Only a single shuffle at the beginning suffices, and the algorithm terminates after a small number of \emph{sequential} passes (logarithmic in the required accuracy). The second implication is that such without-replacement SVRG can be used to get a nearly-optimal algorithm for convex distributed learning and optimization on randomly partitioned data, as long as the condition number is smaller than the data size at each machine. 

The SVRG algorithm using without-replacement sampling on a dataset of size $m$ is described as Algorithm \ref{alg:svrg}. It is composed of multiple epochs (indexed by $s$), each involving one gradient computation on the entire dataset, and $T$ stochastic iterations, involving gradient computations with respect to individual data points. Although the  gradient computation on the entire dataset is expensive, it is only needed to be done once per epoch. Since the algorithm will be shown to have linear convergence as a function of the number of epochs, this requires only a small (logarithmic) number of passes over the data.

\begin{algorithm}\caption{SVRG using Without-Replacement Sampling}
\begin{algorithmic}
	\STATE \textbf{Parameters:} $\eta,T$, permutation $\sigma$ on $\{1,\ldots,m\}$
	\STATE Initialize $\tilde{\bw}_1$ at $\mathbf{0}$
	\FOR{$s=1,2,\ldots$}
		\STATE $\bw_{(s-1)T+1}~:=~\tilde{\bw}_s$
		\STATE $\tilde{\bn} ~:=~ \nabla F(\tilde{\bw}_s) ~=~ \frac{1}{m}\sum_{i=1}^{m}\nabla f_i(\tilde{\bw}_s)$
		\FOR{$t=(s-1)T+1,\ldots,sT$}
			\STATE $\bw_{t+1} ~:=~ \bw_t-\eta \left(\nabla f_{\sigma(t)}(\bw_t)-\nabla f_{\sigma(t)}(\tilde{\bw}_s)+\tilde{\bn}\right)$
		\ENDFOR
		\STATE Let $\tilde{\bw}_{s+1}$ be the average of $\bw_{(s-1)T+1},\ldots,\bw_{sT}$, or one of them drawn uniformly at random.
	\ENDFOR
\end{algorithmic}\label{alg:svrg}
\end{algorithm}

We will consider specifically the regularized least mean squares problem, where
\begin{equation}\label{eq:squared}
f_i(\bw) = \frac{1}{2}\left(\inner{\bw,\bx_i}-y_i\right)^2+\frac{\hat{\lambda}}{2}\norm{\bw}^2.
\end{equation}
for some $\bx_i,y_i$ and $\hat{\lambda}>0$. Moreover, we assume that $F(\bw)=\frac{1}{m}\sum_{i=1}^{m}f_i(\bw)$ is $\lambda$-strongly convex (note that necessarily $\lambda\geq \hat{\lambda}$). For convenience, we will assume that $\norm{\bx_i},|y_i|,\lambda$ are all at most $1$ (this is without much loss of generality, since we can always re-scale the loss functions by an appropriate factor). Note that under this assumption, each $f_i(\cdot)$ as well as $F(\cdot)$ are also $1+\hat{\lambda} \leq 2$-smooth.

\begin{theorem}\label{thm:svrg}	
	Suppose each loss function $f_i(\cdot)$ corresponds to \eqref{eq:squared}, where $\bx_i\in \reals^d$, $\max_i \norm{\bx_i}\leq 1$, $\max_i |y_i|\leq 1$, $\hat{\lambda}>0$, and that $F(\cdot)$ is $\lambda$-strongly convex, where $\lambda\in (0,1)$. Moreover, let $B\geq 1$ be such that $\norm{\bw^*}^2\leq B$ and $\max_t F(\bw_t)-F(\bw^*)\leq B$ with probability $1$ over the random permutation.
	 
	There is some universal constant $c_0\geq 1$, such that for any $c\geq c_0$ and any $\epsilon\in (0,1)$, if we run algorithm \ref{alg:svrg}, using parameters $\eta,T$ satisfying
	\[
	\eta = \frac{1}{c}~~,~~
	T\geq \frac{9}{\eta\lambda}~~,~~m\geq c\log^2\left(\frac{64dmB^2}{\lambda\epsilon}\right)T,
	\]
	then after $S=\lceil \log_4(9/\epsilon)\rceil$ epochs of the algorithm, we obtain a point $\tilde{\bw}_{S+1}$ for which
	\[
	\E[F(\tilde{\bw}_{S+1})-\min_{\bw}F(\bw)] ~\leq~ \epsilon.
	\]
\end{theorem}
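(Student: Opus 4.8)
The plan is to set up a one-epoch contraction inequality of the form $\E[F(\tilde{\bw}_{s+1}) - F(\bw^*)] \le \tfrac14 \E[F(\tilde{\bw}_{s}) - F(\bw^*)] + (\text{small additive error})$, and then iterate it $S = \lceil\log_4(9/\epsilon)\rceil$ times. Since $F(\tilde{\bw}_1) - F(\bw^*) \le B$, after $S$ epochs the multiplicative part contributes at most $4^{-S}B \le \epsilon/9$, and the additive errors summed over epochs must be controlled to be at most, say, $\epsilon/2$. So the crux is proving the per-epoch inequality with an additive error that is polynomially small in $\epsilon/(dmB^2/\lambda)$ — this is exactly what the condition $m \ge c\log^2(64dmB^2/(\lambda\epsilon)) T$ is there to buy.

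For a single epoch (say the first, dropping the epoch index, writing $\tilde\bw := \tilde\bw_s$, $\bw^* $ the global optimum), I would mimic the standard SVRG analysis but replace the with-replacement unbiasedness argument by the without-replacement machinery of \lemref{lem:key}. Write the update as $\bw_{t+1} = \bw_t - \eta \bv_t$ with $\bv_t = \nabla f_{\sigma(t)}(\bw_t) - \nabla f_{\sigma(t)}(\tilde\bw) + \nabla F(\tilde\bw)$. Expand $\norm{\bw_{t+1}-\bw^*}^2 = \norm{\bw_t - \bw^*}^2 - 2\eta\inner{\bv_t,\bw_t-\bw^*} + \eta^2\norm{\bv_t}^2$. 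The key difference from the i.i.d. case: $\E[\bv_t \mid \bw_t]$ is \emph{not} $\nabla F(\bw_t)$, because conditioned on $\sigma(1),\dots,\sigma(t-1)$ the index $\sigma(t)$ is uniform only over the remaining $m-t+1$ indices. The bias term is precisely $\nabla F(\bw_t) - \nabla F_{t:m}(\bw_t)$ plus the analogous correction at $\tilde\bw$, and by \lemref{lem:key} (applied coordinatewise, or to inner products with $\bw_t - \bw^*$) its expectation is $\tfrac{t-1}{m}\E[\nabla F_{1:t-1} - \nabla F_{t:m}]$ evaluated at the relevant points. For the least-squares $f_i$, $\nabla f_i(\bw) = \bx_i(\inner{\bw,\bx_i}-y_i) + \hat\lambda\bw$, so these prefix/suffix gradient differences are affine in $\bw$, and I can bound $\inner{\nabla F_{1:t-1}(\bw_t) - \nabla F_{t:m}(\bw_t),\, \bw_t - \bw^*}$ using transductive Rademacher complexity of the class of linear/affine functionals over the $\ell_2$ ball of radius $\sqrt{B}$ — giving a bound of order $\sqrt{B}\cdot(\tfrac1{\sqrt{t-1}} + \tfrac1{\sqrt{m-t+1}})\cdot(\text{poly})$ times $\norm{\bw_t-\bw^*}$ (or its expectation, via Cauchy–Schwarz), exactly as in the proof sketch preceding \thmref{thm:1t}. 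Summing the telescoped inequality $\sum_{t=1}^{T}$ and using strong convexity $\inner{\nabla F(\bw_t),\bw_t-\bw^*}\ge F(\bw_t)-F(\bw^*) + \tfrac\lambda2\norm{\bw_t-\bw^*}^2$, together with the standard SVRG variance bound $\E\norm{\bv_t}^2 \lesssim \mu\big((F(\bw_t)-F(\bw^*)) + (F(\tilde\bw)-F(\bw^*))\big)$ (here $\mu \le 2$), and choosing $\eta = 1/c$, $T \ge 9/(\eta\lambda)$ to make the contraction factor $\le 1/4$, yields the per-epoch recursion, with the additive error coming entirely from the Rademacher-complexity (without-replacement bias) terms.

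The additive error terms have the shape $\tfrac{1}{T}\sum_{t} \tfrac{t-1}{m}\cdot(\text{Rademacher bound})$, and because $t \le T \le m/(c\log^2(\cdot))$, each such term carries a factor $\lesssim T/m \lesssim 1/(c\log^2(64dmB^2/(\lambda\epsilon)))$. Since these terms also come multiplied by quantities like $\sqrt{B}\cdot\sqrt{\E\norm{\bw_t-\bw^*}^2} \le \sqrt{B}\cdot\sqrt{2B/\lambda}$ and a $\sqrt{\log d}$ from the transductive Rademacher complexity of linear functionals in $\reals^d$ (the data being bounded), one wants the total additive error per epoch to be $\le \tfrac{3}{4}\cdot\tfrac{\epsilon}{?}$ summed appropriately — working backwards, the requirement becomes $m \gtrsim (\text{poly}(B,1/\lambda,\log d, \log m, \log(1/\epsilon)))\,T$, which is what the hypothesis $m\ge c\log^2(64dmB^2/(\lambda\epsilon))T$ encodes (the square of the log absorbing both a $\log$ from summing a geometric-type series over epochs and a $\log$ from union-bounding or from the $\sqrt{\log(dm)}$ Rademacher factor squared). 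I would then iterate: $\E[F(\tilde\bw_{S+1}) - F(\bw^*)] \le 4^{-S}B + \sum_{s=1}^{S}4^{-(S-s)}\cdot(\text{additive error}) \le \epsilon/9 + \tfrac{4}{3}(\text{additive error}) \le \epsilon$.

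The main obstacle I anticipate is \emph{controlling the bias/variance interaction carefully enough that the additive error really is summable and small}, rather than just ``order $1/\sqrt{m}$''. In particular: (i) the Rademacher bound involves $\E\norm{\bw_t-\bw^*}$, which one only controls \emph{via} the inequality being proved, so some care (or an a-priori crude bound using the hypothesis $\max_t F(\bw_t)-F(\bw^*)\le B$ almost surely, hence $\norm{\bw_t-\bw^*}^2 \le 2B/\lambda$) is needed to avoid circularity; (ii) one must handle the regime $t$ close to $m$ where $1/\sqrt{m-t+1}$ blows up — this is why the bound carries the $\log(m/(m-T+1))$-type factor in \thmref{thm:1t}, and here the constraint $T \le m/(c\log^2(\cdot))$ keeps $m-t+1 \ge m(1 - 1/(c\log^2)) \gtrsim m$, taming it; and (iii) the gradient-correction term at $\tilde\bw$, $\nabla F_{1:t-1}(\tilde\bw) - \nabla F_{t:m}(\tilde\bw)$, must be bounded using $\norm{\tilde\bw - \bw^*}^2 \lesssim (F(\tilde\bw)-F(\bw^*))/\lambda$, feeding the previous epoch's suboptimality into the current additive error — this coupling across epochs is what forces the extra $\log$ factor in the sample-size condition. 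Assembling all these estimates into clean constants is the bookkeeping-heavy heart of the argument.
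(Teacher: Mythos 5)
Your high-level skeleton (per-epoch contraction plus geometric summation, bias handled via \lemref{lem:key}, variance via the standard SVRG bound) matches the paper, but the step you propose for the crux — bounding the without-replacement bias $\inner{\nabla F_{1:t-1}(\bw_t)-\nabla F_{t:m}(\bw_t),\bw_t-\bw^*}$ by Cauchy--Schwarz against $\norm{\bw_t-\bw^*}$ and a transductive Rademacher complexity, ``exactly as in the proof sketch preceding \thmref{thm:1t}'' — is exactly the approach the paper argues is \emph{not} good enough here, and it genuinely fails to give the stated parameter dependence. That route yields a bias bound of order $\frac{\sqrt{t}}{m}\cdot\mathrm{const}\cdot\sqrt{\E[\norm{\bw_t-\bw^*}^2]}$, i.e.\ proportional to the \emph{square root} of the suboptimality (via $\norm{\bw_t-\bw^*}^2\leq \frac{2}{\lambda}(F(\bw_t)-F(\bw^*))$, or worse, to the a priori bound $2B/\lambda$ you suggest to avoid circularity). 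After the AM--GM split needed to absorb part of it into the contraction, you are left with an additive error floor per epoch of order $\mathrm{poly}(B,1/\lambda)\cdot T/m^2$, which under the hypothesis $m\geq c\log^2(64dmB^2/\lambda\epsilon)\,T$ is only polylogarithmically small in the problem parameters — it does not shrink with $\epsilon$, so you cannot reach an arbitrary target accuracy $\epsilon$ unless $m$ grows \emph{polynomially} in $1/\epsilon$, contradicting the theorem's purely logarithmic dependence. The paper's Proposition \ref{prop:vvbound_squared1} avoids this by a different device: it rewrites the bias as quadratic forms in the rank-one matrices $\bx_i\bx_i^\top$, normalizes by the (fixed, least-squares) Hessian so that $(\bw-\bw^*)^\top A(\bw-\bw^*)=F(\bw)-F(\bw^*)$ exactly, and controls $\norm{A^{-1/2}MA^{-1/2}}$ uniformly over the epoch via a without-replacement matrix Bernstein inequality (\lemref{lem:matserf}). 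This yields a bias bound of the form $\frac{\epsilon}{2}+\frac{18}{\sqrt{\lambda m}}\log\bigl(\frac{64dmB^2}{\lambda\epsilon}\bigr)\E[F(\bw_t)+F(\tilde{\bw}_s)-2F(\bw^*)]$ — multiplicative in the suboptimality, hence absorbable into the contraction, with the only truly additive slack ($\epsilon/2$) coming from the exponentially-unlikely tail event, paid for by a logarithmic choice of the concentration parameter. Your proposal does flag this danger (obstacle (i)), but the fix you offer reinstates the $\epsilon$-independent floor rather than removing it, so the key idea of the proof is missing.

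Two smaller corrections: the $\log d$ in the theorem comes from the matrix Bernstein inequality, not from a ``$\sqrt{\log d}$'' in the Rademacher complexity of linear functionals (which is dimension-free, \lemref{lem:linear}); and your unwinding uses $4^{-S}B\leq\epsilon/9$, which fails for $B>1$ with the stated $S=\lceil\log_4(9/\epsilon)\rceil$ — the paper instead uses $\tilde{\bw}_1=\mathbf{0}$ and $F(\mathbf{0})-F(\bw^*)\leq\frac{1}{2m}\sum_i y_i^2\leq 1$ to start the recursion.
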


In particular, by taking $\eta=\Theta(1)$ and $T=\Theta(1/\lambda)$, the algorithm attains an $\epsilon$-accurate solution after $\Ocal\left(\frac{\log(1/\epsilon)}{\lambda}\right)$ stochastic iterations of without-replacement sampling, and $\Ocal(\log(1/\epsilon))$ sequential passes over the data to compute gradients of $F(\cdot)$. This implies that as long as $1/\lambda$ (which stands for the condition number of the problem) is smaller than $\Ocal(m/\log(1/\epsilon))$, the number of without-replacement stochastic iterations is smaller than the data size $m$. Therefore, assuming the data is randomly shuffled, we can get a solution using only \emph{sequential} passes over the data, without the need to reshuffle.

We make a few remarks about the result:

\begin{remark}[$B$ Parameter]
The condition that $\max_t F(\bw_t)-F(\bw^*)\leq B$ with probability $1$ is needed for technical reasons in our analysis, which requires some uniform upper bound on $F(\bw_t)-F(\bw^*)$. That being said, $B$ only appears inside log factors, so even a very crude bound would suffice. In appendix \ref{app:B}, we indeed show that under the theorem's conditions, there is always a valid $B$ satisfying $\log(B)= \Ocal\left(\log(1/\epsilon)\log(T)+\log(1/\lambda)\right)$, which can be plugged into \thmref{thm:svrg}. Even then, we conjecture that this bound on $\max_t F(\bw_t)-F(\bw^*)$ can be significantly improved, or perhaps even removed altogether from the analysis. 
\end{remark}

\begin{remark}[$\log(d)$ Factor]
	The logarithmic dependence on the dimension $d$ is due to an application of a matrix Bernstein inequality for without-replacement sampling on $d\times d$ matrices. However, with some additional effort, it might be possible to shave off this log factor and get a fully dimension-free result, by deriving an appropriate matrix concentration result depending on the intrinsic dimension (see Section 7 in \cite{tropp2015introduction} for derivations in the with-replacement setting). We did not systematically pursue this, as it is not the focus of our work. 
\end{remark}

\begin{remark}[Extension to other losses]
	Our proof applies as-is in the slightly more general setting where each $f_i(\cdot)$ is of the form $f_i(\bw)=\frac{1}{2}\bw^\top A_i \bw+\bb_i^\top \bw+c_i$, where $\norm{A_i},\norm{\bb_i}$ are bounded and $A_i$ is a rank-1 positive semidefinite matrix (i.e. of the form $\bu_i\bu_i^\top$ for some vector $\bu_i$). The rank-1 assumption is required in one step of the proof, when deriving \eqref{eq:mi2}. Using a somewhat different analysis, we can also derive a convergence bound for general strongly convex and smooth losses, but only under the condition that the epoch size $T$ scales as $\Omega(1/\lambda^2)$ (rather than  $\Omega(1/\lambda)$), which means that the strong convexity parameter $\lambda$ can be at most $\tilde{\Ocal}(1/\sqrt{m})$ (ignoring logarithmic factors). Unfortunately, in the context of without-replacement sampling, this is a somewhat trivial regime: For such $\lambda$, the analysis of with-replacement SVRG requires sampling less than $\tilde{\Ocal}(\sqrt{m})$ individual loss functions for convergence. But by the birthday paradox, with-replacement and without-replacement sampling are then statistically indistinguishable, so we could just argue directly that the expected suboptimality of without-replacement SVRG would be similar to that of with-replacement SVRG.
\end{remark}


The proof of \thmref{thm:svrg} (in \subsecref{subsec:proofthmsvrg}) uses ideas similar to those employed in the previous section, but is yet again more technically challenging. Recall that in the previous section, we were able to upper bound certain inner products by a quantity involving both the iteration number $t$ and $\norm{\bw_t-\bw^*}^2$, namely the Euclidean distance of the iterate $\bw_t$ to the optimal solution. To get \thmref{thm:svrg}, this is not good enough, and we need to upper bound similar inner products directly in terms of the suboptimality $F(\bw_t)-F(\bw^*)$, as well as $F(\tilde{\bw}_s)-F(\bw^*)$. We are currently able to get a satisfactory result for least squares, where the Hessians of the objective function are fixed, and the concentration tools required boil down to concentration of certain stochastic matrices, without the need to apply transductive Rademacher complexity results. The full proof appears in \subsecref{subsec:proofthmsvrg}.

\subsection{Application of Without-Replacement SVRG to distributed learning}

An important variant of the problems we discussed so far is when training data (or equivalently, the individual loss functions $f_1(\cdot),\ldots,f_m(\cdot)$) are split between different machines, who need to communicate in order to reach a good solution. This naturally models situations where data is too large to fit at a single machine, or where we wish to speed up the optimization process by parallelizing it on several computers. There has been much research devoted to this question in the past few years, with some examples including 
\cite{zhang2013communication,bekkerman2011scaling,balcandistributed,ZWSL11,AgChDuLa11,BoPaChPeEc11,MaKeSyBo13,Ya13,DeGiShXi12,CoShSrSr11,DuAgWa12,jaggi2014communication,shamir2014communication,shamir2014distributed,zhang2015communication,lee2015distributed}. 

A substantial number of these works focus on the setting where the data is split equally \emph{at random} between $k$ machines (or equivalently, that data is assigned to each machine by sampling without replacement from $\{f_1(\cdot),\ldots,f_m(\cdot)\}$)). Intuitively, this creates statistical similarities between the data at different machines, which can be leveraged to aid the optimization process. Recently, Lee et al. \cite{lee2015distributed} proposed a  simple and elegant solution, at least in certain parameter regimes. This is based on the observation that SVRG, according to its with-replacement analysis, requires $\Ocal(\log(1/\epsilon))$ epochs, where in each epoch one needs to compute an exact gradient of the objective function $F(\cdot)=\frac{1}{m}\sum_{i=1}^{m}f_i(\cdot)$, and $\Ocal(\mu/\lambda)$ gradients of individual losses $f_i(\cdot)$ chosen uniformly at random (where $\epsilon$ is the required suboptimality, $\mu$ is the smoothness parameter of each $f_i(\cdot)$, and $\lambda$ is the strong convexity parameter of $F(\cdot)$). Therefore, if each machine had \emph{i.i.d.} samples from $\{f_1(\cdot),\ldots,f_m(\cdot)\}$, whose union cover $\{f_1(\cdot),\ldots,f_m(\cdot)\}$, the machines could just simulate SVRG: First, each machine splits its data to batches of size $\Ocal(\mu/\lambda)$. Then, each SVRG epoch is simulated by the machines computing a gradient of $F(\cdot)$, involving a fully parallel computation and one communication round\footnote{Specifically,  each machine computes a gradient with respect to an average of a different subset of $\{f_1(\cdot),\ldots,f_m(\cdot)\}$, which can be done in parallel, and then perform distributed averaging to get a gradient with respect to $F(\cdot)=\frac{1}{m}\sum_{i=1}^{m}f_i(\cdot)$. For simplicity we assume a broadcast model where this requires a single communication round, but this can be easily generalized.}, and one machine computing gradients with respect to one of its batches. Overall, this would require $\log(1/\epsilon)$ communication rounds, and $\Ocal(m/k+\mu/\lambda)\log(1/\epsilon)$ runtime, where $m/k$ is the number of data points per machine (ignoring communication time, and assuming constant time to compute a gradient of $f_i(\cdot)$). Under the reasonable assumption that the condition number $\mu/\lambda$ is less than $m/k$, this requires runtime linear in the data size per machine, and logarithmic in the target accuracy $\epsilon$, with only a logarithmic number of communication rounds. Up to logarithmic factors, this is essentially the best one can hope for with a distributed algorithm\footnote{Moreover, a lower bound in \cite{lee2015distributed} implies that under very mild conditions, the number of required communication rounds is at least $\tilde{\Omega}(\sqrt{(\mu/\lambda)/(m/k)})$. Thus, a logarithmic number of communication rounds is unlikely to be possible once $\mu/\lambda$ is significantly larger than $m/k$.}.
	
Unfortunately, the reasoning above crucially relies on each machine having access to i.i.d. samples, which can be reasonable in some cases, but is different than the more common assumption that the data is randomly \emph{split} among the machines. To circumvent this issue, \cite{lee2015distributed} propose to communicate individual data points / losses $f_i(\cdot)$ between machines, so as to simulate i.i.d. sampling. However, by the birthday paradox, this only works well in the regime where the overall number of samples required ($\Ocal((\mu/\lambda)\log(1/\epsilon)$) is not much larger than $\sqrt{m}$. Otherwise, with-replacement and without-replacement sampling becomes statistically very different, and a large number of data points would need to be communicated. In other words, if communication is an expensive resource, then the solution proposed in \cite{lee2015distributed} only works well up to condition number $\mu/\lambda$ being roughly $\sqrt{m}$. In machine learning applications, the strong convexity parameter $\lambda$ often comes from explicit regularization designed to prevent over-fitting, and needs to scale with the data size, usually so that $1/\lambda$ is between $\sqrt{m}$ and $m$. Thus, the solution above is communication-efficient only when $1/\lambda$ is relatively small

However, the situation immediately improves if we can use a \emph{without-replacement} version of SVRG, which can easily be simulated with randomly partitioned data: The stochastic batches can now be simply subsets of each machine's data, which are statistically identical to sampling $\{f_1(\cdot),\ldots,f_m(\cdot)\}$ without replacement. Thus, no data points need to be sent across machines, even if $1/\lambda$ is large. We present the pseudocode as Algorithm \ref{alg:distsvrg}.

\begin{algorithm}\caption{Distributed Without-Replacement SVRG}
	\begin{algorithmic}
		\STATE \textbf{Parameters:} $\eta,T$
		\STATE \textbf{Assume:} $\{f_1(\cdot),\ldots,f_m(\cdot)\}$ randomly split to machines $1,2,\ldots,n$ (possibly different number at different machines)
		\STATE Each machine $j$ splits its data arbitrarily to $b_j$ batches $B^j_1,\ldots,B^j_{b_i}$ of size $T$ 
		\STATE $j:=1~,~k:=1~,~t:=1$
		\STATE All machines initialize $\tilde{\bw}_1$ at $\mathbf{0}$
		\FOR{$s=1,2,\ldots,$}
		\STATE Perform communication round to compute $\tilde{\bn} ~:=~ \nabla F(\tilde{\bw}_s) ~=~ \frac{1}{m}\sum_{i=1}^{m}\nabla f_i(\tilde{\bw}_s)$
		\STATE Machine $j$ performs $\bw_{1}~:=~\tilde{\bw}_s$		
		\FOR{Each $f$ in $B^j_k$}
		\STATE Machine $j$ performs $\bw_{t+1} ~:=~ \bw_t-\eta \left(\nabla f\bw_t)-\nabla f(\tilde{\bw}_s)+\tilde{\bn}\right)$
		\STATE $t:=t+1$
		\ENDFOR
		\STATE Machine $j$ computes $\tilde{\bw}_{s+1}$ as average of $\bw_{1},\ldots,\bw_{T}$, or one of them drawn uniformly at random.
		\STATE Perform communication round to distribute $\tilde{\bw}_{s+1}$ to all machines
		\STATE $k:=k+1$
		\STATE If $k>b_j$, let $k:=1,j:=j+1$
		\ENDFOR
	\end{algorithmic}\label{alg:distsvrg}
\end{algorithm}

Let us consider the analysis of no-replacement SVRG provided in \thmref{thm:svrg}. According to this analysis, by setting $T=\Theta(1/\lambda)$, then as long as the total number of batches is at least $\Omega(\log(1/\epsilon))$, and $1/\lambda = \tilde{\Ocal}(m)$, then Algorithm \ref{alg:distsvrg} will attain an $\epsilon$-suboptimal solution in expectation. In other words, without any additional communication, we extend the applicability of distributed SVRG (at least for regularized least squares) from the $1/\lambda=\tilde{\Ocal}(\sqrt{m})$ regime to $1/\lambda=\tilde{\Ocal}(m)$, which covers most scenarios relevant to machine learning. 

We emphasize that this formal analysis only applies to regularized squared loss, which is the scope of \thmref{thm:svrg}. However, Algorithm \ref{alg:distsvrg} can be applied to any loss function, and we conjecture that it will have similar performance for any smooth losses and strongly-convex objectives.

\section{Proofs}\label{sec:proofs}

Before providing the proofs of our main theorems, we develop in Subsection \ref{subsec:rademacher} below some important technical results on transductive Rademacher complexity, required in some of the proofs.

\subsection{Results on Transductive Rademacher Complexity}\label{subsec:rademacher}

In this subsection, we develop a few important tools and result about Rademacher complexity, that will come handy in our analysis. We begin by the following theorem, which is a straightforward corollary of Theorem 1 from \cite{elyanivpech09} (attained by simplifying and upper-bounding some of the parameters).

\begin{theorem}\label{thm:rad}
	Suppose $\Vcal \subseteq [-B,B]^m$ for some $B>0$. Let $\sigma$ be a permutation over $\{1,\ldots,m\}$ chosen uniformly at random, and define $\bv_{1:s}=\frac{1}{s}\sum_{j=1}^{s}v_{\sigma(j)}$, $\bv_{s+1:m}=\frac{1}{u}\sum_{j=s+1}^{m}v_{\sigma(j)}$. Then for any $\delta\in (0,1)$, with probability at least $1-\delta$,
	\[
	\sup_{\bv\in\Vcal} \left(\bv_{1:s} - \bv_{s+1:m}\right)~\leq~ \Rcal_{s,u}(\Vcal)+6B\left(\frac{1}{\sqrt{s}}+\frac{1}{\sqrt{u}}\right)\left(1+\log\left(\frac{1}{\delta}\right)\right).
	\]
\end{theorem}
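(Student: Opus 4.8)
The plan is to read off Theorem~\ref{thm:rad} from Theorem~1 of El-Yaniv and Pechyony \cite{elyanivpech09} and then perform a routine cleanup of its parameters. First I would translate that theorem into the present notation: their full sample of $m=s+u$ points corresponds to the coordinates $1,\ldots,m$; their uniformly random train/test partition corresponds to the prefix/suffix split induced by $\sigma$; and their hypothesis class evaluated on the full sample corresponds to $\Vcal$. With this dictionary, and accounting for the coordinate range $[-B,B]$ in place of the normalized range they use (which scales the non-Rademacher error terms by a factor proportional to $B$, while leaving the $\Rcal_{s,u}(\Vcal)$ term, stated directly for $\Vcal$, unchanged), their theorem asserts that with probability at least $1-\delta$ over $\sigma$,
\[
\sup_{\bv\in\Vcal}\left(\bv_{1:s}-\bv_{s+1:m}\right)~\le~\Rcal_{s,u}(\Vcal)+c_0\,B\,Q\sqrt{\min(s,u)}+c_1\,B\sqrt{Q\log(1/\delta)},
\]
where $Q=\tfrac1s+\tfrac1u$ and $c_0,c_1$ are the universal constants of \cite{elyanivpech09}. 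For orientation I would recall the two ingredients behind that bound: a bounded-differences (McDiarmid-type) concentration inequality for a function of a random permutation, applied to $\Phi(\sigma)=\sup_{\bv\in\Vcal}(\bv_{1:s}-\bv_{s+1:m})$, whose value changes by at most $2B\,Q$ under a transposition of two entries of $\sigma$; and a transductive symmetrization step bounding $\E_\sigma[\Phi(\sigma)]$ by $\Rcal_{s,u}(\Vcal)$ up to the lower-order term $c_0 B Q\sqrt{\min(s,u)}$.

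It then remains to simplify. I would use $Q\sqrt{\min(s,u)}=\big(\tfrac1s+\tfrac1u\big)\sqrt{\min(s,u)}\le \tfrac{2}{\sqrt{\min(s,u)}}\le 2\big(\tfrac1{\sqrt s}+\tfrac1{\sqrt u}\big)$ and $\sqrt{Q}=\sqrt{\tfrac1s+\tfrac1u}\le \tfrac1{\sqrt s}+\tfrac1{\sqrt u}$ (subadditivity of the square root), so that the two error terms take the form $a+b\sqrt{\log(1/\delta)}$ with $a,b$ bounded by fixed multiples of $B\big(\tfrac1{\sqrt s}+\tfrac1{\sqrt u}\big)$. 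The elementary inequality $a+b\sqrt{x}\le (a+b)(1+x)$, valid for all $x\ge 0$ (immediate by splitting into the cases $x\le 1$ and $x\ge 1$), then collapses these into a single term of the form $C\,B\big(\tfrac1{\sqrt s}+\tfrac1{\sqrt u}\big)\big(1+\log(1/\delta)\big)$; plugging in the numerical values of $c_0,c_1$ from \cite{elyanivpech09} shows $C\le 6$, which is exactly the claimed bound.

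The only real obstacle is the constant bookkeeping: one must verify that, after the $[0,1]\to[-B,B]$ rescaling and the three simplifications above, the constants coming out of El-Yaniv and Pechyony's Theorem~1 genuinely fit under the round number $6$ — and if the stated constants turn out marginally too large, the bound is recovered either by using a slightly larger absolute constant or by combining the terms a little more tightly (e.g.\ absorbing the $c_0$ term directly rather than through the $a+b\sqrt{x}$ step). There is no new mathematical content here: the substantive work, namely the permutation concentration and the transductive symmetrization, is entirely inherited from \cite{elyanivpech09}.
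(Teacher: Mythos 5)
Your proposal matches the paper's treatment of \thmref{thm:rad}: the theorem is stated there precisely as a straightforward corollary of Theorem~1 of \cite{elyanivpech09}, obtained by translating notation and simplifying/upper-bounding its parameters (the permutation concentration and transductive symmetrization are inherited, and the constant bookkeeping collapses the slack terms into the single $6B\left(\tfrac{1}{\sqrt{s}}+\tfrac{1}{\sqrt{u}}\right)\left(1+\log(1/\delta)\right)$ term exactly as you describe). The one detail you gloss over, which the paper flags explicitly, is that \cite{elyanivpech09} bounds $\sup_{\bv\in\Vcal}\left(\bv_{s+1:m}-\bv_{1:s}\right)$ rather than $\sup_{\bv\in\Vcal}\left(\bv_{1:s}-\bv_{s+1:m}\right)$; this is harmless because the argument is symmetric in the two blocks (equivalently, apply their bound to $-\Vcal$, which leaves $\Rcal_{s,u}$ unchanged since the Rademacher variables are sign-symmetric).
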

We note that the theorem in \cite{elyanivpech09} actually bounds $ \bv_{s+1:m}-\bv_{1:s}$, but the proof is completely symmetric to the roles of $\bv_{s+1:m}$ and $\bv_{1:s}$, and hence also  implies the formulation above.

We will also need the well-known contraction property, which states that the Rademacher complexity of a class of vectors $\Vcal$ can only increase by a factor of $L$ if we apply on each coordinate a fixed $L$-Lipschitz function:
\begin{lemma}\label{lem:contraction}
	Let $g_1,\ldots,g_m$ are real-valued, $L$-Lipschitz functions, and given some $\Vcal\subseteq \reals^m$, define $\bg\circ\Vcal=\{(g_1(v_1),\ldots,g_m(v_m)):(v_1,\ldots,v_m)\in \Vcal\}$. Then
	\[
	\Rcal_{s,u}(\bg\circ\Vcal)\leq L\cdot\Rcal_{s,u}(\Vcal).
	\]
\end{lemma}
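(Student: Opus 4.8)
The plan is to prove the contraction lemma by reducing it to the classical Rademacher contraction principle, adapted to the transductive setting where the Rademacher variables $r_i$ take the three values $\{-1,0,1\}$ rather than just $\{-1,1\}$. The key observation is that the definition of $\Rcal_{s,u}(\Vcal)$ only differs from standard Rademacher complexity in two cosmetic ways: the prefactor $(1/s+1/u)$ instead of $1/m$, and the distribution of the $r_i$. Since the prefactor is a fixed positive constant that multiplies both sides of the desired inequality, it plays no role, and I may focus entirely on bounding $\E_{\br}[\sup_{\bv\in\Vcal}\sum_i r_i g_i(v_i)]$ by $L\cdot\E_{\br}[\sup_{\bv\in\Vcal}\sum_i r_i v_i]$.

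The main step is to condition on the set $Z=\{i: r_i\neq 0\}$ of nonzero coordinates. Conditioned on $Z$, the remaining $r_i$ for $i\in Z$ are i.i.d. uniform on $\{-1,1\}$, and for $i\notin Z$ the term $r_i g_i(v_i)=0$ identically. Hence, conditioned on $Z$, the quantity $\E[\sup_{\bv}\sum_i r_i g_i(v_i)\mid Z]$ is exactly a standard (signed) Rademacher complexity of the projection of $\bg\circ\Vcal$ onto the coordinates in $Z$, where the functions applied are the $L$-Lipschitz $g_i$, $i\in Z$. The classical Ledoux--Talagrand contraction inequality (see e.g.\ \cite{mohri2012foundations,shalev2014understanding}) then gives
\[
\E_{\br}\left[\sup_{\bv\in\Vcal}\sum_{i\in Z} r_i g_i(v_i)~\Big|~Z\right]~\leq~ L\cdot\E_{\br}\left[\sup_{\bv\in\Vcal}\sum_{i\in Z} r_i v_i~\Big|~Z\right]
\]
for every fixed $Z$. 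Taking expectation over $Z$ and reinserting the zero coordinates (which contribute nothing) yields $\E_{\br}[\sup_{\bv}\sum_i r_i g_i(v_i)]\leq L\cdot\E_{\br}[\sup_{\bv}\sum_i r_i v_i]$, and multiplying through by $(1/s+1/u)$ finishes the proof.

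One technical point worth being careful about: the classical contraction principle is usually stated for functions that are $L$-Lipschitz \emph{and} satisfy $g_i(0)=0$, or alternatively is stated for the \emph{expected supremum} of the absolute value, or requires the class $\Vcal$ to be symmetric, depending on the reference. Here the statement we want is the ``one-sided'' version without a centering assumption on $g_i$, which does hold for the expected supremum (not the expected supremum of the absolute value) — the extra constant $g_i(0)$ contributes a term $\sum_{i\in Z}r_i g_i(0)$ whose conditional expectation is zero since $\E[r_i\mid Z]=0$ for $i\in Z$, so it can be dropped. I expect this bookkeeping — making sure the version of the contraction principle being invoked matches exactly the one-sided, non-centered form we need — to be the only real subtlety; the reduction via conditioning on the support $Z$ is otherwise routine. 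Alternatively, one can cite the contraction result already in a form suited to transductive Rademacher complexity from \cite{elyanivpech09} directly, but the conditioning argument makes the paper self-contained.
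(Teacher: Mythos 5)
Your proof is correct, but it takes a different route than the paper. The paper does not argue from scratch at all: it simply observes that the statement is a slight generalization of Lemma 5 of \cite{elyanivpech09} (which treats $g_1=\cdots=g_m$) and notes that the proof there goes through verbatim when the Lipschitz functions differ across coordinates. You instead reduce the transductive statement to the classical i.i.d.\ Rademacher contraction principle by conditioning on the support $Z=\{i:r_i\neq 0\}$: conditioned on $Z$ the nonzero $r_i$ are i.i.d.\ uniform on $\{-1,+1\}$, the zero coordinates contribute nothing, the standard (one-sided, non-absolute-value) Ledoux--Talagrand contraction applies coordinatewise on $Z$, and averaging over $Z$ recovers the transductive quantity exactly, with the prefactor $(1/s+1/u)$ multiplying both sides harmlessly. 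The details check out, including your handling of the non-centered case: either one invokes the one-sided contraction form (as in Lemma 26.9 of \cite{shalev2014understanding}), which needs no centering, or one centers $g_i$ by $g_i(0)$ and notes the resulting constant term $\sum_{i\in Z}r_i g_i(0)$ is independent of $\bv$ and has zero conditional expectation since $\E[r_i\mid i\in Z]=0$. What your approach buys is self-containedness and an explicit explanation of why the ternary $\{-1,0,+1\}$ variables cause no difficulty; what the paper's approach buys is brevity, deferring to the transductive-specific contraction result whose proof (a coordinate-by-coordinate exchange argument, structurally the same as the paper's proof of \lemref{lem:radtwo}) handles the three-valued variables directly.
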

This is a slight generalization of Lemma 5 from \cite{elyanivpech09} (which is stated for $g_1=g_2=\ldots=g_m$, but the proof is exactly the same). 

In our analysis, we will actually only need bounds on the expectation of $\bv_{1:s} - \bv_{s+1:m}$, which is weaker than what \thmref{thm:rad} provides. Although such a bound can be developed from scratch, we find it more convenient to simply get such a bound from \thmref{thm:rad}. Specifically, combining \thmref{thm:rad} and \lemref{lem:expprob} from Appendix \ref{app:technical}, we have the following straightforward corollary:

\begin{corollary}\label{cor:rad}
	Suppose $\Vcal \subseteq [-B,B]^m$ for some $B>0$. Let $\sigma$ be a permutation over $\{1,\ldots,m\}$ chosen uniformly at random, and define $\bv_{1:t-1}=\frac{1}{t-1}\sum_{j=1}^{t-1}v_{\sigma(j)}$, $\bv_{t:m}=\frac{1}{m-t+1}\sum_{j=t}^{m}v_{\sigma(j)}$.
	Then 
	\[
	\E\left[\sup_{\bv\in\Vcal}
	\bv_{1:t-1}-\bv_{t:m}\right]
	~\leq~ \Rcal_{t-1,m-t+1}(\Vcal)+12B\left(\frac{1}{\sqrt{t-1}}+\frac{1}{\sqrt{m-t+1}}\right).
	\]
	Moreover, if $\sup_{\bv\in\Vcal} \left(\bv_{1:t-1}-\bv_{t:m}\right)\geq 0$ for any permutation $\sigma$, then
	\[
	\sqrt{\E\left[\left(\sup_{\bv\in\Vcal} \bv_{1:t-1}-\bv_{t:m}\right)^2\right]} ~\leq~ \sqrt{2}\cdot\Rcal_{s,u}(\Vcal)+12\sqrt{2}B\left(\frac{1}{\sqrt{t-1}}+\frac{1}{\sqrt{m-t+1}}\right).
	\]
\end{corollary}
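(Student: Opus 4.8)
The plan is to derive both bounds from \thmref{thm:rad} by integrating the high-probability tail bound, using the auxiliary lemma (\lemref{lem:expprob} from Appendix \ref{app:technical}) that converts a statement of the form ``$X \le a + b\log(1/\delta)$ with probability $\ge 1-\delta$'' into bounds on $\E[X]$ and $\E[X^2]$. First I would set $X = \sup_{\bv\in\Vcal}(\bv_{1:t-1} - \bv_{t:m})$, identifying $s = t-1$ and $u = m-t+1$ in the notation of \thmref{thm:rad}. That theorem then says $X \le a + b\log(1/\delta)$ with probability $\ge 1-\delta$, where $a = \Rcal_{t-1,m-t+1}(\Vcal) + 6B(1/\sqrt{t-1} + 1/\sqrt{m-t+1})$ and $b = 6B(1/\sqrt{t-1} + 1/\sqrt{m-t+1})$.

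For the first (expectation) bound, \lemref{lem:expprob} should give $\E[X] \le a + b$ (or $a + Cb$ for a small explicit constant $C$; with the standard $\int_0^\infty \Pr(X - a > bs)\,ds$-type argument one gets exactly $\E[X] \le a+b$). Adding $a$ and $b$ yields $\Rcal_{t-1,m-t+1}(\Vcal) + 12B(1/\sqrt{t-1} + 1/\sqrt{m-t+1})$, which is the claimed inequality. I should be slightly careful that \lemref{lem:expprob} may require a sign condition or a lower bound on $X$; if $X$ can be negative this only helps (it makes $\E[X]$ smaller), and in any case the tail bound from \thmref{thm:rad} is what drives the estimate.

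For the second bound, I would invoke the $\E[X^2]$ half of \lemref{lem:expprob}, which under the stated hypothesis $X \ge 0$ for every permutation $\sigma$ gives something like $\sqrt{\E[X^2]} \le \sqrt{2}\,a + \sqrt{2}\,b$ (the $\sqrt 2$ factors are exactly what one picks up passing from a sub-exponential tail on $X$ to a bound on its second moment — e.g. $\E[X^2] \le 2a^2 + 2b^2\cdot\Gamma$-type constant, then take square roots and use $\sqrt{x+y}\le\sqrt x+\sqrt y$). Substituting $a$ and $b$ and collecting terms gives $\sqrt{2}\,\Rcal_{s,u}(\Vcal) + 12\sqrt{2}\,B(1/\sqrt{t-1} + 1/\sqrt{m-t+1})$, matching the statement (with $s=t-1$, $u=m-t+1$). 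The non-negativity hypothesis is genuinely needed here, since without it $\E[X^2]$ is not controlled by the upper tail alone.

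The main obstacle is purely bookkeeping: getting the constants in \lemref{lem:expprob} to come out as the clean $12B$ and $12\sqrt{2}B$ claimed, i.e. verifying that the tail-integration constants multiply the $6B$ from \thmref{thm:rad} by exactly $2$ and $2\sqrt 2$ respectively. Since \lemref{lem:expprob} is quoted from the appendix, I would simply apply it as a black box with its stated constants; there is no real mathematical difficulty, only the need to track the $1+\log(1/\delta)$ form (note \thmref{thm:rad} has $1+\log(1/\delta)$, so the ``$a$'' already absorbs the constant term and only the $\log(1/\delta)$ part needs integrating).
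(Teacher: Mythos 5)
Your proposal is exactly the paper's argument: apply \thmref{thm:rad} with $s=t-1$, $u=m-t+1$ to get a tail bound of the form $a+b\log(1/\delta)$ with $a=\Rcal_{t-1,m-t+1}(\Vcal)+6B\left(\frac{1}{\sqrt{t-1}}+\frac{1}{\sqrt{m-t+1}}\right)$ and $b=6B\left(\frac{1}{\sqrt{t-1}}+\frac{1}{\sqrt{m-t+1}}\right)$, then invoke \lemref{lem:expprob} to get $\E[X]\leq a+b$ and, under the nonnegativity hypothesis, $\sqrt{\E[X^2]}\leq\sqrt{2}(a+b)$, which gives precisely the stated constants $12B$ and $12\sqrt{2}B$. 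The constants you were unsure about come out exactly as claimed from the lemma's stated bounds, so there is nothing further to fix.
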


We now turn to collect a few other structural results, which will be useful when studying the Rademacher complexity of linear predictors or loss gradients derived from such predictors.

\begin{lemma}\label{lem:radtwo}
	Given two sets of vectors $\Vcal\in [-B_{\Vcal},B_{\Vcal}]^m$,$\Scal \subseteq [-B_{\Scal},B_{\Scal}]^m$ for some $B_\Vcal,B_\Scal\geq 0$, define 
	\[
	\Ucal = \left\{\left(v_1s_1,\ldots,v_m s_m\right)~:~(v_1,\ldots,v_m)\in\Vcal,(s_1,\ldots,s_m)\in \Scal\right\}.
	\]
	Then
	\[
	\Rcal_{s,u}(\Ucal) \leq B_{\Scal}\cdot \Rcal_{s,u}(\Vcal)+B_\Vcal\cdot\Rcal_{s,u}(\Scal).
	\]
\end{lemma}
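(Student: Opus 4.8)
We need to bound the transductive Rademacher complexity of the "coordinate-wise product" class $\Ucal$. The plan is to start from the definition and decompose the product $v_i s_i$ so as to split the supremum over $\Vcal$ and $\Scal$. The key algebraic trick I would use is a symmetrization/decoupling step: introduce a second independent copy of the Rademacher vector, or alternatively, exploit the fact that for a product of two bounded sequences one can write

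\[
\sum_{i=1}^m r_i v_i s_i
\]

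and bound it by "freezing" one of the two factors at its extreme value. Concretely, the first thing I would try is the telescoping identity: for fixed $\bv\in\Vcal$, $\bs\in\Scal$, and any fixed reference sequences, write $v_i s_i$ in a way that separates the $\bv$-dependence from the $\bs$-dependence up to terms controlled by $B_\Vcal$ and $B_\Scal$. Since standard Rademacher complexity arguments of this flavor (products of bounded function classes) typically go through a comparison/contraction argument, I expect the cleanest route is: condition on the signs $r_i$, and note that $\sup_{\bv,\bs}\sum_i r_i v_i s_i \le \sup_{\bv}\sum_i r_i v_i \cdot (\text{something bounded by }B_\Scal) + \ldots$ — but this naive bound is too lossy because the "something" depends on $i$.

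The approach I actually expect to work is the standard one for products: use the identity $v_i s_i = \tfrac14\big((v_i+s_i)^2 - (v_i-s_i)^2\big)$? No — that changes the structure. Better: use that $\sum_i r_i v_i s_i$, for a fixed sign pattern, is linear in $\bv$ for fixed $\bs$ and vice versa, and apply the contraction lemma (Lemma \ref{lem:contraction}) cleverly. Since each $s_i\in[-B_\Scal,B_\Scal]$, the map $v \mapsto v\cdot s_i$ is $B_\Scal$-Lipschitz, but it is a \emph{different} Lipschitz map per coordinate — which is exactly the generalization of contraction that Lemma \ref{lem:contraction} (per its stated generalization to distinct $g_i$) permits. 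The subtlety is that $s_i$ is itself a variable, not fixed. I would resolve this by a two-step argument: first fix $\bs$, apply the generalized contraction to get $\E_r[\sup_{\bv}\sum_i r_i v_i s_i] \le B_\Scal \E_r[\sup_{\bv}\sum_i r_i v_i]$; the issue is the outer $\sup_\bs$ does not commute with $\E_r$. To handle this, introduce the decomposition $v_i s_i = v_i(s_i - \bar s) + \bar s\, v_i$ type splitting, or more robustly: bound $\sup_{\bv,\bs}\sum_i r_i v_i s_i \le \sup_\bv \sum_i r_i v_i s_i^{(\bv)} $ and then — this is getting circular.

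Let me state the plan at the right level of abstraction: (i) Write out $\Rcal_{s,u}(\Ucal)$ from the definition. (ii) For a fixed realization of $(r_1,\dots,r_m)$, bound the inner supremum by adding and subtracting a product term: pick any $\bv_0\in\Vcal$, $\bs_0\in\Scal$ and write $v_i s_i = (v_i - v_{0,i})s_i + v_{0,i}(s_i - s_{0,i}) + v_{0,i}s_{0,i}$; since the Rademacher variables are mean-zero the constant term $v_{0,i}s_{0,i}$ contributes nothing in expectation after taking $\E_r$ of something — actually we need to be careful since there's a sup. The robust version: $\sum_i r_i v_i s_i = \sum_i r_i (v_i s_i)$ and I bound $|v_i s_i|$-structure by noting $v_i s_i \in [-B_\Vcal B_\Scal, B_\Vcal B_\Scal]$ plus a contraction. (iii) Apply the generalized contraction Lemma \ref{lem:contraction} twice, once treating the $\bv$-coordinates as the variable with per-coordinate $B_\Scal$-Lipschitz maps (for the extreme values $\pm B_\Scal$, which bound the variable $s_i$) and once symmetrically, absorbing cross terms into $B_\Vcal \Rcal_{s,u}(\Scal) + B_\Scal\Rcal_{s,u}(\Vcal)$. (iv) Collect the two contributions.

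The main obstacle will be step (iii): making rigorous the interchange of the supremum over $\Scal$ (or $\Vcal$) with the expectation over the Rademacher variables when applying contraction, since contraction is naturally stated for a \emph{fixed} family of Lipschitz maps but here the "maps" $v\mapsto v s_i$ have data-dependent slopes $s_i$ that are themselves being optimized. I expect to circumvent this exactly as in standard proofs of the product bound for Rademacher complexity: by a direct combinatorial argument conditioning on $r$, using that for fixed $r$ the quantity $\sum_i r_i v_i s_i$ is bilinear, hence its supremum over the product set $\Vcal\times\Scal$ is attained and can be bounded by first optimizing over $\bs$ for fixed $\bv$ (giving $\sum_i |r_i v_i| B_\Scal$-type terms that recombine, via the sign structure of $r_i$, into $B_\Scal\sup_\bv \sum_i r_i v_i$ up to a residual handled by $B_\Vcal\Rcal(\Scal)$), then symmetrizing. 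I would double-check the constants by testing the trivial cases $\Scal=\{(1,\dots,1)\}$ (should recover $\Rcal_{s,u}(\Ucal)=\Rcal_{s,u}(\Vcal)$, consistent since then $B_\Scal=1$, $\Rcal_{s,u}(\Scal)=0$) and $\Vcal=\Scal$.
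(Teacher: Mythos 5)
Your proposal never settles on a workable argument, and the one concrete mechanism you do commit to at the end does not work. You correctly identify the central obstacle -- the per-coordinate maps $v\mapsto v s_i$ have slopes $s_i$ that are themselves optimized over, so the contraction lemma cannot simply be invoked with $\bs$ "frozen" -- but your proposed way around it (condition on $r$, optimize over $\bs$ for fixed $\bv$, obtaining ``$\sum_i |r_i v_i| B_\Scal$-type terms that recombine, via the sign structure of $r_i$, into $B_\Scal \sup_{\bv}\sum_i r_i v_i$ up to a residual handled by $B_\Vcal \Rcal_{s,u}(\Scal)$'') fails on two counts. First, the supremum over $\bs\in\Scal$ is not the supremum over the full box $[-B_\Scal,B_\Scal]^m$, so replacing it by $\sum_i |r_i v_i| B_\Scal$ both discards the structure of $\Scal$ (there is then no way for a $B_\Vcal\Rcal_{s,u}(\Scal)$ term to appear) and, more fatally, introduces absolute values that destroy the Rademacher sign symmetry: $\E_r[\sup_{\bv}\sum_i |r_i||v_i|]$ is an $\ell_1$-type quantity that can be of order $m$, not a Rademacher complexity, so it cannot be "recombined" into $B_\Scal\sup_{\bv}\sum_i r_i v_i$. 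Second, your plan to apply \lemref{lem:contraction} "twice" with data-dependent slopes is exactly the step you acknowledge is illegitimate, and you never replace it with a rigorous substitute.

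The missing idea is to mimic the \emph{proof} of the contraction lemma rather than invoke it: reduce the claim to showing $\E_r[\sup_{\bv,\bs}\sum_i r_i v_i s_i]\le \E_r[\sup_{\bv,\bs}\sum_i r_i(B_\Scal v_i+B_\Vcal s_i)]$, and establish this one coordinate at a time. For coordinate $1$, condition on $r_2,\ldots,r_m$, expand the expectation over $r_1$ (which takes values $+1,-1$ with probability $p$ and $0$ otherwise), and pair the supremum containing $+v_1 s_1$ with the one containing $-v'_1 s'_1$ into a single supremum over $\bv,\bv',\bs,\bs'$ featuring $v_1 s_1 - v'_1 s'_1$. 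The pointwise inequality $|v s - v's'|\le B_\Scal|v-v'|+B_\Vcal|s-s'|$ (a version of which you wrote down in your step (ii) and then abandoned) bounds this difference, and because the supremum ranges over $\bv,\bv'$ (resp.\ $\bs,\bs'$) from the same sets, the absolute values can be dropped by exchanging the primed and unprimed variables; the terms then recombine into $r_1(B_\Scal v_1+B_\Vcal s_1)+\sum_{i\ge 2} r_i v_i s_i$. Iterating over all $m$ coordinates yields $\Rcal_{s,u}(\Ucal)\le B_\Scal\Rcal_{s,u}(\Vcal)+B_\Vcal\Rcal_{s,u}(\Scal)$. Without this coordinate-wise symmetrization step (or an equivalent device), your outline has a genuine gap at precisely the point you flagged as the main obstacle.
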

\begin{proof}
	The proof resembles the proof of the contraction inequality for standard Rademacher complexity (see for instance Lemma 26.9 in \cite{shalev2014understanding}).
	
	By definition of $\Rcal_{s,u}$, it is enough to prove that
	\begin{equation}\label{eq:radtwo}
	\E_{r_1,\ldots,r_m}\left[\sup_{\bv,\bs}\sum_{i=1}^{m}r_iv_is_i \right]~\leq~\E_{r_1,\ldots,r_m}\left[\sup_{\bv,\bs}\sum_{i=1}^{m}r_i(B_\Scal\cdot v_i+B_\Vcal\cdot s_i)\right],
	\end{equation}
	since the right hand side can be upper bounded by $B_\Scal\cdot \E[\sup_{\bv}\sum_{i=1}^{m}r_iv_i]+B_\Vcal\cdot\E[\sup_{\bs}\sum_{i=1}^{m}r_i s_i]$. To get this, we will treat the coordinates one-by-one, starting with the first coordinate and showing that
	\begin{equation}\label{eq:prodz}
	\E_{r_1,\ldots,r_m}\left[\sup_{\bv,\bs}\sum_{i=1}^{m}r_i v_i s_i  \right]
	~\leq~
	\E_{r_1,\ldots,r_m}\left[\sup_{\bv,\bs}\left( r_1  (B_\Scal\cdot v_1+B_\Vcal\cdot s_1)+\sum_{i=2}^{m}r_iv_is_i\right)\right].
	\end{equation}
	Repeating the same argument for coordinates $2,3,\ldots,m$ will yield \eqref{eq:radtwo}.
	
	For any values $v,v'$ and $s,s'$ in the coordinates of some $\bv\in\Vcal$ and $\bs\in\Scal$ respectively, we have
	\begin{align}
	|vs-v's'| &~=~ |vs-v's+v's-v's'|~ \leq~ |vs-v's|+|v's-v's'|\notag\\
	&~\leq~
	|v-v'|\cdot|s|+|s-s'|\cdot|v| ~\leq~ B_\Scal |v-v'|+B_\Vcal |s-s'|.
	\label{eq:philip}
	\end{align}
	Recalling that $r_i$ are i.i.d. and take values of $+1$ and $-1$ with probability $p$ (and $0$ otherwise), we can write the left hand side of \eqref{eq:prodz} as
	\begin{align*}
	&\E_{r_1,\ldots,r_m}\left[\sup_{\bv,\bs}\left(r_1 v_1s_1+\sum_{i=2}^{m}r_i v_is_i\right)  \right]\\
	&= 	\E_{r_2,\ldots,r_m}\left[p\cdot\sup_{\bv,\bs}\left( v_1s_1+\sum_{i=2}^{m}r_iv_is_i\right)+p\cdot\sup_{\bv,\bs}\left( -v_1s_1+\sum_{i=2}^{m}r_iv_is_i\right)  \right.\\
	&~~~~~~~~~~~~~~~~~~~~~~~~\left.+(1-2p)\sup_{\bv,\bs}\left(\sum_{i=2}^{m}r_iv_is_i\right)\right]\\
	&= 	\E_{r_2,\ldots,r_m}\left[\sup_{\bv,\bs}\left( p~v_1s_1+p \sum_{i=2}^{m}r_iv_is_i\right)+\sup_{\bv',\bs'}\left( -p~v_1s_1+p~ \sum_{i=2}^{m}r_iv'_is'_i\right)  \right.\\
	&~~~~~~~~~~~~~~~~~~~~~~~~\left.+\sup_{\bv'',\bs''}\left((1-2p)
	\sum_{i=2}^{m}r_i v''_i s''_i\right)\right]\\	
	&= 	\E_{r_2,\ldots,r_m}\left[\sup_{\bv,\bv',\bv'',\bs,\bs'\bs''}\left( p(v_1s_1-v'_1s'_1)+p\sum_{i=2}^{m}r_iv_is_i
	+p\sum_{i=2}^{m}r_iv'_is'_i\right.\right.\\
	&~~~~~~~~~~~~~~~~~~~~~~~~\left.\left.+(1-2p)\sum_{i=2}^{m}r_i v''_i s''_i)\right)\right].
	\end{align*}
	Using \eqref{eq:philip} and the fact that we take a supremum over $\bv,\bv'$ and $\bs,\bs'$ from the same sets, this equals
	\begin{align*}
	&= 	\E_{r_2,\ldots,r_m}\left[\sup_{\bv,\bv',\bv'',\bs,\bs'\bs''}\left( p\left(B_\Scal|v_1-v'_1|+B_\Vcal|s_1-s'_1|\right)+p\sum_{i=2}^{m}r_iv_is_i
	+p\sum_{i=2}^{m}r_iv'_is'_i\right.\right.\\
	&~~~~~~~~~~~~~~~~~~~~~~~~\left.\left.+(1-2p)\sum_{i=2}^{m}r_iv''_i s''_i\right)\right]\\
	&= 	\E_{r_2,\ldots,r_m}\left[\sup_{\bv,\bv',\bv'',\bs,\bs'\bs''}\left( p\left(B_\Scal(v_1-v'_1)+B_\Vcal(s_1-s'_1)\right)+p\sum_{i=2}^{m}r_iv_is_i
	+p\sum_{i=2}^{m}r_iv'_is'_i\right.\right.\\
	&~~~~~~~~~~~~~~~~~~~~~~~~\left.\left.+(1-2p)\sum_{i=2}^{m}r_iv''_i s''_i\right)\right]\\
	&= 	\E_{r_2,\ldots,r_m}\left[p~\sup_{\bv,\bs}\left( (B_\Scal\cdot v_1+B_\Vcal\cdot s_1)+\sum_{i=2}^{m}r_iv_is_i\right)+p~\sup_{\bv',\bs'}\left(
	p(-B_\Scal v'_1-B_\Vcal s'_1)
	+p\sum_{i=2}^{m}r_iv'_is'_i\right)\right.\\
	&~~~~~~~~~~~~~~~~~~~~~~~~\left.+(1-2p)\sup_{\bv,\bs''}\left(\sum_{i=2}^{m}r_iv''_i s''_i\right)\right]\\
	&=\E_{r_1,\ldots,r_m}\left[\sup_{\bv,\bs}\left( r_1 (B_\Scal\cdot v_1+B_\Vcal\cdot s_1)+\sum_{i=2}^{m}r_iv_is_i\right)\right]
	\end{align*}
	as required.
\end{proof}

\begin{lemma}\label{lem:linear}
	For some $B>0$ and vectors $\bx_1,\ldots,\bx_m$ with Euclidean norm at most $1$, let 
	\[
	\Vcal_B = \{(\inner{\bw,\bx_1},\ldots,\inner{\bw,\bx_m}):\norm{\bw}\leq B\}.
	\]
	Then
	\[
	\Rcal_{s,u}(\Vcal_B) \leq \sqrt{2}B\left(\frac{1}{\sqrt{s}}+\frac{1}{\sqrt{u}}\right).
	\]
\end{lemma}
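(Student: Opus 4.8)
The plan is to unfold the definition of transductive Rademacher complexity and reduce everything to a second-moment computation. Writing out $\Rcal_{s,u}(\Vcal_B)$, the supremum over $\bv\in\Vcal_B$ becomes a supremum over $\norm{\bw}\leq B$ of $\sum_{i=1}^m r_i\inner{\bw,\bx_i} = \inner{\bw,\sum_{i=1}^m r_i\bx_i}$. By Cauchy--Schwarz this supremum equals $B\bigl\|\sum_{i=1}^m r_i\bx_i\bigr\|$, so
\[
\Rcal_{s,u}(\Vcal_B) = \left(\frac{1}{s}+\frac{1}{u}\right) B\cdot \E_{r_1,\ldots,r_m}\left[\left\|\sum_{i=1}^m r_i\bx_i\right\|\right].
\]

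Next I would apply Jensen's inequality to pass to the square: $\E[\|\sum_i r_i\bx_i\|]\le\sqrt{\E[\|\sum_i r_i\bx_i\|^2]}$. Expanding the norm and using that the $r_i$ are i.i.d. with mean zero, all cross terms vanish, leaving $\E[\|\sum_i r_i\bx_i\|^2] = \sum_{i=1}^m \E[r_i^2]\,\norm{\bx_i}^2 = 2p\sum_{i=1}^m\norm{\bx_i}^2 \le 2pm$, since $\E[r_i^2]=2p$ and each $\norm{\bx_i}\le 1$. Hence $\Rcal_{s,u}(\Vcal_B)\le (1/s+1/u)\,B\sqrt{2pm}$.

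It remains to substitute $p = su/(s+u)^2$ and $m = s+u$ and simplify. One gets $2pm = 2su/(s+u)$ and $(1/s+1/u) = (s+u)/(su)$, so $(1/s+1/u)\sqrt{2pm} = \sqrt{2}\,\sqrt{(s+u)/(su)} = \sqrt{2}\,\sqrt{1/s+1/u}$, giving $\Rcal_{s,u}(\Vcal_B)\le \sqrt{2}B\sqrt{1/s+1/u}$. Finally, using the elementary inequality $\sqrt{a+b}\le\sqrt{a}+\sqrt{b}$ with $a=1/s$, $b=1/u$ yields the stated bound $\sqrt{2}B(1/\sqrt{s}+1/\sqrt{u})$.

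There is no real obstacle here: every step is routine, and the only place to be slightly careful is the final bookkeeping with the definitions of $p$ and $m$, plus the harmless final relaxation from $\sqrt{1/s+1/u}$ to $1/\sqrt{s}+1/\sqrt{u}$ (which is what makes the bound match the form used elsewhere, e.g. in Corollary~\ref{cor:rad}).
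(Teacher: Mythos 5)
Your proposal is correct and follows essentially the same route as the paper: Cauchy--Schwarz to reduce the supremum to $B\|\sum_i r_i\bx_i\|$, Jensen plus the vanishing of cross terms and $\E[r_i^2]=2p$ to get the bound $B\sqrt{2pm}$, and then substitution of $p=su/(s+u)^2$, $m=s+u$. The only (immaterial) difference is in the final simplification, where you obtain the slightly tighter intermediate form $\sqrt{2}B\sqrt{1/s+1/u}$ before relaxing via $\sqrt{a+b}\leq\sqrt{a}+\sqrt{b}$, whereas the paper splits the product term-by-term.
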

\begin{proof}
	Using the definition of $\Vcal_B$ and applying Cauchy-Schwartz,
	\begin{align*}
	\E_{r_1,\ldots,r_m}&\left[\sup_{\bv\in\Vcal}\sum_{i=1}^{m}r_i v_i\right]
	~=~
	\E_{r_1,\ldots,r_m}\left[\sup_{\bw:\norm{\bw}\leq B}\left\langle \bw,\sum_{i=1}^{m}r_i \bx_i\right\rangle\right]~\leq~
	\E_{r_1,\ldots,r_m}\left[B\left\|\sum_{i=1}^{m}r_i \bx_i\right\|\right]\\	
	&\leq B\sqrt{\E_{r_1,\ldots,r_m}\left[\left\|\sum_{i=1}^{m}r_i \bx_i\right\|^2\right]}
	~=~ B\sqrt{\E_{r_1,\ldots,r_m}\sum_{i,j=1}^{m}r_i r_j \inner{\bx_i,\bx_j}}.
	\end{align*}
	Recall that $r_i$ are independent and equal $+1,-1$ with probability $p$ (and $0$ otherwise). Therefore, for $i\neq j$, $\E[r_i r_j]=0$, and if $i=j$, $\E[r_i r_j]=\E[r_i^2] = 2p$. Using this and the assumption that $\norm{\bx_i}\leq 1$ for all $i$, the above equals
	\[
	B\sqrt{\sum_{i=1}^{m}2p \inner{\bx_i,\bx_i}}~\leq~ B\sqrt{2pm}.
	\]
	Therefore, $\Rcal_{s,u}(\Vcal_B) \leq \left(\frac{1}{s}+\frac{1}{u}\right)B\sqrt{2pm}$. 
	Recalling that $p = \frac{su}{(s+u)^2}$ where $s+u=m$ and plugging it in, we get the upper bound
	\begin{align*}
	&B\left(\frac{1}{s}+\frac{1}{u}\right)\sqrt{2\frac{su}{(s+u)^2}(s+u)}
	~=~ \sqrt{2}B\left(\frac{1}{s}+\frac{1}{u}\right)\sqrt{\frac{su}{s+u}}\\
	&=~ \sqrt{2}B\left(\frac{1}{s}\sqrt{s\frac{u}{s+u}}+\frac{1}{u}\sqrt{u\frac{s}{s+u}}\right)
	~\leq~ \sqrt{2}B\left(\frac{1}{\sqrt{s}}+\frac{1}{\sqrt{u}}\right),
	\end{align*}
	from which the result follows.
\end{proof}

Combining \corref{cor:rad}, \lemref{lem:contraction}, \lemref{lem:radtwo} and \lemref{lem:linear}, we have the following:

\begin{corollary}\label{cor:linear}
	Suppose the functions $f_1(\cdot),\ldots,f_m(\cdot)$ are of the form $f_i(\bw)=\ell_i(\inner{\bw,\bx_i})+r(\bw)$, where  $\norm{\bx_i}\leq 1$ and $r$ is some fixed function. Let $B>0$ such that the iterates $\bw$ chosen by the algorithm are from a set $\Wcal\subseteq \{\bw:\norm{\bw}\leq B\}$ which contains the origin $\mathbf{0}$. Finally, assume $\ell_i(\cdot)$ is $L$-Lipschitz and $\mu$-smooth over the interval $[-B,B]$. Then
	\begin{align*}
	&\sqrt{\E\left[\left(\sup_{\bw\in\Wcal} \left\langle\nabla F_{1:t-1}(\bw)-\nabla F_{t:m}(\bw)~,~\frac{\bw}{\norm{\bw}}\right\rangle\right)^2\right]}\\
	&~~~~~~~~~~ ~\leq~ \left(19L+2\mu B\right)\left(\frac{1}{\sqrt{t-1}}+\frac{1}{\sqrt{m-t+1}}\right).
	\end{align*}
\end{corollary}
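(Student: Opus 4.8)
The plan is to rewrite the supremum as a transductive-Rademacher-type quantity over a set of vectors in $\reals^m$, apply the second part of \corref{cor:rad}, and then control the relevant Rademacher complexity via \lemref{lem:radtwo}, \lemref{lem:contraction}, and \lemref{lem:linear}.

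First I would expand the gradients. Since $f_i(\bw)=\ell_i(\inner{\bw,\bx_i})+r(\bw)$, we have $\nabla f_i(\bw)=\ell_i'(\inner{\bw,\bx_i})\bx_i+\nabla r(\bw)$, and the $\nabla r(\bw)$ contributions (being the same in both averages) cancel, leaving
\[
\left\langle \nabla F_{1:t-1}(\bw)-\nabla F_{t:m}(\bw),\ \frac{\bw}{\norm{\bw}}\right\rangle = \frac{1}{t-1}\sum_{j=1}^{t-1}a_{\sigma(j)}(\bw)-\frac{1}{m-t+1}\sum_{j=t}^{m}a_{\sigma(j)}(\bw),
\]
where $a_i(\bw):=\ell_i'(\inner{\bw,\bx_i})\,\inner{\bx_i,\bw/\norm{\bw}}$, with the convention $a_i(\mathbf{0}):=0$. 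By Cauchy--Schwarz, $\norm{\bx_i}\le1$, and $|\ell_i'|\le L$, we get $|a_i(\bw)|\le L$, so $\Ucal:=\{(a_1(\bw),\dots,a_m(\bw)):\bw\in\Wcal\}\subseteq[-L,L]^m$; and because $\mathbf{0}\in\Wcal$ contributes the zero vector, $\sup_{\bu\in\Ucal}(\bu_{1:t-1}-\bu_{t:m})\ge0$ for every permutation. Thus the hypotheses of the second part of \corref{cor:rad} hold, and that corollary bounds the quantity we want by $\sqrt{2}\,\Rcal_{t-1,m-t+1}(\Ucal)+12\sqrt{2}\,L\left(\frac{1}{\sqrt{t-1}}+\frac{1}{\sqrt{m-t+1}}\right)$.

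Next I would bound $\Rcal_{s,u}(\Ucal)$ with $s=t-1$, $u=m-t+1$ by exploiting the product structure $a_i(\bw)=p_i(\bw)\,q_i(\bw)$, where $p_i(\bw)=\ell_i'(\inner{\bw,\bx_i})$ and $q_i(\bw)=\inner{\bx_i,\bw/\norm{\bw}}$. The set $\Ucal$ is contained in the coordinate-wise product of $\Pcal:=\{(p_1(\bw),\dots,p_m(\bw)):\bw\in\Wcal\}\subseteq[-L,L]^m$ and $\Vcal_1:=\{(\inner{\bu,\bx_1},\dots,\inner{\bu,\bx_m}):\norm{\bu}\le1\}\subseteq[-1,1]^m$ --- each $q$-vector being a vector of inner products with the unit vector $\bw/\norm{\bw}$, while $\mathbf{0}\in\Vcal_1$ covers $\bw=\mathbf{0}$. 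By monotonicity of $\Rcal_{s,u}$ under set inclusion and \lemref{lem:radtwo}, $\Rcal_{s,u}(\Ucal)\le\Rcal_{s,u}(\Pcal)+L\cdot\Rcal_{s,u}(\Vcal_1)$. \lemref{lem:linear} gives $\Rcal_{s,u}(\Vcal_1)\le\sqrt{2}(1/\sqrt{s}+1/\sqrt{u})$. For $\Pcal$, the point is that $\mu$-smoothness of $\ell_i$ makes $\ell_i'$ $\mu$-Lipschitz on $[-B,B]$, so $\Pcal$ is the coordinate-wise image, under $\mu$-Lipschitz maps, of a subset of $\Vcal_B:=\{(\inner{\bw,\bx_1},\dots,\inner{\bw,\bx_m}):\norm{\bw}\le B\}$; applying \lemref{lem:contraction} and then \lemref{lem:linear} yields $\Rcal_{s,u}(\Pcal)\le\mu\sqrt{2}B(1/\sqrt{s}+1/\sqrt{u})$. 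Combining, $\Rcal_{s,u}(\Ucal)\le\sqrt{2}(L+\mu B)(1/\sqrt{s}+1/\sqrt{u})$.

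Substituting this into the bound from \corref{cor:rad} gives $\big[2(L+\mu B)+12\sqrt{2}\,L\big]\big(1/\sqrt{t-1}+1/\sqrt{m-t+1}\big)$, and since $2+12\sqrt{2}<19$, the claimed constant $(19L+2\mu B)$ follows. The gradient expansion and final arithmetic are routine; the step I would be most careful about is the product decomposition $a_i(\bw)=p_i(\bw)q_i(\bw)$ and how its two factors are handled differently: $\ell_i'(\inner{\bw,\bx_i})$ must be passed through the contraction lemma, which is precisely what forces us to assume \emph{smoothness} (not just Lipschitzness) of $\ell_i$, whereas the direction $\bw/\norm{\bw}$ must be treated as a \emph{unit-norm} linear predictor so that \lemref{lem:linear} contributes no spurious factor of $B$ from that term.
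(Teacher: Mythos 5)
Your proposal is correct and follows essentially the same route as the paper's proof: expand the gradients so the $r$ terms cancel, apply the second part of \corref{cor:rad} to the set $\Ucal$ of products $\ell'_i(\inner{\bw,\bx_i})\inner{\bx_i,\bw/\norm{\bw}}$, and bound $\Rcal_{t-1,m-t+1}(\Ucal)$ via \lemref{lem:radtwo}, \lemref{lem:contraction} (using that $\mu$-smoothness makes $\ell'_i$ $\mu$-Lipschitz), and \lemref{lem:linear}, with the same final constant check $2+12\sqrt{2}<19$. Your use of the unit ball in place of the paper's unit sphere for the direction factor, and the explicit convention $a_i(\mathbf{0})=0$, are harmless cosmetic differences.
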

\begin{proof}
	By definition of $f_i$, we have
	\[
	\inner{\nabla f_i(\bw),\frac{\bw}{\norm{\bw}}} = \ell'_i(\inner{\bw,\bx_i})\inner{\frac{\bw}{\norm{\bw}},\bx_i}+\inner{\nabla r(\bw),\frac{\bw}{\norm{\bw}}}.
	\]
	Therefore, the expression in the corollary statement can be written as
	\[
	\sqrt{\E\left[\left(\sup_{\bu\in\Ucal} u_{1:t-1}-u_{t:m}\right)^2\right]},
	\]
	where 
	\[
	u_i = \ell'_i(\inner{\bw,\bx_i})\inner{\frac{\bw}{\norm{\bw}},\bx_i}
	\]
	and
	\[
	\Ucal = \left\{\left(\ell'_1(\inner{\bw,\bx_1})\inner{\frac{\bw}{\norm{\bw}},\bx_1},\ldots,\ell'_m(\inner{\bw,\bx_m})\inner{\frac{\bw}{\norm{\bw}},\bx_m}\right)~:~\norm{\bw}\in\Wcal \right\}
	\]
	(note that the terms involving $r$ get cancelled out in the expression $u_{1:s}-u_{s+1:m}$, so we may drop them). Applying \corref{cor:rad} (noting that $|u_i|\leq L$, and that $\sup_{\bu\in\Ucal} u_{1:t-1}-u_{t:m}\geq 0$, since $u_{1:t-1}-u_{t:m}=0$ if we choose $\bw=\mathbf{0}$), we have
	\begin{equation}\label{eq:corlinear0}
	\sqrt{\E\left[\left(\sup_{\bu\in\Ucal} u_{1:t-1}-u_{t:m}\right)^2\right]} ~\leq~  \sqrt{2}\cdot\Rcal_{t-1,m-t+1}(\Ucal)+12\sqrt{2}L\left(\frac{1}{\sqrt{t-1}}+\frac{1}{\sqrt{m-t+1}}\right).
	\end{equation}
	Now, define
	\[
	\Vcal =\left\{\left(\ell'_1(\inner{\bw,\bx_1}),\ldots,\ell'_m(\inner{\bw,\bx_m})\right)~:~\norm{\bw}\leq B\right\}
	\]
	and
	\[
	\Scal = \left\{\left(\inner{\bw,\bx_1},\ldots,\inner{\bw,\bx_m}\right)~:~\norm{\bw}= 1\right\},
	\]
	and note that $\Ucal$ as we defined it satisfies
	\[
	\Ucal ~\subseteq~ \left\{(\ell'_1(v_1)s_1,\ldots,\ell'_m(v_m)s_m):(v_1,\ldots,v_m)\in\Vcal,(s_1,\ldots,s_m)\in \Scal\right\}.
	\]
	Moreover, by construction, the coordinates of each $\bv\in\Vcal$ are bounded in $[-L,L]$, and the coordinates of each $\bs\in\Scal$ are bounded in $[-1,+1]$. Applying \lemref{lem:radtwo}, we get
	\begin{equation}\label{eq:corlinear1}
	\Rcal_{t-1,m-t+1}(\Ucal) ~\leq~ \Rcal_{t-1,m-t+1}(\Vcal)+L\cdot \Rcal_{t-1,m-t+1}(\Scal).
	\end{equation}
	Using \lemref{lem:linear}, we have
	\begin{equation}\label{eq:corlinear2}
	\Rcal_{t-1,m-t+1}(\Scal) ~\leq~ \sqrt{2}\left(\frac{1}{\sqrt{t-1}}+\frac{1}{\sqrt{m-t+1}}\right).
	\end{equation}	
	Finally, applying \lemref{lem:contraction} (using the fact that each $\ell'_i$ is $\mu$-Lipschitz) followed by \lemref{lem:linear}, we have
	\begin{equation}\label{eq:corlinear3}
	\Rcal_{t-1,m-t+1}(\Vcal) ~\leq~ \sqrt{2}\mu B\left(\frac{1}{\sqrt{t-1}}+\frac{1}{\sqrt{m-t+1}}\right).
	\end{equation}	
	Combining \eqref{eq:corlinear1}, \eqref{eq:corlinear2} and \eqref{eq:corlinear3}, plugging into \eqref{eq:corlinear0}, and slightly simplifying for readability, yields the desired result.
\end{proof}

\subsection{Proof of \lemref{lem:key}}\label{subsec:prooflemkey}
	The lemma is immediate when $t=1$, so we will assume $t>1$. Also, we will prove it when the expectation $\E$ is conditioned on $\sigma(1),\ldots,\sigma(t-1)$, and the result will follow by taking expectations over them. With this conditioning, $s_1,\ldots,s_m$ have some fixed distribution, which is independent of how $\sigma$ permutes $\{1,\ldots,m\}\setminus \{\sigma(1),\ldots,\sigma(t-1)\}$.
	
	Recall that $\sigma$ is chosen uniformly at random. Therefore, conditioned on $\sigma(1),\ldots,\sigma(t-1)$, the value of $\sigma(t)$ is uniformly distributed on $\{1,\ldots,m\}\setminus \{\sigma(1),\ldots,\sigma(t-1)\}$, which is the same set as $\sigma(t),\ldots,\sigma(m)$. Therefore, the left hand side in the lemma statement equals
	\begin{align*}
	&\E\left[\frac{1}{m}\sum_{i=1}^{m}s_i-\frac{1}{m-t+1}\sum_{i=t}^{m}s_{\sigma(i)}\right]\\
	&=\E\left[\frac{1}{m}\sum_{i=1}^{m}s_{\sigma(i)}-\frac{1}{m-t+1}\sum_{i=t}^{m}s_{\sigma(i)}\right]\\
	&=\E\left[\frac{1}{m}\sum_{i=1}^{t-1}s_{\sigma(i)}+\left(\frac{1}{m}-\frac{1}{m-t+1}\right)\sum_{i=t}^{m}s_{\sigma(i)}\right]\\
	&=\E\left[\frac{1}{m}\sum_{i=1}^{t-1}s_{\sigma(i)}-\frac{t-1}{m(m-t+1)}\sum_{i=t}^{m}s_{\sigma(i)}\right]\\
	&=
	\frac{t-1}{m}\cdot \E\left[\frac{1}{t-1}\sum_{i=1}^{t-1}s_{\sigma(i)}-\frac{1}{m-t+1}\sum_{i=t}^{m}s_{\sigma(i)}\right]
	\end{align*}
	as required.

\subsection{Proof of \thmref{thm:sqrtt}}
	Let $\Vcal=\{(f_1(\bw),\ldots,f_m(\bw))~|~\bw\in\Wcal\}$ and applying \corref{cor:rad}, we have
	\begin{align*}
	\E[F_{1:t-1}(\bw_t)-F_{t:m}(\bw_t)] &~\leq~ \E\left[\sup_{\bw\in\Wcal}F_{1:t-1}(\bw)-F_{t:m}(\bw)\right]\\ &~\leq~\Rcal_{t-1:m-t+1}(\Vcal)+12B\left(\frac{1}{\sqrt{t-1}}+\frac{1}{\sqrt{m-t+1}}\right).
	\end{align*}
	Plugging this into the bound from \thmref{thm:regret}, we have
	\[
	\E\left[F(\bar{\bw}_T)-F(\bw^*)\right]~\leq~\frac{R_T}{T}+\frac{1}{mT}\sum_{t=2}^{T}(t-1)\left(\Rcal_{t-1:m-t+1}(\Vcal)+12B\left(\frac{1}{\sqrt{t-1}}+\frac{1}{\sqrt{m-t+1}}\right)\right)
	\]
	Applying \lemref{lem:tsqrtbound}, the right hand side is at most
	\begin{equation}\label{eq:firstpart}
	\frac{R_T}{T}+\frac{1}{mT}\sum_{t=2}^{T}(t-1)\Rcal_{t-1:m-t+1}(\Vcal)+\frac{24B}{\sqrt{m}}.
	\end{equation}

\subsection{Proof of Corollary \ref{cor:sqrtt}}

Note that all terms in the bound of \thmref{thm:sqrtt}, except the regret term, are obtained by considering the difference $F_{1:t-1}(\bw_t)-F_{t:m}(\bw_t)$, so any additive terms in the losses which are constant (independent of $i$) are cancelled out. Therefore, we may assume without loss of generality that $f_i(\bw)=\ell_i(\inner{\bw,\bx_i})$ (without the $r(\bw)$ term), and that $\ell_i$ is centered so that $\ell_i(0)=0$. Applying \lemref{lem:contraction} and \lemref{lem:linear}, we can upper the Rademacher complexity as follows:
\begin{align*}
\Rcal_{t-1:m-t+1}(\Vcal) &~\leq~ L\cdot \Rcal_{t-1:m-t+1}\left(\left\{\left(\inner{\bw,\bx_1},\ldots,\inner{\bw,\bx_m}\right)~|~\bw\in\Wcal\right\}\right)\\
&~\leq~ \sqrt{2}\cdot BL\left(\frac{1}{\sqrt{t-1}}+\frac{1}{\sqrt{m-t+1}}\right).
\end{align*}
Plugging this into \thmref{thm:sqrtt}, applying \lemref{lem:tsqrtbound}, and noting that by the assumptions above and in the corollary statement, $\sup_{i,\bw\in\Wcal}|f_i(\bw)|\leq\sup_{a\in[-\bar{B},\bar{B}]}|\ell_i(a)|\leq \bar{B}L$, the result follows.

\subsection{Proof of \thmref{thm:1t}}\label{subsec:proofthm1t}
	Since the algorithm is invariant to shifting the coordinates or shifting all loss functions by a constant, we will assume without loss of generality that $\Wcal$ contains the origin $\mathbf{0}$ (and therefore $\Wcal\subseteq \{\bw:\norm{\bw}\leq B\}$), that the objective function $F(\cdot)$ is minimized at $\mathbf{0}$, and that $F(\mathbf{0})=0$. By definition of the algorithm and convexity of $\Wcal$, we have
	\begin{align*}
	\E[\norm{\bw_{t+1}}^2] &= \E\left[\norm{\Pi_{\Wcal}(\bw_t-\eta_t \nabla f_{\sigma(t)}(\bw_t))}^2\right]~\leq~ \E\left[\norm{\bw_t-\eta_t \nabla f_{\sigma(t)}(\bw_t)}^2\right]\\
	&\leq \E\left[\norm{\bw_t}^2\right]-2\eta_t\E\left[\inner{\nabla f_{\sigma(t)}(\bw_t),\bw_t}\right]+\eta_t^2G^2\\
	&= 
	\E\left[\norm{\bw_t}^2\right]-2\eta_t\E\left[\inner{\nabla F(\bw_t),\bw_t}\right]+2\eta_t\E\left[\inner{\nabla F(\bw_t)-\nabla f_{\sigma(t)}(\bw_t),\bw_t}\right]+\eta_t^2G^2.	
	\end{align*}
	By definition of strong convexity, since $F(\cdot)$ is $\lambda$-strongly convex, minimized at $\mathbf{0}$, and assumed to equal $0$ there, we have $\inner{\nabla F(\bw_t),\bw_t}\geq F(\bw_t)+\frac{\lambda}{2}\norm{\bw}^2$. Plugging this in, changing sides and dividing by $2\eta_t$, we get
	\begin{equation}\label{eq:1t0}
	\E[F(\bw_t)] ~\leq~ \left(\frac{1}{2\eta_t}- \frac{\lambda}{2}\right)\E[\norm{\bw_t}^2]-\frac{1}{2\eta_t}\cdot\E[\norm{\bw_{t+1}}^2] +\E\left[\inner{\nabla F(\bw_t)-\nabla f_{\sigma(t)}(\bw_t),\bw_t}\right]+\frac{\eta_t}{2} G^2.
	\end{equation}
	We now turn to treat the third term in the right hand side above. 
	Since $\bw_t$ (as a random variable over the permutation $\sigma$ of the data) depends only on $\sigma(1),\ldots,\sigma(t-1)$, we can use \lemref{lem:key} and Cauchy-Schwartz, to get
	\begin{align*}
	& \E\left[\inner{\nabla F(\bw_t)-\nabla f_{\sigma(t)}(\bw_t),\bw_t}\right]~=~\E\left[\inner{\frac{1}{m}\sum_{i=1}^{m}\nabla f_i(\bw_t)-\nabla f_{\sigma(t)}(\bw_t),\bw_t}\right]\\
	&~=~
	\frac{t-1}{m}\cdot\E\left[\left(\inner{\nabla F_{1:t-1}(\bw_t)-\nabla F_{t:m}(\bw_t),\bw_t}\right)\right]\\
	&~=~\frac{t-1}{m}\cdot\E\left[\norm{\bw_t}\cdot\inner{\nabla F_{1:t-1}(\bw_t)-\nabla F_{t:m}(\bw_t),\frac{\bw_t}{\norm{\bw_t}}}\right]\\	
	&~\leq~
	\frac{t-1}{m}\cdot\E\left[\norm{\bw_t}\cdot \sup_{\bw\in\Wcal}\inner{\nabla F_{1:t-1}(\bw)-\nabla F_{t:m}(\bw),\frac{\bw}{\norm{\bw}}}\right]\\	
	&~\leq~
	\frac{t-1}{m}\cdot\sqrt{\E\left[\norm{\bw_t}^2\right]}\cdot\sqrt{\E\left[\left(\sup_{\bw\in\Wcal}\inner{\nabla F_{1:t-1}(\bw)-\nabla F_{t:m}(\bw),\frac{\bw}{\norm{\bw}}}\right)^2\right]}	
	\end{align*}
	Applying \corref{cor:linear} (using the convention $0/\sqrt{0}=0$ in the case $t=1$ where the expression above is $0$ anyway), this is at most
	\begin{align*}
	&\frac{t-1}{m}\cdot\sqrt{\E\left[\norm{\bw_t}^2\right]}\cdot\left(19L+2\mu B\right)\left(\frac{1}{\sqrt{t-1}}+\frac{1}{\sqrt{m-t+1}}\right)\\
	&~=~\sqrt{\E\left[\norm{\bw_t}^2\right]}\cdot\frac{19L+2\mu B}{m}\left(\sqrt{t-1}+\frac{t-1}{\sqrt{m-t+1}}\right).
	\end{align*}
	Using the fact that for any $a,b\geq 0$, $\sqrt{ab}=\sqrt{\frac{\lambda }{2}a\cdot \frac{2}{\lambda}b}\leq \frac{\lambda}{4} a + \frac{1}{\lambda}b$ by the arithmetic-geometric mean inequality, the above is at most
	\[
	\frac{\lambda}{4}\cdot\E[\norm{\bw_t}^2]+\frac{\left(19L+2\mu b\right)^2}{\lambda m^2}\left(\sqrt{t-1}+\frac{t-1}{\sqrt{m-t+1}}\right)^2.
	\]
	Since $(a+b)^2\leq 2(a^2+b^2)$, this is at most
	\[
	\frac{\lambda}{4}\cdot\E[\norm{\bw_t}^2]+\frac{2\left(19L+2\mu B\right)^2}{\lambda m^2}\left(t-1+\frac{(t-1)^2}{m-t+1}\right).
	\]	
	Plugging this back into \eqref{eq:1t0}, we get
	\[
	\E[F(\bw_t)] ~\leq~ \left(\frac{1}{2\eta_t}- \frac{\lambda}{4}\right)\E[\norm{\bw_t}^2]-\frac{1}{2\eta_t}\cdot\E[\norm{\bw_{t+1}}^2] +\frac{2\left(19L+2\mu B\right)^2}{\lambda m^2}\left(t-1+\frac{(t-1)^2}{m-t+1}\right)+\frac{\eta_t}{2} G^2.
	\]
	Averaging both sides over $t=1,\ldots,T$, and using Jensen's inequality, we have
	\begin{align*}
	&\E\left[\frac{1}{T}\sum_{t=1}^{T}F(\bw_t)\right]\\
	&~\leq~
	\frac{1}{2T}\sum_{t=1}^{T}\E[\norm{\bw_t}^2]\left(\frac{1}{\eta_t}-\frac{1}{\eta_{t-1}}-\frac{\lambda}{2}\right)+\frac{2\left(19L+2\mu B\right)^2}{\lambda m^2 T}\sum_{t=1}^{T}\left(t-1+\frac{(t-1)^2}{m-t+1}\right)+\frac{G^2}{2T}\sum_{t=1}^{T}\eta_t,
	\end{align*}
	where we use the convention that $1/\eta_0=0$. Since $T\leq m$, the second sum in the expression above equals
	\begin{align*}
	\sum_{t=0}^{T-1}\left(t+\frac{t^2}{m-t}\right) &~=~
	\sum_{t=0}^{T-1}t+\sum_{t=0}^{T-1}\frac{t^2}{m-t}
	~\leq~ \frac{T(T-1)}{2}+m^2\sum_{t=0}^{T-1}\frac{1}{m-t}\\
	&~\leq~ \frac{m^2}{2}+m^2\left(\sum_{t=0}^{T-2}\frac{1}{m-t}+1\right)
	~\leq~ \frac{3m^2}{2}+m^2\int_{t=0}^{T-1}\frac{1}{m-t}dt\\
	&~=~ m^2\left(\frac{3}{2}+\log\left(\frac{m}{m-T+1}\right)\right).
	\end{align*}
	Plugging this back in, we get
	\begin{align*}
	\E\left[\frac{1}{T}\sum_{t=1}^{T}F(\bw_t)\right]&~\leq~
	\frac{1}{2T}\sum_{t=1}^{T}\E[\norm{\bw_t}^2]\left(\frac{1}{\eta_t}-\frac{1}{\eta_{t-1}}-\frac{\lambda}{2}\right)\\
	&~~~~~~~+\frac{2\left(19L+2\mu B\right)^2\left(\frac{3}{2}+\log\left(\frac{m}{m-T+1}\right)\right)}{\lambda  T}
	+\frac{G^2}{T}\sum_{t=1}^{T}\eta_t.
	\end{align*}
	Now, choosing $\eta_t = 2/\lambda t$, and using the fact that $\sum_{t=1}^{T}\frac{1}{t}\leq \log(T)+1$, we get that
	\[
	\E[F(\bar{\bw}_T)]~\leq~
	\frac{2\left(19L+2\mu B\right)^2\left(\frac{3}{2}+\log\left(\frac{m}{m-T+1}\right)\right)}{\lambda  T}
	+\frac{2 G^2(\log(T)+1)}{\lambda T}.
	\]
	The result follows by a slight simplification, and recalling that we assumed $F(\bw^*)=F(\mathbf{0})=0$. The last inequality in the theorem is by the simple observation that $T(m-T+1)\geq m$ for any $T\in \{1,2,\ldots,m\}$, and therefore $\log(m/(m-T+1))\leq \log(T)$.

\subsection{Proof of \thmref{thm:svrg}}\label{subsec:proofthmsvrg}

The proof is based on propositions \ref{prop:vvbound_squared1} and \ref{prop:vvbound_squared2} presented below, which analyze the expectation of the update as well as its expected squared norm. The key technical challenge, required to get linear convergence, is to upper bound these quantities directly in terms of the suboptimality of the iterates $\bw_t,\tilde{\bw}_s$. To get Proposition \ref{prop:vvbound_squared1}, we state and prove a key lemma (\lemref{lem:matserf} below), which bounds the without-replacement concentration behavior of certain normalized stochastic matrices. The proof of \thmref{thm:svrg} itself is then a relatively straightforward calculation, relying on these results.

\begin{lemma}\label{lem:matserf}
	Let $\bx_1,\ldots,\bx_m$ be vectors in $\reals^d$ of norm at most $1$. Define 
	$\bar{X}=\frac{1}{m}\sum_{i=1}^{m}\bx_i\bx_i^\top$, and
	\[
	M_i = (\bar{X}+\hat{\gamma} I)^{-1/2}\bx_i\bx_i^\top(\bar{X}+\hat{\gamma} I)^{-1/2}
	\]
	for some $\gamma\geq 0$, so that $\bar{X}+\hat{\gamma} I$ has minimal eigenvalue $\gamma\in (0,1)$. Finally, let $\sigma$ be a permutation on $\{1,\ldots,m\}$ drawn uniformly at random. Then for any $\alpha \geq 2$, the probability
	\begin{align*}
	&\Pr\left(\exists s\in\{1,\ldots,m\}~:~\norm{\frac{1}{s}\sum_{i=1}^{s}M_{\sigma(i)}-\frac{1}{m-s}\sum_{i=s+1}^{m}M_{\sigma(i)}}\right.\\
	&\left.~~~~~~~~~~~~~~~~~~~~~~~~~~~~~~~~~~~~~~~~~~~~~~~~~~~>~ \frac{\alpha}{\sqrt{\gamma }}\left(\frac{1}{\sqrt{s}}+\frac{1}{\sqrt{m-s}}\right)+\frac{\alpha}{\gamma}\left(\frac{1}{s}+\frac{1}{m-s}\right)\right)
	\end{align*}
	is at most $4dm\exp\left(-\alpha/2\right)$.
\end{lemma}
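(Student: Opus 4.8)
The plan is to reduce the two‑sided prefix/suffix quantity to a one‑sided deviation of an empirical average from its mean, invoke a matrix Bernstein inequality that remains valid for sampling without replacement, and finish with a union bound over $s$. Write $\bar{M}=\frac1m\sum_{i=1}^m M_i$ and, for $n\in\{1,\dots,m\}$, $D_n=\frac1n\sum_{i=1}^n M_{\sigma(i)}-\bar{M}$ and $D'_n=\frac1n\sum_{i=m-n+1}^m M_{\sigma(i)}-\bar{M}$. Since $s\cdot\bigl(\tfrac1s\sum_{i=1}^s M_{\sigma(i)}\bigr)+(m-s)\cdot\bigl(\tfrac1{m-s}\sum_{i=s+1}^m M_{\sigma(i)}\bigr)=m\bar{M}$, one has the exact identity
\[
\frac1s\sum_{i=1}^s M_{\sigma(i)}-\frac1{m-s}\sum_{i=s+1}^m M_{\sigma(i)}\;=\;\frac{m}{m-s}\,D_s\;=\;-\,\frac{m}{s}\,D'_{m-s}.
\]
Because reversing $\sigma$ swaps prefixes and suffixes while preserving uniformity, $D'_u$ and $D_u$ are identically distributed, so it suffices to prove, for each fixed $u\in\{1,\dots,m-1\}$, a high‑probability bound $\norm{D_u}\le\frac{\alpha}{3\gamma u}+\sqrt{\frac{\alpha}{\gamma u}}$. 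Indeed, applying the first representation when $s\le\frac{\alpha}{1+\alpha}m$ and the second (with $u=m-s$) when $s\ge\frac{1}{1+\alpha}m$ — two ranges whose union is all of $\{1,\dots,m-1\}$ once $\alpha\ge1$ — and then using $\tfrac{m}{u(m-u)}=\tfrac1u+\tfrac1{m-u}$, $\tfrac{m}{(m-u)\sqrt u}=\tfrac1{\sqrt u}+\tfrac{\sqrt u}{m-u}$, $\sqrt\alpha\le\alpha$, and $\sqrt u\le\sqrt{\alpha(m-u)}$ on the relevant range, one verifies termwise that $\tfrac{m}{m-s}\norm{D_s}$, resp.\ $\tfrac{m}{s}\norm{D'_{m-s}}$, is bounded by $\tfrac{\alpha}{\sqrt\gamma}\bigl(\tfrac1{\sqrt s}+\tfrac1{\sqrt{m-s}}\bigr)+\tfrac{\alpha}{\gamma}\bigl(\tfrac1s+\tfrac1{m-s}\bigr)$, which is the assertion.

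The bulk of the work is the single‑prefix bound. Fix $s$ and set $Z_s=\sum_{i=1}^s(M_{\sigma(i)}-\bar{M})=sD_s$, a sum of $s$ matrices drawn without replacement from the centered family $\{M_i-\bar{M}\}_{i=1}^m$, which has mean $\mathbf{0}$ (each $\sigma(i)$ is marginally uniform on $\{1,\dots,m\}$). Writing $M_i=\bu_i\bu_i^\top$ with $\bu_i=(\bar{X}+\hat{\gamma} I)^{-1/2}\bx_i$, we have $\norm{\bu_i}^2=\bx_i^\top(\bar{X}+\hat{\gamma} I)^{-1}\bx_i\le 1/\gamma$; combined with $\bar{M}=(\bar{X}+\hat{\gamma} I)^{-1/2}\bar{X}(\bar{X}+\hat{\gamma} I)^{-1/2}\preceq I$ and $\gamma\le1$, this yields the range bound $\norm{M_i-\bar{M}}\le 1/\gamma$. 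For the variance, the rank‑one identity $M_i^2=\norm{\bu_i}^2 M_i\preceq\frac1\gamma M_i$ gives $\frac1m\sum_i M_i^2\preceq\frac1\gamma\bar{M}\preceq\frac1\gamma I$, whence $\norm{\E_i(M_i-\bar{M})^2}=\norm{\frac1m\sum_i M_i^2-\bar{M}^2}\le 1/\gamma$. Feeding the range bound $1/\gamma$ and the total variance proxy $s/\gamma$ into a matrix Bernstein inequality valid for sampling without replacement — which follows from the with‑replacement matrix Bernstein inequality together with the domination principle of Gross and Nesme, or equivalently from the Ahlswede--Winter matrix‑MGF method combined with Hoeffding's reduction for sampling without replacement — gives
\[
\Pr\bigl(\norm{D_s}\ge t\bigr)\;\le\;2d\exp\!\left(-\frac{\gamma s\,t^2}{2\,(1+t/3)}\right)\qquad\text{for every }t>0.
\]

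Solving the quadratic $\gamma s\,t^2-\tfrac{\alpha}{3}t-\alpha\ge0$ and using $\sqrt{x+y}\le\sqrt x+\sqrt y$ shows the exponent above is at least $\alpha/2$ whenever $t\ge\frac{\alpha}{3\gamma s}+\sqrt{\frac{\alpha}{\gamma s}}$, so for that $t$ the right‑hand side is at most $2d\,e^{-\alpha/2}$. Union‑bounding this event together with its suffix analogue over all $u\in\{1,\dots,m-1\}$ costs a factor of at most $2m$, so the overall failure probability is at most $4dm\,e^{-\alpha/2}$; on the complementary event the reduction of the first paragraph produces exactly the bound claimed in the lemma.

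The step I expect to be the main obstacle is the concentration input of the third paragraph: one needs a matrix Bernstein inequality that genuinely survives without‑replacement sampling, with the correct variance proxy $s\cdot\norm{\E_i(M_i-\bar{M})^2}$ rather than $m$ times it, and one must then track the numerical constants carefully enough that the mild hypothesis $\alpha\ge2$ already yields the clean tail $4dm\,e^{-\alpha/2}$. The rank‑one structure of the $\bx_i\bx_i^\top$ is used exactly once — to obtain $M_i^2\preceq\frac1\gamma M_i$, and hence a variance proxy of the same order $1/\gamma$ as the range — which is the reason both this analysis and the remark following the theorem require it. The ``for all $s$ simultaneously'' aspect is, by contrast, routine: a crude union bound over the $m$ possible prefix lengths is all that is needed, and it is what contributes the factor $m$ in the probability bound.
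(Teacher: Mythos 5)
Your proposal is correct and follows essentially the same route as the paper: the same without-replacement matrix Bernstein inequality of Gross--Nesme (the paper's cited tool), the same range and variance proxies $c=v=1/\gamma$ obtained from the rank-one structure, and the same union bound over $s$ yielding $4dm\,e^{-\alpha/2}$. The only cosmetic difference is the reduction step --- you use the exact identity $\frac{1}{s}\sum_{i\le s}M_{\sigma(i)}-\frac{1}{m-s}\sum_{i>s}M_{\sigma(i)}=\frac{m}{m-s}D_s=-\frac{m}{s}D'_{m-s}$ with a case split on $s$, whereas the paper bounds both prefix and suffix deviations from the common mean and adds them via the triangle inequality --- and both give the stated constants.
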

\begin{proof}
	The proof relies on a without-replacement version of Bernstein's inequality for matrices (Theorem 1 in \cite{gross2010note}), which implies that for $d\times d$ Hermitican matrices $\hat{M}_i$ which satisfy
	\[
	\frac{1}{m}\sum_{i=1}^{m}\hat{M}_i=0 ~~,~~ \max_i \norm{\hat{M}_i}\leq c~~,~~ \norm{\frac{1}{m}\sum_{i=1}^{m}\hat{M}_i^2}\leq v,
	\]
	for some $v,c>0$, it holds that
	\begin{equation}\label{eq:serf}
	\Pr\left(\norm{\frac{1}{s}\sum_{i=1}^{s}\hat{M}_{\sigma(i)}}>z\right) ~\leq~
	\begin{cases} 2d\exp\left(-\frac{sz^2}{4v}\right) & z\leq 2v/c \\
	2d\exp\left(-\frac{sz}{2c}\right)& z> 2v/c \end{cases}
	\end{equation}
	In particular, we will apply this on the matrices
	\[
	\hat{M}_i ~=~ M_i -\frac{1}{m}\sum_{j=1}^{m}M_j ~=~ (\bar{X}+\gamma I)^{-1/2}\left(\bx_i\bx_i^\top-\bar{X}\right)(\bar{X}+\gamma I)^{-1/2}.
	\]
	Clearly, $\frac{1}{m}\sum_{i=1}^{m}\hat{M}_i=0$. We only need to find appropriate values for $v,c$.
	
	First, by definition of $\hat{M}_i$, we have
	\[
	\norm{\hat{M_i}} \leq \norm{(\bar{X}+\gamma I)^{-1/2}}\norm{\bx_i\bx_i-\bar{X}}\norm{(\bar{X}+\gamma I)^{-1/2}}.
	\]
	Since both $\bar{X}$ and $\bx_i\bx_i^\top$ are positive semidefinite and have spectral norm at most $1$, the above is at most $\gamma^{-1/2}\cdot 1 \cdot \gamma^{-1/2} = \gamma^{-1}$. Therefore, we can take $c=1/\gamma$.
	
	We now turn to compute an appropriate value for $v$. For convenience, let $\E$ denote a uniform distribution over the index $i=1,\ldots,m$, and note that $\E[\hat{M}_i]=0$. Therefore, we have
	\begin{align}
	\norm{\frac{1}{m}\sum_{i=1}^{m}\hat{M}_i^2} &= \norm{\E[\hat{M_i}^2]}~=~ \norm{\E[(M_i-\E[M_i])^2]}\notag\\
	&= \norm{\E[M_i^2]-\E^2[M_i]} ~\leq~ \max\{\norm{\E[M_i^2]},\norm{\E^2[M_i]}\},\label{eq:matserf1}
	\end{align}
	where in the last step we used the fact that $M_i$ is positive semidefinite.
	Let us first upper bound the second term in the max, namely
	\[
	\norm{\E^2[M_i]}~=~ \norm{\E[M_i]\cdot \E[M_i]}~\leq~\norm{\E[M_i]}^2~=~\norm{(\bar{X}+\gamma I)^{-1/2}\bar{X} (\bar{X}+\gamma I)^{-1/2}}^2.
	\]
	Since the expression above is invariant to rotating the positive semidefinite matrix $\bar{X}$, we can assume without loss of generality that $\bar{X}=\text{diag}(s_1,\ldots,s_d)$, in which case the above reduces to $
	\left(\max_i \frac{s_i}{s_i+\gamma}\right)^2 \leq 1$. Turning to the first term in the max in \eqref{eq:matserf1}, we have
	\begin{align}
	\norm{\E[M_i^2]} &=~ \norm{\frac{1}{m}\sum_{i=1}^{m}(\bar{X}+\gamma I)^{-1/2}\bx_i\bx_i^\top(\bar{X}+\gamma I)^{-1}\bx_i\bx_i^\top (\bar{X}+\gamma I)^{-1/2}}\notag\\
	&=~\norm{\frac{1}{m}\sum_{i=1}^{m}\left(\bx_i^\top(\bar{X}+\gamma I)^{-1}\bx_i\right)(\bar{X}+\gamma I)^{-1/2}\bx_i\bx_i^\top (\bar{X}+\gamma I)^{-1/2}}\notag\\
	&\stackrel{(1)}{\leq}~\norm{\frac{1}{m}\sum_{i=1}^{m}\norm{(\bar{X}+\gamma I)^{-1}}(\bar{X}+\gamma I)^{-1/2}\bx_i\bx_i^\top (\bar{X}+\gamma I)^{-1/2}}\notag\\
	&=~\norm{(\bar{X}+\gamma I)^{-1}}\norm{(\bar{X}+\gamma I)^{-1/2}\bar{X}(\bar{X}+\gamma I)^{-1/2}}\label{eq:mi2}	
	\end{align}
	where in $(1)$ we used the facts that $\norm{\bx_i}\leq 1$ and each term $(\bar{X}+\gamma I)^{-1/2}\bx_i\bx_i^\top (\bar{X}+\gamma I)^{-1/2}$ is positive semidefinite. As before, the expression above is invariant to rotating the positive semidefinite matrix $\bar{X}$, so we can assume without loss of generality that $\bar{X}=\text{diag}(s_1,\ldots,s_d)$, in which case the above reduces to
	\[
	\left(\max_i \frac{1}{s_i+\gamma}\right)\left(\max_i \frac{s_i}{s_i+\gamma}\right)~\leq~
	\frac{1}{\gamma}\cdot 1 ~=~ \frac{1}{\gamma}.
	\]
	Plugging these observations back into \eqref{eq:matserf1}, we get that
	\[
	\norm{\frac{1}{m}\sum_{i=1}^{m}\hat{M}_i^2} ~\leq~ \max\left\{1,\frac{1}{\gamma}\right\} = \frac{1}{\gamma}.
	\]
	Therefore, \eqref{eq:serf} applies with $v=c=1/\gamma$,, so we get that
	\[
	\Pr\left(\norm{\frac{1}{s}\sum_{i=1}^{s}\hat{M}_{\sigma(i)}}>z\right) ~\leq~
	\begin{cases} 2d\exp\left(-\frac{\gamma s z^2}{4}\right) & z\leq 2 \\
	2d\exp\left(-\frac{\gamma s z}{2}\right)& z>  2 \end{cases}.
	\]
	Substituting $z=\alpha\left(\frac{1}{\sqrt{\gamma s}}+\frac{1}{\gamma s}\right)$, we get that $\Pr\left(\norm{\frac{1}{s}\sum_{i=1}^{s}\hat{M}_{\sigma(i)}}>\alpha\left(\frac{1}{\sqrt{\gamma s}}+\frac{1}{\gamma s}\right)\right)$ can be upper bounded by
	\[
	2d\exp\left(-\frac{1}{4}\gamma s\alpha^2\left(\frac{1}{\sqrt{\gamma s}}+\frac{1}{\gamma s}\right) ^2\right)~\leq~ 2d\exp\left(-\frac{\alpha^2}{4}\right)
	\]		
	in the first case, and
	\[
	2d\exp\left(-\frac{1}{2}\gamma s\alpha\left(\frac{1}{\sqrt{\gamma s}}+\frac{1}{\gamma s}\right)\right)~\leq~ 2d\exp\left(-\frac{\alpha}{2}\right)	
	\]
	in the second case. Assuming $\alpha\geq 2$, both expressions can be upper bounded by $2d\exp\left(-\alpha/2\right)$, so we get that
	\[
	\Pr\left(\norm{\frac{1}{s}\sum_{i=1}^{s}\hat{M}_{\sigma(i)}}>\alpha\left(\frac{1}{\sqrt{\gamma s}}+\frac{1}{\gamma s}\right)\right) ~\leq~ 2d\exp\left(-\frac{\alpha}{2}\right)
	\]
	for any $\alpha\geq 2$. Recalling the definition of $\hat{M}_i$, we get
	\begin{equation}\label{eq:matref2}
	\Pr\left(\norm{\frac{1}{s}\sum_{i=1}^{s}M_{\sigma(i)}-\frac{1}{m}\sum_{i=1}^{m}M_{i}}>\alpha\left(\frac{1}{\sqrt{\gamma s}}+\frac{1}{\gamma s}\right)\right) ~\leq~ 2d\exp\left(-\frac{\alpha}{2}\right).
	\end{equation}
	Since the permutation is random, the exact same line of argument also works if we consider the last $m-s$ matrices rather than the first $s$ matrices, that is
	\begin{equation}\label{eq:matref3}
	\Pr\left(\norm{\frac{1}{m-s}\sum_{i=s+1}^{m}M_{\sigma(i)}-\frac{1}{m}\sum_{i=1}^{m}M_{i}}>\alpha\left(\frac{1}{\sqrt{\gamma (m-s)}}+\frac{1}{\gamma (m-s)}\right)\right) ~\leq~ 2d\exp\left(-\frac{\alpha}{2}\right).
	\end{equation}
		
	Now, notice that for any matrices $A,B,C$ and scalars $a,b$, it holds that
	\[
	\Pr(\norm{A-B}> a+b)~\leq~\Pr(\norm{A-C}> a)+\Pr(\norm{B-C}>b)
	\]
	(as the event $\norm{A-B}>a+b$ implies $\norm{A-C}+\norm{B-C}>a+b$). Using this observation and \eqref{eq:matref2}, \eqref{eq:matref3}, we have
	\begin{align*}
	&\Pr\left(\norm{\frac{1}{s}\sum_{i=1}^{s}M_{\sigma(i)}-\frac{1}{m-s}\sum_{i=s+1}^{m}M_{\sigma(i)}}~>~\frac{\alpha}{\sqrt{\gamma }}\left(\frac{1}{\sqrt{s}}+\frac{1}{\sqrt{m-s}}\right)+\frac{\alpha}{\gamma}\left(\frac{1}{s}+\frac{1}{m-s}\right)\right)\\
	&\leq 
	\Pr\left(\norm{\frac{1}{s}\sum_{i=1}^{s}M_{\sigma(i)}-\frac{1}{m}\sum_{i=1}^{m}M_{i}}~>~\alpha\left(\frac{1}{\sqrt{\gamma s}}+\frac{1}{\gamma s}\right)\right)\\
	&~~~~~+	\Pr\left(\norm{\frac{1}{m-s}\sum_{i=s+1}^{m}M_{\sigma(i)}-\frac{1}{m}\sum_{i=1}^{m}M_{i}}~>~\alpha\left(\frac{1}{\sqrt{\gamma(m-s) }}+\frac{1}{\gamma (m-s)}\right)\right)\\
	&\leq 4d\exp\left(-\frac{\alpha}{2}\right).
	\end{align*}
	The statement in the lemma now follows from a union bound argument over all possible $s=1,2,\ldots,m$.
\end{proof}

\begin{proposition}\label{prop:vvbound_squared1}
	Suppose each $f_i(\cdot)$ is of the form in \eqref{eq:squared}, where $\bx_i,\bw$ are in $\reals^d$, and $F(\cdot)$ is $\lambda$-strongly convex with $\lambda\in [1/m,1]$. Define
	\[
		\bv_{i}(t,s) = \nabla f_{i}(\bw_t)-\nabla f_{i}(\tilde{\bw}_s)+\nabla F(\tilde{\bw}_s).
	\]
	Then for any $t\leq m/2$ and any $\epsilon\in (0,1)$,
	\begin{align*}
	\E&\left[\inner{\bv_{\sigma(t)}(t,s),\bw_t-\bw^*}-\frac{1}{m}\sum_{i=1}^{m}\inner{\bv_i(t,s),\bw_t-\bw^*}\right]\\
	&~\leq~
	\frac{\epsilon}{2}+\frac{18}{\sqrt{\lambda m}}\log\left(\frac{64dmB^2}{\lambda\epsilon}\right)\cdot\E\left[F(\bw_t)+F(\tilde{\bw}_s)-2F(\bw^*)\right],
	\end{align*}
	where $d$ is the dimension.
\end{proposition}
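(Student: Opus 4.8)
The plan is to exploit the quadratic structure of the regularized least-squares losses. Write $H=\bar X+\hat{\lambda}I$ for the (constant) Hessian of $F$, where $\bar X=\frac{1}{m}\sum_i\bx_i\bx_i^\top$, and note $H$ has minimal eigenvalue at least $\lambda$ by strong convexity. Since $\nabla f_i(\bw_t)-\nabla f_i(\tilde{\bw}_s)=(\bx_i\bx_i^\top+\hat{\lambda}I)(\bw_t-\tilde{\bw}_s)$, the centered update directions satisfy
\[
\bv_i(t,s)-\frac{1}{m}\sum_{j=1}^{m}\bv_j(t,s)~=~(\bx_i\bx_i^\top-\bar X)(\bw_t-\tilde{\bw}_s),
\]
i.e. the $\hat{\lambda}I$ term and the $\nabla F(\tilde{\bw}_s)$ term cancel. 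Because $\bw_t,\tilde{\bw}_s$ depend only on $\sigma(1),\ldots,\sigma(t-1)$, I would apply \lemref{lem:key} with $s_i=\inner{\bv_i(t,s),\bw_t-\bw^*}$ to rewrite the left-hand side as $\frac{t-1}{m}\,\E[s_{t:m}-s_{1:t-1}]$ (the $t=1$ case being immediate), and observe that, since every $i$-independent term cancels in this difference, $s_{1:t-1}-s_{t:m}=\inner{(\hat A_{1:t-1}-\hat A_{t:m})\,\bd,\,\be}$, where $\bd:=\bw_t-\tilde{\bw}_s$, $\be:=\bw_t-\bw^*$, and $\hat A_{1:t-1},\hat A_{t:m}$ are the partial averages of the rank-one matrices $\bx_{\sigma(i)}\bx_{\sigma(i)}^\top$.

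The key step is a preconditioning trick: inserting $H^{1/2}H^{-1/2}$ on each side gives
\[
|s_{1:t-1}-s_{t:m}|~\le~\norm{M_{1:t-1}-M_{t:m}}\cdot\norm{H^{1/2}\bd}\cdot\norm{H^{1/2}\be},
\]
where $M_i=H^{-1/2}\bx_i\bx_i^\top H^{-1/2}$ and $M_{1:t-1},M_{t:m}$ are the corresponding partial averages. The first factor is exactly what \lemref{lem:matserf} bounds, applied with $\gamma=\lambda$ (a valid lower bound on the minimal eigenvalue of $H$): with probability at least $1-4dm\exp(-\alpha/2)$, one has $\norm{M_{1:t-1}-M_{t:m}}\le\frac{\alpha}{\sqrt{\lambda}}\bigl(\frac{1}{\sqrt{t-1}}+\frac{1}{\sqrt{m-t+1}}\bigr)+\frac{\alpha}{\lambda}\bigl(\frac{1}{t-1}+\frac{1}{m-t+1}\bigr)$ simultaneously for every split point. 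Using the hypothesis $t\le m/2$ (so the suffix is the larger block) together with $\lambda m\ge 1$, multiplying by the prefactor $\frac{t-1}{m}$ collapses this to $O(\alpha/\sqrt{\lambda m})$. For the other two factors I would use that $F$ is quadratic, so $\norm{H^{1/2}(\bw-\bw^*)}^2=2(F(\bw)-F(\bw^*))$ exactly; hence $\norm{H^{1/2}\be}=\sqrt{2(F(\bw_t)-F(\bw^*))}$ and, by the triangle inequality, $\norm{H^{1/2}\bd}\le\sqrt{2(F(\bw_t)-F(\bw^*))}+\sqrt{2(F(\tilde{\bw}_s)-F(\bw^*))}$, so their product is at most a constant times $F(\bw_t)+F(\tilde{\bw}_s)-2F(\bw^*)$. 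Combining, on this ``good'' event $\frac{t-1}{m}|s_{1:t-1}-s_{t:m}|$ is pointwise at most $\frac{c\alpha}{\sqrt{\lambda m}}\bigl(F(\bw_t)+F(\tilde{\bw}_s)-2F(\bw^*)\bigr)$.

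Finally I would take expectations and split on whether the event of \lemref{lem:matserf} holds. On the good event the pointwise bound integrates (dropping the indicator, the integrand being nonnegative) to $\frac{c\alpha}{\sqrt{\lambda m}}\,\E[F(\bw_t)+F(\tilde{\bw}_s)-2F(\bw^*)]$. On the complementary event, of probability at most $4dm\exp(-\alpha/2)$, I would use the crude deterministic bounds $\norm{\bx_i\bx_i^\top-\bar X}\le 2$, strong convexity $\norm{\bw-\bw^*}^2\le\frac{2}{\lambda}(F(\bw)-F(\bw^*))$, and the hypothesis $\max_t F(\bw_t)-F(\bw^*)\le B$ to bound $\frac{t-1}{m}|s_{1:t-1}-s_{t:m}|$ by $O(B/\lambda)$, so this part contributes $O\bigl(\frac{Bdm}{\lambda}\exp(-\alpha/2)\bigr)$. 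Choosing $\alpha=2\log\!\bigl(64dmB^2/(\lambda\epsilon)\bigr)$ — which exceeds $2$ under the standing assumptions ($d,m\ge1$, $B\ge1$, $\lambda,\epsilon<1$), so \lemref{lem:matserf} applies — makes this contribution at most $\epsilon/2$, while the main term's prefactor becomes $\frac{18}{\sqrt{\lambda m}}\log\!\bigl(64dmB^2/(\lambda\epsilon)\bigr)$ after tracking the constants (absorbing extra factors using $B^2\ge B$), which is the claimed inequality.

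I expect the one genuinely substantive idea — modulo the matrix Bernstein bound of \lemref{lem:matserf}, which is granted — to be this preconditioning: without the $H^{-1/2}$ weighting one is left with a bare $\norm{\bw_t-\bw^*}^2$ that only relates to suboptimality at the cost of a $1/\lambda$ factor, yielding a useless $O(1/\lambda)$ bound instead of the required $O(1/\sqrt{\lambda m})$; conjugating by $H^{-1/2}$ trades that $1/\lambda$ for the $1/\sqrt{\gamma}=1/\sqrt{\lambda}$ that appears in \lemref{lem:matserf} while simultaneously turning both Euclidean distances into clean $F$-suboptimalities. A secondary technical nuisance is keeping $B$ (rather than a higher power) essentially inside the logarithm when controlling the failure probability.
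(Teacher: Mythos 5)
Your proposal is correct and follows essentially the same route as the paper's proof: reduce via \lemref{lem:key}, cancel the $i$-independent terms, precondition the resulting bilinear form by the (square root of the) Hessian so that \lemref{lem:matserf} applies with $\gamma$ of order $\lambda$ and the quadratic forms become exact $F$-suboptimalities, then split the expectation on the high-probability event and set $\alpha=2\log(64dmB^2/\lambda\epsilon)$ — the paper merely phrases the preconditioning as the inequality $|\bw_1^\top M\bw_2|\le\frac{1}{2}(\bw_1^\top A\bw_1+\bw_2^\top A\bw_2)\norm{A^{-1/2}MA^{-1/2}}$ after first splitting $\bw_t-\tilde{\bw}_s$, while you keep the difference whole and use Cauchy--Schwarz plus the triangle inequality in the $H^{1/2}$-norm. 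The only caveat is that "a constant times $F(\bw_t)+F(\tilde{\bw}_s)-2F(\bw^*)$" needs slightly sharper bookkeeping to land on the stated $18$ in the worst case $\lambda m=1$ (e.g., $2a+2\sqrt{ab}\le(1+\sqrt{2})(a+b)$ rather than the crude factor $3$), after which your constants do go through.
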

\begin{proof}
	Define
	\[
	\bu_i(t,s) ~=~ \inner{\bv_i(t,s),\bw_t-\bw^*} ~=~ \inner{ \nabla f_i(\bw_t)-\nabla f_{i}(\tilde{\bw}_s)+\nabla F(\tilde{\bw}_s)~,~\bw_t-\bw^*},
	\]
	in which case the expectation in the proposition statement equals
	\[
	\E\left[\bu_{\sigma(t)}(t,s)-\frac{1}{m}\sum_{i=1}^{m}\bu_{i}(t,s)\right].
	\]
	Notice that $\bu_i(t,s)$ for all $i$ is independent of $\sigma(t),\ldots,\sigma(m)$ conditioned on $\sigma(1),\ldots,\sigma(t-1)$ (which determine $\bw_t$ and $\tilde{\bw}_s$). Therefore, we can apply \lemref{lem:key}, and get that the above equals
	\begin{equation}\label{eq:vvexp0}
	\frac{t-1}{m}\cdot \E\left[\bu_{t:m}(t,s)-\bu_{1:t-1}(t,s)\right].
	\end{equation}
	Recalling the definition of $\bu_i(t,s)$, and noting that the fixed $\inner{\nabla F(\tilde{\bw}_s),\bw_t-\bw^*}$ terms get cancelled out in the difference above, we get that \eqref{eq:vvexp0} equals
	\begin{equation}
	\frac{t-1}{m}\cdot \E\left[\check{\bu}_{t:m}(t,s)-\check{\bu}_{1:t-1}(t,s)\right].\label{eq:vvexp1}
	\end{equation}
	where
	\begin{align*}
	\check{\bu}_i(t,s) &= \inner{\nabla f_i(\bw_t)-\nabla f_i(\tilde{\bw}_s)~,~\bw_t-\bw^*}\\
	&= \inner{\bw_t-\tilde{\bw}_s,\bx_i}\cdot \inner{\bx_i,,\bw_t-\bw^*}+\hat{\lambda} \inner{\bw_t-\tilde{\bw}_s,\bw_t-\bw^*}.
	\end{align*}
	Again, the fixed $\hat{\lambda} \inner{\bw_t-\tilde{\bw}_s,\bw_t-\bw^*}$ terms get cancelled out in \eqref{eq:vvexp1}, so we can rewrite \eqref{eq:vvexp1} as
	\begin{equation}
	\frac{t-1}{m}\cdot \E\left[\breve{\bu}_{t:m}(t,s)-\breve{\bu}_{1:t-1}(t,s)\right]\label{eq:vvexp2}
	\end{equation}
	where
	\begin{align*}
	\breve{\bu}_i(t,s) &= 
	\inner{\bw_t-\tilde{\bw}_s,\bx_i}\cdot \inner{\bx_i,,\bw_t-\bw^*} \\
	&~=~ \inner{\bw_t-\bw^*,\bx_i}\cdot\inner{\bx_i,\bw_t-\bw^*}+\inner{\bw^*-\tilde{\bw}_s,\bx_i}\cdot\inner{\bx_i,\bw_t-\bw^*}\\
	&~=~ (\bw_t-\bw^*)^\top \bx_i \bx_i^\top (\bw_t-\bw^*)+(\bw^*-\tilde{\bw}_s)^\top\bx_i\bx_i^\top (\bw_t-\bw^*).
	\end{align*}
	Therefore, we can rewrite \eqref{eq:vvexp2} as
	\begin{align}
	&\frac{t-1}{m}\cdot \E\left[(\bw_t-\bw^*)^\top\left(\frac{1}{m-t+1}\sum_{i=t}^{m}\bx_{\sigma(i)}\bx_{\sigma(i)}^\top-\frac{1}{t-1}\sum_{i=1}^{t-1}\bx_{\sigma(i)}\bx_{\sigma(i)}^\top\right)(\bw_t-\bw^*)\right]\notag\\
	&+\frac{t-1}{m}\cdot \E\left[(\bw^*-\tilde{\bw}_s)^\top\left(\frac{1}{m-t+1}\sum_{i=t}^{m}\bx_{\sigma(i)}\bx_{\sigma(i)}^\top-\frac{1}{t-1}\sum_{i=1}^{t-1}\bx_{\sigma(i)}\bx_{\sigma(i)}^\top\right)(\bw_t-\bw^*)\right]\label{eq:vvexp3}
	\end{align}
	To continue, note that for any symmetric square matrix $M$, positive semidefinite matrix $A$, and vectors $\bw_1,\bw_2$, we have
	\begin{align*}
	|\bw_1^\top M \bw_2| &= \left(\bw_1^\top A \bw_1+\bw_2^\top A \bw_2\right)\left|\frac{\bw_1^\top M \bw_2}{\bw_1^\top A\bw_1+\bw_2^\top A \bw_2}\right|\\
	&\leq \left(\bw_1^\top A \bw_1+\bw_2^\top A \bw_2\right)\sup_{\bw_1,\bw_2}\left|\frac{\bw_1^\top M \bw_2}{\bw_1^\top A\bw_1+\bw_2^\top A \bw_2}\right|\\
	&\stackrel{(1)}{\leq} \left(\bw_1^\top A \bw_1+\bw_2^\top A \bw_2\right)\sup_{\bw_1,\bw_2}\left|\frac{\bw_1^\top A^{-1/2}MA^{-1/2} \bw_2}{\norm{\bw_1}^2+\norm{\bw_2}^2}\right|\\
	&\stackrel{(2)}{\leq} \left(\bw_1^\top A \bw_1+\bw_2^\top A \bw_2\right)\sup_{\bw_1,\bw_2}\left|\frac{\bw_1^\top A^{-1/2}MA^{-1/2} \bw_2}{2\norm{\bw_1}\norm{\bw_2}}\right|\\
	&\stackrel{(3)}{\leq} \frac{1}{2}\left(\bw_1^\top A \bw_1+\bw_2^\top A \bw_2\right)\norm{A^{-1/2}MA^{-1/2}},
	\end{align*}	
	where $(1)$ is by substituting $A^{-1/2}\bw_1, A^{-1/2}\bw_2$ in lieu of $\bw_1,\bw_2$ in the supremum, $(2)$ is by the identity $a^2+b^2 \geq 2ab$, and $(3)$ is by the fact that for any square matrix $X$, $|\bw_1^\top X \bw_2|\leq \norm{\bw_1}\norm{X}\norm{\bw_2}$. 
	Applying this inequality with
	\[ M=\frac{1}{m-t+1}\sum_{i=t}^{m}\bx_{\sigma(i)}\bx_{\sigma(i)}^\top-\frac{1}{t-1}\sum_{i=1}^{t-1}\bx_{\sigma(i)}\bx_{\sigma(i)}^\top~~,~~ A=\frac{1}{m}\sum_{i=1}^{m}\bx_{i}\bx_{i}^\top+\frac{\hat{\lambda}}{2}I,
	\] 
	$\bw_1$ being either $\bw^*-\tilde{\bw}_s$ or $\bw_t-\bw^*$, and $\bw_2 = \bw_t-\bw^*$, we can (somewhat loosely) upper bound \eqref{eq:vvexp3} by
	\[
	\frac{3(t-1)}{2m}\cdot\E\left[\left((\bw_t-\bw^*)^\top A (\bw_t-\bw^*)+(\tilde{\bw}_s-\bw^*)^\top A (\tilde{\bw}_s-\bw^*)\right)
	\norm{A^{-1/2}MA^{-1/2}}\right].
	\]
	Recalling that the objective function $F(\cdot)$ is actually of the form $F(\bw)=\bw^\top A \bw+\bb^\top \bw+c$ for the positive definite matrix $A$ as above, and some vector $\bb$ and scalar $c$, it is easily verified that $\bw^*= -\frac{1}{2}A^{-1}\bb$, and moreover, that
	\[
	(\bw-\bw^*)^\top A(\bw-\bw^*) = F(\bw)-F(\bw^*)
	\]
	for any $\bw$. Therefore, we can rewrite the above as
	\begin{align}
	&\frac{3(t-1)}{2m}\cdot\E\left[\left(F(\bw_t)+F(\tilde{\bw}_s)-2F(\bw^*)\right)
	\norm{A^{-1/2}MA^{-1/2}}\right]\notag\\
	&=\frac{3(t-1)}{2m}\cdot\E\left[\left(F(\bw_t)+F(\tilde{\bw}_s)-2F(\bw^*)\right)
	\norm{A^{-1/2}\left(\frac{\sum_{i=t}^{m}\bx_{\sigma(i)}\bx_{\sigma(i)}^\top}{m-t+1}-\frac{\sum_{i=1}^{t-1}\bx_{\sigma(i)}\bx_{\sigma(i)}^\top}{t-1}\right)A^{-1/2}}\right].\label{eq:vvexp5}
	\end{align}
	We now wish to use \lemref{lem:matserf}, which upper bounds the norm in the expression above with high probability. However, since the norm appears inside an expectation and is multiplied by another term, we need to proceed a bit more carefully. To that end, let $N$ denote the norm in the expression above, and let $D$ denote the expression $F(\bw_t)+F(\tilde{\bw}_s)-2F(\bw^*)$. We collect the following observations:
	\begin{itemize}
		\item The Hessian of the objective function $F(\cdot)$ is $\frac{1}{m}\sum_{i=1}^{m}\bx_i\bx_i^\top+\hat{\lambda}I$, whose minimal eigenvalue is at least $\lambda$ (since $F(\cdot)$ is assumed to be $\lambda$-strongly convex). Therefore, the minimal eigenvalue of $A$ as defined above is at least $\lambda/2$. Applying \lemref{lem:matserf}, $\Pr(N>\alpha \cdot q(t)) \leq 2dm\exp(-\alpha/2)$ for any $\alpha\geq 2$, where 
		\[
		q(t) = \sqrt{\frac{2}{\lambda}}\left(\frac{1}{\sqrt{t-1}}+\frac{1}{\sqrt{m-t+1}}\right)+\frac{2}{\lambda}\left(\frac{1}{t-1}+\frac{1}{m-t+1}\right).
		\]
		for any $t>1$, and $q(1)=0$.
		\item By assumption, $\norm{\bw_t}$, $\norm{\tilde{\bw}_s}$ and $\norm{\bw^*}$ are all at most $B$. Moreover, since the objective function $F(\cdot)$ is $1+\hat{\lambda} \leq 2$ smooth, $F(\bw_t)-F(\bw^*)\leq \norm{\bw_t-\bw^*}^2\le q 4B^2$ and $F(\tilde{\bw}_s)-F(\bw^*)\leq \norm{\tilde{\bw}_s-\bw^*}^2\leq 4B^2$. As a result, $D$ as defined above is in $[0,8B^2]$. 
		\item Since each $\bx_i \bx_i^\top$ has spectral norm at most $1$, $N$ is at most $\norm{A^{-1/2}}^2 \leq \frac{2}{\lambda}$.
	\end{itemize}
	Combining these observations, we have the following:
	\begin{align*}
	\E[DN] &= \Pr(N>\alpha q(t))\cdot\E[DN|N>\alpha q(t)]+\Pr(N\leq\alpha q(t))\cdot\E[DN|N\leq\alpha q(t)]\\
	&\leq 2dm\exp\left(-\frac{\alpha}{2}\right) \frac{16B^2}{\lambda}+\alpha q(t)\cdot\Pr(N\leq \alpha q(t))\cdot \E[D|N\leq \alpha q(t)]\\
	&\leq \frac{32dmB^2}{\lambda}\exp\left(-\frac{\alpha}{2}\right) +\alpha q(t)\cdot \E[D]
	\end{align*}
	for any $\alpha \geq 2$. In particular, picking $\alpha=2\log(64dmB^2/\lambda\epsilon)$ (where recall that $\epsilon\in (0,1)$ is an arbitrary parameter), we get
	\[
	\E[DN] \leq \frac{\epsilon}{2}+2\log\left(\frac{64dmB^2}{\lambda\epsilon}\right)q(t)\cdot \E[D]. 
	\]
	Plugging in the definition of $D$, we get the following upper bound on \eqref{eq:vvexp5}:
	\begin{equation}\label{eq:vvexp6}
	\frac{3(t-1)}{2m}\left(\frac{\epsilon}{2}+2\log\left(\frac{64dmB^2}{\lambda\epsilon}\right)q(t)\cdot\E\left[F(\bw_t)+F(\tilde{\bw}_s)-2F(\bw^*)\right]\right).
	\end{equation}
	Recalling the definition of $q(t)$ and the assumption $t\leq m/2$ (and using the convention $0/\sqrt{0}=0$), we have
	\begin{align*}
	\frac{3(t-1)}{2m}\cdot q(t)&=~
	\frac{3(t-1)}{2m}\left(\sqrt{\frac{2}{\lambda}}\left(\frac{1}{\sqrt{t-1}}+\frac{1}{\sqrt{m-t+1}}\right)+\frac{3}{\lambda}\left(\frac{1}{t-1}+\frac{1}{m-t+1}\right)\right)\\
	&=\frac{3}{\sqrt{2\lambda}}\left(\frac{\sqrt{t-1}}{m}+\frac{t-1}{m\sqrt{m-t+1}}\right)+\frac{3}{\lambda}\left(\frac{1}{m}+\frac{t-1}{m(m-t+1)}\right)\\
	&\leq\frac{3}{\sqrt{2\lambda}}\left(\frac{\sqrt{m/2}}{m}+\frac{m/2}{m\sqrt{m/2}}\right)+\frac{3}{\lambda}\left(\frac{1}{m}+\frac{m/2}{m(m/2)}\right)\\
	&=\frac{3}{\sqrt{2\lambda}}\left(\frac{1}{\sqrt{2m}}+\frac{1}{\sqrt{2m}}\right)+\frac{3}{\lambda}\left(\frac{1}{m}+\frac{1}{m}\right)\\
	&= \frac{3}{\sqrt{\lambda m}}+\frac{6}{\lambda m}~=~ \frac{3}{\sqrt{\lambda m}}\left(1+\frac{2}{\sqrt{\lambda m}}\right),
	\end{align*}
	which by the assumption $\lambda \geq 1/m$ (hence $\lambda m\geq 1$), is at most $9/\sqrt{\lambda m}$. Substituting this back into \eqref{eq:vvexp6} and loosely upper bounding, we get the upper bound
	\[
	\frac{\epsilon}{2}+\frac{18}{\sqrt{\lambda m}}\log\left(\frac{64dmB^2}{\lambda\epsilon}\right)\cdot\E\left[F(\bw_t)+F(\tilde{\bw}_s)-2F(\bw^*)\right],
	\]
	as required.
\end{proof}

\begin{proposition}\label{prop:vvbound_squared2}
	Let
	\[
	\bv_{i}(t,s) = \nabla f_{i}(\bw_t)-\nabla f_{i}(\tilde{\bw}_s)+\nabla F(\tilde{\bw}_s).
	\]	
	and suppose each $f_i(\cdot)$ is $\mu$-smooth. Then for any $t\leq m/2$,
	\[
	\E[\norm{\bv_{\sigma(t)}(t,s)}^2] ~\leq~6\mu(F(\bw_t)+F(\tilde{\bw}_s)-2F(\bw^*))
	\]
\end{proposition}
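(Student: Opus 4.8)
The plan is to adapt the classic SVRG variance bound (Johnson--Zhang) to without-replacement sampling. First I would condition on $\sigma(1),\ldots,\sigma(t-1)$, which fixes both $\bw_t$ and $\tilde{\bw}_s$ (each depends only on the permutation prefix). Under this conditioning $\sigma(t)$ is uniform over the $m-t+1$ indices not used in the prefix, so the conditional expectation of $\norm{\bv_{\sigma(t)}(t,s)}^2$ is the \emph{suffix average} $\frac{1}{m-t+1}\sum_{i\in S}\norm{\bv_i(t,s)}^2$, where $S$ is the suffix index set. Since $t\le m/2$ we have $m-t+1\ge m/2$, and because every summand I will use below is nonnegative, a suffix average is at most $\frac{m}{m-t+1}\le 2$ times the corresponding full average $\frac1m\sum_{i=1}^m$ (and it is $1+o(1)$ in the parameter regime where \thmref{thm:svrg} is applied, since there $t\ll m$). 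Thus it suffices to bound $\frac1m\sum_{i=1}^m\norm{\bv_i(t,s)}^2$.

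The analytic ingredient is the standard co-coercivity inequality for convex $\mu$-smooth $g$: $\norm{\nabla g(\bw)-\nabla g(\bw')}^2\le 2\mu\big(g(\bw)-g(\bw')-\inner{\nabla g(\bw'),\bw-\bw'}\big)$. Applied with $g=f_i$ and $\bw'=\bw^*$ this gives $\norm{\nabla f_i(\bw)-\nabla f_i(\bw^*)}^2\le 2\mu\,D_i(\bw)$ with $D_i(\bw):=f_i(\bw)-f_i(\bw^*)-\inner{\nabla f_i(\bw^*),\bw-\bw^*}\ge 0$, and averaging over $i$ telescopes, $\frac1m\sum_i D_i(\bw)=F(\bw)-F(\bw^*)$, because $\nabla F(\bw^*)=0$ ($\bw^*$ minimizes $F$ over $\reals^d$). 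Applied with $g=F$ (which is $\mu$-smooth as an average of such) it gives $\norm{\nabla F(\tilde{\bw}_s)}^2=\norm{\nabla F(\tilde{\bw}_s)-\nabla F(\bw^*)}^2\le 2\mu\big(F(\tilde{\bw}_s)-F(\bw^*)\big)$. Now decompose $\bv_i=\nabla F(\bw_t)+B_i$ with $B_i:=\big(\nabla f_i(\bw_t)-\nabla F(\bw_t)\big)-\big(\nabla f_i(\tilde{\bw}_s)-\nabla F(\tilde{\bw}_s)\big)$; since $\frac1m\sum_i B_i=0$ the cross term vanishes on averaging, so $\frac1m\sum_i\norm{\bv_i}^2=\norm{\nabla F(\bw_t)}^2+\frac1m\sum_i\norm{B_i}^2$. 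Writing $D_i:=\nabla f_i(\bw_t)-\nabla f_i(\tilde{\bw}_s)$ we have $B_i=D_i-\frac1m\sum_j D_j$, hence $\frac1m\sum_i\norm{B_i}^2\le\frac1m\sum_i\norm{D_i}^2\le\frac1m\sum_i\big(2\norm{\nabla f_i(\bw_t)-\nabla f_i(\bw^*)}^2+2\norm{\nabla f_i(\tilde{\bw}_s)-\nabla f_i(\bw^*)}^2\big)\le 4\mu(F(\bw_t)-F(\bw^*))+4\mu(F(\tilde{\bw}_s)-F(\bw^*))$. Combining with $\norm{\nabla F(\bw_t)}^2\le 2\mu(F(\bw_t)-F(\bw^*))$ yields $\frac1m\sum_i\norm{\bv_i}^2\le 6\mu(F(\bw_t)-F(\bw^*))+4\mu(F(\tilde{\bw}_s)-F(\bw^*))\le 6\mu\big(F(\bw_t)+F(\tilde{\bw}_s)-2F(\bw^*)\big)$, and passing back to the suffix average (step one) completes the proof.

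The point requiring care is the constant: the clean centering identity $\frac1m\sum_i B_i=0$ holds only for the \emph{full} average, while the conditional expectation is a suffix average -- equivalently, the bias $\nabla F_{t:m}(\bw^*)$ does not vanish the way $\nabla F(\bw^*)$ does. One therefore either works with the full average at the cost of a factor $m/(m-t+1)$, which is bounded (by $2$ when $t\le m/2$, and is $1+o(1)$ in the regime $t\ll m$ relevant to \thmref{thm:svrg}), or instead centers with $\nabla F_{t:m}$ and separately controls the resulting deterministic term $\nabla F_{t:m}(\bw_t)-\nabla F_{t:m}(\tilde{\bw}_s)+\nabla F(\tilde{\bw}_s)$. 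Either way this is the only place where one must be careful to land at the stated $6\mu(\cdot)$; everything else is the routine smooth-convex gradient/Bregman bookkeeping above. (Note that, unlike Proposition \ref{prop:vvbound_squared1}, no transductive Rademacher or matrix-concentration machinery is needed here, since we only need a crude comparison of suffix and full averages of nonnegative quantities.)
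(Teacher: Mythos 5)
Your overall strategy matches the paper's in its essentials: reduce the without-replacement expectation to full-data averages (you do this by conditioning on the prefix, noting $\sigma(t)$ is then uniform on the suffix, and comparing suffix to full averages of nonnegative terms via $t\le m/2$; the paper does the same thing through \lemref{lem:key}, dropping the prefix term and using $\tfrac{t-1}{m-t+1}\le 1$), and then invoke the standard inequality $\frac{1}{n}\sum_i\norm{\nabla\psi_i(\bw)-\nabla\psi_i(\bw^*)}^2\le 2\mu\left(P(\bw)-P(\bw^*)\right)$. The only genuine difference is the with-replacement-style bookkeeping: you center $\bv_i$ at $\nabla F(\bw_t)$ and exploit exact cancellation of the cross term, while the paper splits $\bv_{\sigma(t)}$ into three pieces via $(a+b+c)^2\le 3(a^2+b^2+c^2)$ and pays the suffix correction only on the two genuinely stochastic pieces. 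No concentration machinery is needed in either version, as you correctly observe.

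The gap is the constant, and you name it but do not close it. In your argument the factor $\frac{m}{m-t+1}\le 2$ multiplies the \emph{entire} full-average bound, so what you actually establish is $\E[\norm{\bv_{\sigma(t)}(t,s)}^2]\le 2\left(6\mu(F(\bw_t)-F(\bw^*))+4\mu(F(\tilde{\bw}_s)-F(\bw^*))\right)\le 12\mu\left(F(\bw_t)+F(\tilde{\bw}_s)-2F(\bw^*)\right)$; even with the tighter Johnson--Zhang centering (center only the $\tilde{\bw}_s$ part, giving a full-average bound of $4\mu(\cdot)$) you land at $8\mu(\cdot)$, still above the stated $6\mu(\cdot)$. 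Your alternative fix, centering with $\nabla F_{t:m}$, is not routine: $\nabla F_{t:m}(\bw^*)\neq 0$, and controlling that bias is precisely the kind of without-replacement concentration this proposition is built to avoid. So as written the proposal proves the proposition only with a larger universal constant. For perspective, this is a constant-factor shortfall rather than a structural one: \thmref{thm:svrg} consumes this bound only through unspecified universal constants, and the paper's own accounting is similarly loose (its displayed chain bounds $\frac{1}{3}\E[\norm{\bv_{\sigma(t)}(t,s)}^2]$ by $6\mu(\cdot)$, i.e.\ the factor $3$ from the three-way split is dropped at the end). Still, if you want the literal statement, you must either carry out the $\nabla F_{t:m}$-centered analysis carefully or state the result with your honest constant.
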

\begin{proof}
	Since $\nabla F(\bw^*)=\mathbf{0}$, we can rewrite $\bv_{\sigma(t)}(t,s)$ as
	\[
	g_{\sigma(t)}(\bw_t)-g_{\sigma(t)}(\tilde{\bw}_s)+\left(\nabla F(\tilde{\bw}_s)-\nabla F(\bw^*)\right),
	\]
	where 
	\[
	g_{\sigma(t)}(\bw) = \nabla f_{\sigma(t)}(\bw)-\nabla f_{\sigma(t)}(\bw^*).
	\]
	Using the fact that $(a+b+c)^2 \leq 3(a^2+b^2+c^2)$ for any $a,b,c$, we have
	\begin{align}
	&\frac{1}{3}\E\left[\norm{\bv_{\sigma(t)}(t,s)}^2\right] \\&~\leq~
	\E\left[\norm{g_{\sigma(t)}(\bw_t)}^2\right]
	+
	\E\left[\norm{g_{\sigma(t)}(\tilde{\bw}_s)}^2\right]
	+
	\E\left[\norm{\nabla F(\tilde{\bw}_s)-\nabla F(\bw^*)}^2\right]\notag\\
	&~=~\E\left[\frac{1}{m}\sum_{i=1}^{m}\norm{g_{i}(\bw_t)}^2\right]+
	\E\left[\frac{1}{m}\sum_{i=1}^{m}\norm{g_{i}(\tilde{\bw}_s)}^2\right]+\E\left[\norm{\nabla F(\tilde{\bw}_s)-\nabla F(\bw^*)}^2\right]\notag\\
	&~~~~~~+
	\E\left[\norm{g_{\sigma(t)}(\bw_t)}^2-\frac{1}{m}\sum_{i=1}^{m}\norm{g_{i}(\bw_t)}^2\right]+
	\E\left[\norm{g_{\sigma(t)}(\tilde{\bw}_s)}^2-\frac{1}{m}\sum_{i=1}^{m}\norm{g_{i}(\tilde{\bw}_s)}^2\right]\label{eq:v3}
	\end{align}
	We now rely on a simple technical result proven as part of the standard SVRG analysis (see equation (8) in \cite{johnson2013accelerating}), which states that if $P(\bw)=\frac{1}{n}\sum_{i=1}^{n}\psi_i(\bw)$, where each $\psi_i$ is convex and $\mu$-smooth, and $P$ is minimized at $\bw^*$, then for all $\bw$.
	\begin{equation}\label{eq:svrgb}
	\frac{1}{n}\sum_{i=1}^{n}\norm{\nabla\psi_i(\bw)-\nabla \psi_i(\bw^*)}^2\leq 2\mu\left(P(\bw)-P(\bw^*)\right)
	\end{equation}
	Applying this inequality on each of the first 3 terms in \eqref{eq:v3} (i.e. taking either $\psi_i(\cdot)=f_i(\cdot)$ and $n=m$, or $\psi(\cdot)=F(\cdot)$ and $n=1$), we get the upper bound
	\begin{align*}
	&2\mu(F(\bw_t)-F(\bw^*))+2\mu(F(\tilde{\bw}_s)-F(\bw^*))+2\mu(F(\tilde{\bw}_s)-F(\bw^*))\\
	&~~~~+
	\E\left[\norm{g_{\sigma(t)}(\bw_t)}^2-\frac{1}{m}\sum_{i=1}^{m}\norm{g_{i}(\bw_t)}^2\right]+
	\E\left[\norm{g_{\sigma(t)}(\tilde{\bw}_s)}^2-\frac{1}{m}\sum_{i=1}^{m}\norm{g_{i}(\tilde{\bw}_s)}^2\right]\label{eq:v32}\\
	&=~
	2\mu\left(F(\bw_t)-F(\bw^*)\right)+4\mu\left(F(\tilde{\bw}_s)-F(\bw^*)\right)
	\\
	&~~~~+~
	\E\left[\norm{g_{\sigma(t)}(\bw_t)}^2-\frac{1}{m}\sum_{i=1}^{m}\norm{g_{i}(\bw_t)}^2\right]+
	\E\left[\norm{g_{\sigma(t)}(\tilde{\bw}_s)}^2-\frac{1}{m}\sum_{i=1}^{m}\norm{g_{i}(\tilde{\bw}_s)}^2\right].
	\end{align*}
	Loosely upper bounding this and applying \lemref{lem:key}, we get the upper bound
	\begin{align*}
	&4\mu\left(F(\bw_t)+F(\tilde{\bw}_s)-2F(\bw^*)\right)
	~+~
	\frac{t-1}{m}\cdot \E\left[\frac{\sum_{i=t}^{m}\norm{g_{\sigma(i)}(\bw_t)}}{m-t+1}-\frac{\sum_{i=1}^{t-1}\norm{g_{\sigma(i)}(\bw_t)}^2}{t-1}\right]\\
	&~~~~+~
	\frac{t-1}{m}\cdot
	\E\left[\frac{\sum_{i=t}^{m}\norm{g_{\sigma(i)}(\tilde{\bw}_s)}}{m-t+1}-\frac{\sum_{i=1}^{t-1}\norm{g_{\sigma(i)}(\tilde{\bw}_s)}^2}{t-1}\right]\\
	&\leq~ 4\mu\left(F(\bw_t)+F(\tilde{\bw}_s)-2F(\bw^*)\right)+\frac{t-1}{m}\cdot\E\left[\frac{
		\sum_{i=t}^{m}\norm{g_{\sigma(i)}(\bw_t)}^2}{m-t+1}+\frac{\sum_{i=t}^{m}\norm{g_{\sigma(t)}(\tilde{\bw}_s)}^2}{m-t+1}\right]\\
	&\leq~ 4\mu\left(F(\bw_t)+F(\tilde{\bw}_s)-2F(\bw^*)\right)+\frac{t-1}{m}\cdot\E\left[\frac{
		\sum_{i=1}^{m}\norm{g_{\sigma(i)}(\bw_t)}^2}{m-t+1}+\frac{\sum_{i=1}^{m}\norm{g_{\sigma(i)}(\tilde{\bw}_s)}^2}{m-t+1}\right]\\
	&=~ 4\mu\left(F(\bw_t)+F(\tilde{\bw}_s)-2F(\bw^*)\right)+\frac{t-1}{m-t+1}\cdot\E\left[\frac{
		\sum_{i=1}^{m}\norm{g_{i}(\bw_t)}^2}{m}+\frac{\sum_{i=1}^{m}\norm{g_{i}(\tilde{\bw}_s)}^2}{m}\right].	
	\end{align*}
	Since we assume $t\leq m/2$, we have $\frac{t-1}{m-t+1}\leq \frac{m/2}{m/2} = 1$. Plugging this in, and applying \eqref{eq:svrgb} on the $\frac{1}{m}\sum_{i=1}^{m}\norm{g_{\sigma(i)}(\bw_t)}^2$ and $\frac{1}{m}\sum_{i=1}^{m}\norm{g_{\sigma(i)}(\tilde{\bw}_s)}^2$ terms, this is at most
	\begin{align*}
	&4\mu\left(F(\bw_t)+F(\tilde{\bw}_s)-2F(\bw^*)\right)+1\cdot\left(2\mu(F(\bw_t)-F(\bw^*))+2\mu(F(\tilde{\bw}_s)-F(\bw^*))\right)\\
	&= 6\mu(F(\bw_t)+F(\tilde{\bw}_s)-2F(\bw^*))
	\end{align*}
	as required.
\end{proof}
	
	\begin{proof}[Proof of \thmref{thm:svrg}]
		Consider some specific epoch $s$ and iteration $t$. We have
		\[
		\bw_{t+1} = \bw_t-\bv_{\sigma(t)}(t,s),
		\]
		where
		\[
		\bv_{i}(t,s) = \nabla f_{i}(\bw_t)-\nabla f_{i}(\tilde{\bw}_s)+\nabla F(\tilde{\bw}_s).
		\]
		Therefore,
		\begin{align}
		\E[\norm{\bw_{t+1}-\bw^*}^2] &=~ \E[\norm{\bw_t-\eta\bv_{\sigma(t)}(t,s)}^2]\notag\\
		&=~\E[\norm{\bw_t-\bw^*}^2]-2\eta\cdot \E[\inner{\bv_{\sigma(t)}(t,s),\bw_t-\bw^*}]+\eta^2\E[\norm{\bv_{\sigma(t)}(t,s)}^2].\label{eq:svrg0}
		\end{align}	
		Applying Proposition \ref{prop:vvbound_squared1} and Proposition \ref{prop:vvbound_squared2} (assuming that $t\leq m/2$, which we will verify later, and noting that $\lambda\geq 1/m$ by the assumptions on $\eta,T$ and $m$, and that each $f_i(\cdot)$ is $1+\hat{\lambda}\leq 2$-smooth), \eqref{eq:svrg0} is at most
		\begin{align*}
		&\E[\norm{\bw_t-\bw^*}^2]-2\eta\cdot \E\left[\frac{1}{m}\sum_{i=1}^{m}\inner{\bv_{i}(t,s),\bw_t-\bw^*}\right]+12\eta^2(F(\bw_t)+F(\tilde{\bw}_s)-2F(\bw^*))\\
		&~~~~+2\eta\left(\frac{\epsilon}{2}+\frac{18}{\sqrt{\lambda m}}\log\left(\frac{64dmB^2}{\lambda\epsilon}\right)\cdot\E\left[F(\bw_t)+F(\tilde{\bw}_s)-2F(\bw^*)\right]\right)\\
		&=~ \E[\norm{\bw_t-\bw^*}^2]-2\eta\cdot \E\left[\inner{\nabla F(\bw_t),\bw_t-\bw^*}\right]\\
		&~~~~+\eta\epsilon+2\eta \left(6\eta+\frac{18}{\sqrt{\lambda m}}\log\left(\frac{64dmB^2}{\lambda\epsilon}\right)\right)\cdot\E\left[F(\bw_t)+F(\tilde{\bw}_s)-2F(\bw^*)\right].
		\end{align*}
		Since $F(\cdot)$ is convex, $\inner{\nabla F(\bw_t),\bw_t-\bw^*}\geq F(\bw_t)-F(\bw^*)$, so we can upper bound the above by
		\begin{align*}
		&\E[\norm{\bw_t-\bw^*}^2] +\eta\epsilon+2\eta \left(6\eta+\frac{18}{\sqrt{\lambda m}}\log\left(\frac{64dmB^2}{\lambda\epsilon}\right)-1\right)\cdot\E\left[F(\bw_t)-F(\bw^*)\right]\\
		&~~~~+2\eta \left(6\eta+\frac{18}{\sqrt{\lambda m}}\log\left(\frac{64dmB^2}{\lambda\epsilon}\right)\right)\cdot\E\left[F(\tilde{\bw}_s)-F(\bw^*)\right].
		\end{align*}
		Recalling that this is an upper bound on $\E[\norm{\bw_{t+1}-\bw^*}^2]$ and changing sides, we get
		\begin{align*}
		&2\eta \left(1-6\eta-\frac{18}{\sqrt{\lambda m}}\log\left(\frac{64dmB^2}{\lambda\epsilon}\right)\right)\cdot\E\left[F(\bw_t)-F(\bw^*)\right]\\
		&~\leq~ \E[\norm{\bw_t-\bw^*}^2]-\E[\norm{\bw_{t+1}-\bw^*}^2]+\eta\epsilon\\
		&~~~~~~~+12\eta \left(\eta+\frac{3}{\sqrt{\lambda m}}\log\left(\frac{64dmB^2}{\lambda\epsilon}\right)\right)\cdot\E\left[F(\tilde{\bw}_s)-F(\bw^*)\right].
		\end{align*}
		Summing over all $t=(s-1)T+1,\ldots,sT$ in the epoch (recalling that the first one corresponds to $\tilde{\bw}_s$) and dividing by $\eta T$, we get
		\begin{align*}
		&2\left(1-6\eta-\frac{18}{\sqrt{\lambda m}}\log\left(\frac{64dmB^2}{\lambda\epsilon}\right)\right)\cdot\frac{1}{T}\sum_{t=(s-1)T+1}^{sT}\E\left[F(\bw_t)-F(\bw^*)\right]\\
		&~\leq~ \frac{1}{\eta T}\cdot\E[\norm{\tilde{\bw}_s-\bw^*}^2]+\epsilon\\
		&~~~~~~~+12 \left(\eta+\frac{3}{\sqrt{\lambda m}}\log\left(\frac{64dmB^2}{\lambda\epsilon}\right)\right)\cdot\E\left[F(\tilde{\bw}_s)-F(\bw^*)\right].
		\end{align*}	
		Since $F(\cdot)$ is $\lambda$-strongly convex, we have $\norm{\tilde{\bw}_s-\bw^*}^2\leq \frac{2}{\lambda}(F(\tilde{\bw}_s)-F(\bw^*))$. Plugging this in and simplifying a bit leads to
		\begin{align*}
		&\E\left[\frac{1}{T}\sum_{t=(s-1)T+1}^{sT} F(\bw_t)-F(\bw^*)\right]\\
		&~\leq~ \frac{\left(\frac{2}{\eta \lambda T}+12\left(\eta+\frac{3}{\sqrt{\lambda m}}\log\left(\frac{64dmB^2}{\lambda\epsilon}\right)\right)\right)\cdot \E\left[F(\tilde{\bw}_s)-F(\bw^*)\right]+\epsilon}{2 \left(1-6\eta-\frac{18}{\sqrt{\lambda m}}\log\left(\frac{64dmB^2}{\lambda\epsilon}\right)\right)}.
		\end{align*}
		The left hand side equals or upper bounds $\E[F(\tilde{\bw}_{s+1})]-F(\bw^*)$ (recall that we choose $\tilde{\bw}_{s+1}$ uniformly at random from all iterates produced in the epoch, or we take the average, in which case $\E[F(\tilde{\bw}_{s+1})-F(\bw^*)]$ is at most the left hand side by Jensen's inequality). As to the right hand side, if we assume
		\begin{equation}\label{eq:conditions}
		\eta = \frac{1}{c}~~,~~
		T\geq \frac{9}{\eta\lambda}~~,~~m\geq c\frac{\log^2(64dmB^2/\lambda\epsilon)}{\lambda}
		\end{equation}
		for a sufficiently large numerical constant $c$, we get that it is at most $\frac{1}{4}\cdot\E[F(\tilde{\bw}_s)-F(\bw^*)]+\frac{2}{3}\epsilon$. Therefore, we showed that
		\[
		\E\left[F(\tilde{\bw}_{s+1})-F(\bw^*)\right] ~\leq~ \frac{1}{4}\cdot\E[F(\tilde{\bw}_s)-F(\bw^*)]+\frac{2}{3}\epsilon.
		\]
		Unwinding this recursion, we get that after $s$ epochs,
		\begin{align*}
		\E\left[F(\tilde{\bw}_{s+1})-F(\bw^*)\right] &~\leq~ 4^{-s}\cdot\E[F(\tilde{\bw}_1)-F(\bw^*)]+\frac{2}{3}\epsilon\sum_{i=0}^{s-1}4^{-i}\\
		&~=~4^{-s}\cdot\E[F(\tilde{\bw}_1)-F(\bw^*)]+\frac{2}{3}\epsilon\cdot \frac{1-4^{-s}}{1-4^{-1}}\\
		&~<~4^{-s}\cdot\E[F(\tilde{\bw}_1)-F(\bw^*)]+\frac{8}{9}\epsilon.
		\end{align*}
		Since we assume that we start at the origin ($\tilde{\bw}_1=\mathbf{0}$), we have $F(\tilde{\bw}_1-F(\bw^*))\leq F(\mathbf{0}) = \frac{1}{m}\sum_{i=1}^{m}y_i^2 \leq 1$, so we get
		\[
		\E\left[F(\tilde{\bw}_{s+1})-F(\bw^*)\right] \leq 4^{-s}+\frac{8}{9}\epsilon.
		\]
		This is at most $\epsilon$ assuming $s\geq \log_4(9/\epsilon)$, so it is sufficient to have $\lceil \log_4(9/\epsilon)\rceil$ epochs to ensure suboptimality at most $\epsilon$ in expectation.
		
		Finally, note that we had $\lceil \log_4(9/\epsilon)\rceil$ epochs, in each of which we performed $T$ stochastic iterations. Therefore, the overall number of samples used is at most $\lceil \log_4(9/\epsilon)\rceil T$.
		Thus, to ensure the application of Propositions \ref{prop:vvbound_squared1} and  \ref{prop:vvbound_squared2} is valid, we need to ensure this is at most $m/2$, or that
		\[
		m~\geq~ 2\lceil \log_4(9/\epsilon)\rceil\cdot T.
		\]
		Combining this with \eqref{eq:conditions}, it is sufficient to require
		\[
		\eta = \frac{1}{c}~~,~~
		T\geq \frac{9}{\eta\lambda}~~,~~m\geq c\log^2\left(\frac{64dmB^2}{\lambda\epsilon}\right)T
		\]
		for any sufficiently large $c$.
	\end{proof}

\section{Summary and Open Questions}\label{sec:summary}

In this paper, we studied the convergence behavior of stochastic gradient methods using no-replacement sampling, which is widely used in practice but is theoretically poorly understood. We provided results in the no-replacement setting for various scenarios, including convex Lipschitz functions with any regret-minimizing algorithm; convex and strongly convex losses with respect to linear predictors, using stochastic gradient descent; and an analysis of SVRG in the case of regularized least squares. The latter results also has an application to distributed learning, yielding a nearly optimal algorithm (in terms of communication and computation) for regularized least squares on randomly partitioned data, under the mild assumption that the condition number is smaller than the data size per machine. 

Our work leaves several questions open. First, it would be interesting to remove the specific structural assumptions we have made on the loss functions, making only basic geometric assumptions such as strong convexity or smoothness. This would match the generality of the current theory on stochastic optimization using with-replacement sampling. Second, all our results strongly rely on concentration and uniform convergence, which are not needed when studying with-replacement stochastic gradient methods (at least when considering bounds which holds in expectation), and sometimes appears to lead to looser bounds, such as the $L,\mu$ factors in \thmref{thm:1t}. Thus, it would be interesting to have a more direct proof approach. Third, as discussed in the introduction, our theorems merely show that without-replacement sampling is not significantly worse than with-replacement sampling (in a worst-case sense) but this does not explain why without-replacement sampling often performs better. Although \cite{recht2012beneath,gurbuzbalaban2015random} provide some partial results in that direction, the full question  remains open. Finally, it would be interesting to extend our analysis of the SVRG algorithm to other fast-stochastic algorithms of a similar nature.

\subsubsection*{Acknowledgments}
This research is supported in part by an FP7 Marie Curie CIG grant, an Israel Science Foundation grant 425/13, and by the Intel Collaborative Research Institute for Computational Intelligence (ICRI-CI).

\bibliographystyle{plain}
\bibliography{mybib}

\newpage
\appendix

\section{Additional Technical Lemmas}\label{app:technical}

In this appendix, we collect a couple of purely technical lemmas used in certain parts of the paper.

\begin{lemma}\label{lem:tsqrtbound}
	If $T,m$ are positive integers, $T\leq m$, then
	\[
	\frac{1}{mT}\sum_{t=2}^{T}(t-1)\left(\sqrt{\frac{1}{t-1}}+\sqrt{\frac{1}{m-t+1}}\right) ~\leq~ \frac{2}{\sqrt{m}}.
	\]
\end{lemma}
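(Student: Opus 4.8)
The plan is to re-index the sum, split it into two pieces, bound each piece by an integral, and finish with an elementary algebraic inequality. Setting $k=t-1$, the left-hand side equals $A+B$, where
\[
A ~=~ \frac{1}{mT}\sum_{k=1}^{T-1}\sqrt{k},\qquad
B ~=~ \frac{1}{mT}\sum_{k=1}^{T-1}\frac{k}{\sqrt{m-k}}.
\]
If $T=1$ the sum is empty and the bound is trivial, so assume $T\geq 2$. I will show $A\leq \frac{2}{3\sqrt{m}}$ and $B\leq\frac{4}{3\sqrt{m}}$ separately; adding these gives the claim.

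For $A$: since $x\mapsto\sqrt{x}$ is increasing, $\sum_{k=1}^{T-1}\sqrt{k}\leq\int_{1}^{T}\sqrt{x}\,dx\leq\frac{2}{3}T^{3/2}$, so $A\leq\frac{2\sqrt{T}}{3m}\leq\frac{2}{3\sqrt{m}}$, using $T\leq m$ in the last step.

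For $B$: substitute $j=m-k$ and consider $h(x)=\frac{m-x}{\sqrt{x}}=mx^{-1/2}-x^{1/2}$, which is nonnegative and strictly decreasing on $(0,m]$. Monotonicity gives $\sum_{k=1}^{T-1}\frac{k}{\sqrt{m-k}}=\sum_{j=m-T+1}^{m-1}h(j)\leq\int_{m-T}^{m-1}h(x)\,dx\leq\int_{m-T}^{m}h(x)\,dx$, and this last integral (improper at $0$ only in the boundary case $T=m$, but then convergent) evaluates to $\frac{4}{3}m^{3/2}-2m\sqrt{m-T}+\frac{2}{3}(m-T)^{3/2}$. Hence $B\leq\frac{1}{mT}\big(\frac{4}{3}m^{3/2}-2m\sqrt{m-T}+\frac{2}{3}(m-T)^{3/2}\big)$. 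Writing $w=m-T\in[0,m)$, so $T=m-w$, the target bound $B\leq\frac{4}{3\sqrt{m}}$ reduces, after clearing the positive factor $mT$ and cancelling $\frac{4}{3}m^{3/2}$ from both sides, to $\frac{4}{3}\sqrt{m}\,w+\frac{2}{3}w^{3/2}-2m\sqrt{w}\leq 0$, i.e.\ $\sqrt{w}\big(\frac{4}{3}\sqrt{mw}+\frac{2}{3}w-2m\big)\leq 0$. This holds because $w\leq m$ forces $\sqrt{mw}\leq m$ and $w\leq m$, so $\frac{4}{3}\sqrt{mw}+\frac{2}{3}w\leq(\frac{4}{3}+\frac{2}{3})m=2m$, making the bracket nonpositive.

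There is no genuine obstacle here: the lemma is a routine technical estimate. The only points needing (minor) care are verifying the integral comparisons via monotonicity of the integrands, checking convergence of the boundary integral when $T=m$, and keeping track of the substitution $w=m-T\geq 0$ so that $\sqrt{m-T}$ is real throughout.
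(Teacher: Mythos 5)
Your proof is correct and takes essentially the same route as the paper's: the same re-indexing and split into $\sum_{k}\sqrt{k}$ and $\sum_{k}k/\sqrt{m-k}$, the same integral-comparison bounds with the same antiderivative, differing only in the closing algebra (you bound the two pieces separately by $\tfrac{2}{3\sqrt{m}}$ and $\tfrac{4}{3\sqrt{m}}$ via the substitution $w=m-T$, whereas the paper keeps them together and rationalizes $\sqrt{m}-\sqrt{m-T}=T/(\sqrt{m}+\sqrt{m-T})$). Nothing needs fixing.
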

\begin{proof}
	\begin{align*}
	&\frac{1}{mT}\sum_{t=2}^{T}(t-1)\left(\sqrt{\frac{1}{t-1}}+\sqrt{\frac{1}{m-t+1}}\right)\\
	&=\frac{1}{mT}\sum_{t=1}^{T-1}t\left(\sqrt{\frac{1}{t}}+\sqrt{\frac{1}{m-t}}\right)\\	
	&= \frac{1}{mT}\left(\sum_{t=1}^{T-1}\sqrt{t}+\sum_{t=1}^{T-1}
	\frac{t}{\sqrt{m-t}}\right).
	\end{align*}
	Since $\sqrt{t}$ and $t/\sqrt{m-t}$ are both increasing in $t$, we can upper bound the sums by integrals as follows:
	\begin{align*}
	&\leq
	\frac{1}{mT}\left(\int_{t=0}^{T}\sqrt{t}~dt+\int_{t=0}^{T}\frac{t}{\sqrt{m-t}}~dt\right)\\
	&=
	\frac{1}{mT}\left(\frac{2}{3}\cdot T^{3/2}+\left(-\frac{2}{3}\sqrt{m-t}\cdot(2m+t)\right)\Big|_{t=0}^{T}\right)\\
	&=
	\frac{1}{mT}\left(\frac{2}{3}\cdot T^{3/2}+\frac{2}{3}\left(2m\sqrt{m}-\sqrt{m-T}\cdot(2m+T)\right)\right)\\
	&=
	\frac{2}{3}\left(\frac{\sqrt{T}}{m}+2\left(\frac{\sqrt{m}}{T}-\sqrt{m-T}\cdot \left(\frac{1}{T}+\frac{1}{2m}\right)\right)\right)\\
	&=
	\frac{2}{3}\left(\frac{\sqrt{T}}{m}+\frac{2}{T}\left(\sqrt{m}-\sqrt{m-T}\cdot\left(1+\frac{T}{2m}\right)\right)\right)\\
	&\leq
	\frac{2}{3}\left(\frac{\sqrt{T}}{m}+\frac{2}{T}\left(\sqrt{m}-\sqrt{m-T}\right)\right)\\
	&=
	\frac{2}{3}\left(\frac{\sqrt{T}}{m}+\frac{2}{T}\left(\frac{T}{\sqrt{m}+\sqrt{m-T}}\right)\right)\\	
	&\leq
	\frac{2}{3}\left(\frac{\sqrt{T}}{m}+\frac{2}{\sqrt{m}}\right).	
	\end{align*}
	Since $T\leq m$, the above is at most $\frac{2}{3}\left(\frac{1}{\sqrt{m}}+\frac{2}{\sqrt{m}}\right) = \frac{2}{\sqrt{m}}$ as required.
\end{proof}

\begin{lemma}\label{lem:expprob}
	Let $X$ be a  random variable, which satisfies for any $\delta\in (0,1)$
	\[
	\Pr\left(X > a+b\log(1/\delta)\right) \leq \delta,
	\]
	where $a,b>0$. Then
	\[
	\E[X] ~\leq~ a+b.
	\]
	Furthermore, if $X$ is non-negative, then
	\[
	\sqrt{\E[X^2]} ~\leq~ \sqrt{2}\cdot(a+b).
	\]
\end{lemma}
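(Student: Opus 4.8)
The plan is to convert the given $\delta$-parametrized deviation inequality into a standard subexponential tail bound and then integrate. First I would reparametrize: for any $t>0$, take $\delta=e^{-t/b}\in(0,1)$, so that $b\log(1/\delta)=t$ and the hypothesis yields $\Pr(X>a+t)\le e^{-t/b}$. To bound $\E[X]$, write $\E[X]\le a+\E[(X-a)_+]$, where $(X-a)_+=\max\{X-a,0\}$, which is valid since $X-a\le (X-a)_+$ pointwise. Then the layer-cake formula gives $\E[(X-a)_+]=\int_0^\infty \Pr(X-a>t)\,dt\le \int_0^\infty e^{-t/b}\,dt=b$, so $\E[X]\le a+b$.

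For the second claim I would set $Y=(X-a)_+\ge 0$, which inherits the tail bound $\Pr(Y>t)\le e^{-t/b}$ for all $t>0$, hence $\E[Y]\le b$ as above and, applying the layer-cake formula to the second moment, $\E[Y^2]=\int_0^\infty 2t\,\Pr(Y>t)\,dt\le\int_0^\infty 2t\,e^{-t/b}\,dt=2b^2$. Now use the pointwise bound $0\le X\le a+Y$ (if $X\le a$ this holds because $Y\ge0$; otherwise $X=a+Y$): since $X\ge 0$ we may square it to get $X^2\le (a+Y)^2$, and therefore
\[
\E[X^2]\le a^2+2a\,\E[Y]+\E[Y^2]\le a^2+2ab+2b^2\le 2(a+b)^2 .
\]
Taking square roots gives $\sqrt{\E[X^2]}\le\sqrt{2}\,(a+b)$.

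There is no real obstacle here; this is a routine tail-integral computation. The only points that need a word of care are that the reparametrization $\delta=e^{-t/b}$ is only valid for $t>0$ (the endpoint $t=0$ contributes a null set and is harmless), and that the step $X^2\le(a+Y)^2$ genuinely uses the non-negativity of $X$, which explains why that hypothesis is imposed only for the second inequality. Alternatively, one could split $\E[X^2]=\int_0^\infty 2u\,\Pr(X>u)\,du$ directly at $u=a$, bounding the $[0,a]$ portion by $\int_0^a 2u\,du=a^2$ and the tail by $\int_a^\infty 2u\,e^{-(u-a)/b}\,du=2ab+2b^2$, arriving at the same constant $2(a+b)^2$.
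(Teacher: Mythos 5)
Your proof is correct and follows essentially the same route as the paper: convert the $\delta$-parametrized bound into the exponential tail $\Pr(X>a+t)\leq e^{-t/b}$ and integrate, obtaining $\E[X]\leq a+b$ and $\E[X^2]\leq a^2+2ab+2b^2\leq 2(a+b)^2$. Your main second-moment argument via $Y=(X-a)_+$ and expanding $(a+Y)^2$ is just a cosmetic repackaging of the paper's direct tail integration (which you also note as the alternative), yielding identical constants.
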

\begin{proof}
	The condition in the lemma implies that for any $z\geq a$, 
	\[
	\Pr(X>z)\leq \exp\left(-\frac{z-a}{b}\right).
	\]
	Therefore,
	\begin{align*}
	\E[X] &~\leq~ \E[\max\{0,X\}] ~=~ \int_{z=0}^{\infty}\Pr(\max\{0,X\}\geq z)~dz ~\leq~ a+\int_{z=a}^{\infty}\Pr(\max\{0,X\}\geq z)~dz\\
	&~=~ a+\int_{z=a}^{\infty}\Pr(X\geq z)~dz~\leq~ a+\int_{z=a}^{\infty}\exp\left(-\frac{z-a}{b}\right)dz ~=~ a+\int_{z=0}^{\infty}\exp\left(-\frac{z}{b}\right)\\
	&= a+b.
	\end{align*}
	Similarly, if $X$ is non-negative,
	\begin{align*}
	\E[X^2] &~=~ \int_{z=0}^{\infty}\Pr(X^2\geq z)~dz ~\leq~ a^2+\int_{z=a^2}^{\infty}\Pr(X^2\geq z)~dz~=~a^2+\int_{z=a^2}^{\infty}\Pr(X\geq \sqrt{z})~dz\\
	&~\leq~ a^2+\int_{z=a^2}^{\infty}\exp\left(-\frac{\sqrt{z}-a}{b}\right)dz ~=~ a^2+2b(b+a)~\leq~ 2(a+b)^2,
	\end{align*}    	
	and the result follows by taking the square root.
\end{proof}

\section{Uniform Upper Bound on $F(\bw_t)-F(\bw^*)$ for SVRG}\label{app:B}

Below, we provide a crude bound on the parameter $B$ in \thmref{thm:svrg}, which upper bounds $F(\bw_t)-F(\bw^*)$ with probability $1$. Note that $B$ only appears inside log factors in the theorem, so it is enough that $\log(B)$ is reasonably small.

\begin{lemma}
	Suppose we run SVRG (algorithm \ref{alg:svrg}) with some parameter $T$ and step size $\eta\in (0,1)$ for $S$ epochs, where each $f_i(\cdot)$ is a regularized squared loss (as in \eqref{eq:squared}, with $\norm{\bx_i},|y_i|\leq 1$ for all $i$), and $F(\cdot)$ is $\lambda$-strongly convex with $\lambda\in (0,1)$. Then for every iterate $\bw_t$ produced by the algorithm, it holds with probability $1$ that
	\[
	\log(F(\bw_t)-F(\bw^*)) ~\leq~ 2S\cdot\log(5T)+ \log\left(\frac{4}{\lambda}\right).
	\]
\end{lemma}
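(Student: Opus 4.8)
The plan is to exploit the fact that for the regularized squared loss the SVRG inner update is \emph{affine} in $\bw_t$: writing $\nabla f_i(\bw)=(\bx_i\bx_i^\top+\hat{\lambda} I)\bw-y_i\bx_i$ and $A_i:=\bx_i\bx_i^\top+\hat{\lambda} I$, the update reads $\bw_{t+1}=(I-\eta A_{\sigma(t)})\bw_t+\eta A_{\sigma(t)}\tilde{\bw}_s-\eta\nabla F(\tilde{\bw}_s)$. Since $A_i$ has eigenvalues in $[\hat{\lambda},1+\hat{\lambda}]\subseteq(0,2)$ and $\eta\in(0,1)$, the matrix $I-\eta A_{\sigma(t)}$ has spectral norm at most $1$; also $\norm{A_i}\le 2$ and $\norm{\nabla F(\bw)}=\norm{(\bar{X}+\hat{\lambda} I)\bw-\frac{1}{m}\sum_i y_i\bx_i}\le 2\norm{\bw}+1$. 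Hence one step can only inflate the norm additively, $\norm{\bw_{t+1}}\le\norm{\bw_t}+4\eta\norm{\tilde{\bw}_s}+\eta$, where the additive term is controlled entirely by $\norm{\tilde{\bw}_s}$.

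First I would iterate this bound over the (at most) $T$ steps of epoch $s$, using $\eta<1$, to get $\norm{\bw_t}\le(4T+1)\norm{\tilde{\bw}_s}+T$ for every iterate $\bw_t$ of that epoch, and hence the same bound for $\tilde{\bw}_{s+1}$, which is a convex combination of (or equal to one of) those iterates. Then I would prove by induction on $s$ that $\norm{\tilde{\bw}_s}+1\le(5T)^{s-1}$ for all $s\le S$: the base case holds since $\tilde{\bw}_1=\mathbf{0}$, and the inductive step reduces to the elementary inequality $(4T+1)\bigl((5T)^{s-1}-1\bigr)+T+1\le(5T)^s$, valid for every integer $T\ge1$. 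This yields $\norm{\bw_t}\le(5T)^S$ for every iterate $\bw_t$ produced during the $S$ epochs.

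Finally I would convert this into the suboptimality bound. Since $F$ is $(1+\hat{\lambda})\le 2$-smooth, $F(\bw_t)-F(\bw^*)\le\norm{\bw_t-\bw^*}^2$; and since $F$ is $\lambda$-strongly convex with $F(\mathbf{0})=\frac{1}{2m}\sum_i y_i^2\le\frac12$, strong convexity gives $\frac{\lambda}{2}\norm{\bw^*}^2\le F(\mathbf{0})-F(\bw^*)\le\frac12$, i.e. $\norm{\bw^*}\le1/\sqrt{\lambda}$. Combining these with $\norm{\bw_t}\le(5T)^S$, and using $\lambda<1$ together with $(5T)^{2S}\ge1$, gives $F(\bw_t)-F(\bw^*)\le\bigl((5T)^S+1/\sqrt{\lambda}\bigr)^2\le 4(5T)^{2S}/\lambda$; taking logarithms yields the stated bound.

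The computation is essentially routine and there is no deep obstacle; the one point needing care is running the induction on the quantity $\norm{\tilde{\bw}_s}+1$ rather than on $\norm{\tilde{\bw}_s}$ itself, so that the additive constants coming from $\norm{\nabla F(\tilde{\bw}_s)}$ in the growth step and from $\norm{\bw^*}$ in the conversion step are absorbed without degrading the clean geometric factor $(5T)^s$.
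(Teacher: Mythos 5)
Your proposal is correct and follows essentially the same route as the paper's proof in Appendix \ref{app:B}: a per-step norm growth bound for the affine SVRG update, iterated over each epoch to get the recursion $\norm{\tilde{\bw}_{s+1}}\leq 5T(\norm{\tilde{\bw}_s}+1)$ and hence $\norm{\bw_t}\leq (5T)^S$, followed by the conversion via $2$-smoothness and $\norm{\bw^*}\leq 1/\sqrt{\lambda}$. The only differences are cosmetic (tracking $\norm{\tilde{\bw}_s}+1$ in the induction and deriving $\norm{\bw^*}^2\leq 1/\lambda$ directly from strong convexity at $\mathbf{0}$ rather than by contradiction), and all the elementary inequalities you invoke check out.
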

Noting that \thmref{thm:svrg} requires only $S=\Ocal(\log(1/\epsilon))$ epochs with $T=\Theta(1/\lambda)$ stochastic iterations per epoch, we get that
\[
\log(F(\bw_t)-F(\bw^*)) = \Ocal\left(\log\left(\frac{1}{\epsilon}\right)\log(T)+\log\left(\frac{1}{\lambda}\right)\right)
\]
with probability $1$.

\begin{proof}
	Based on the SVRG update step, we have
	\begin{equation}\label{eq:bb1}
	\norm{\bw_{t+1}} \leq \norm{\bw_t-\eta \nabla f_{\sigma(t)}(\bw_t)}+\eta\norm{\nabla f_{\sigma(t)}(\tilde{\bw}_s)}+\eta\norm{\nabla F(\tilde{\bw}_s)}.
	\end{equation}
	Since we are considering the regularized squared loss, with $\norm{\bx_i}\leq 1,|y_i|\leq 1$ and $0\leq \hat{\lambda}\leq\lambda\leq 1$, the first term on the right hand side is
	\begin{align*}
	&\norm{\bw_t-\eta\left(\bx_{\sigma(t)}\bx_{\sigma(t)}^\top \bw_t-y_{\sigma(t)}\bx_{\sigma(t)}+\hat{\lambda} \bw_t\right)}~\leq~
	\norm{\left((1-\eta\hat{\lambda})I-\eta\cdot\bx_{\sigma(t)}\bx_{\sigma(t)}^\top\right)\bw_t}+\eta\norm{y_{\sigma(t)}\bx_{\sigma(t)}}\\
	&~~~~~~\leq~~\norm{\left((1-\eta\hat{\lambda})I-\eta\cdot\bx_{\sigma(t)}\bx_{\sigma(t)}^\top\right)}\norm{\bw_t}+\eta ~\leq~ \norm{\bw_t}+1,
	\end{align*}
	As to the second two terms on the right hand side of \eqref{eq:bb1}, we have for any $i$ by similar calculations that
	\[
	\norm{\nabla f_{i}(\bw)} = \norm{\left(\bx_i\bx_i^\top+\hat{\lambda} I\right)\bw-y_i\bx_i} ~\leq~ \left(1+\hat{\lambda}\right)\norm{\bw}+1 ~\leq~ 2\norm{\bw}+1
	\]
	as well as
	\[
	\norm{\nabla F(\bw)} \leq 2\norm{\bw}+1.
	\]
	Substituting these back into \eqref{eq:bb1} and loosely upper bounding, we get
	\[
	\norm{\bw_{t+1}} ~\leq~ \norm{\bw_t} + 4\left(\norm{\tilde{\bw}_s}+1\right).
	\]
	Recalling that each epoch is composed of $T$ such iterations, starting from $\tilde{\bw}_s$, and where $\tilde{\bw}_{s+1}$ is the average or a random draw from these $T$ iterations, we get that 
	\[
 \norm{\tilde{\bw}_{s+1}}~\leq~\norm{\tilde{\bw}_{s}}+4T\left(\norm{\tilde{\bw}_s}+1\right)~\leq~ 5T\left(\norm{\tilde{\bw}_s}+1\right),
	\]
	and moreover, the right hand side upper-bounds the norm of any iterate $\bw_t$ during that epoch. 
	Unrolling this inequality, and noting that $\norm{\tilde{\bw}_1}=0$, we get
	\[
	\norm{\tilde{\bw}_{S+1}} \leq (5T)^S\cdot 1 = (5T)^S,
	\]
	and $(5T)^S$ upper bounds the norm of any iterate $\bw_t$ during the algorithm's run.
	
	Turning to consider $\bw^*=\arg\min_{\bw}F(\bw)$, we must have $\norm{\bw^*}^2\leq 1/\lambda$ (to see why, note that any $\bw$ with squared norm larger than $1/\lambda$, $F(\bw)\geq F(\bw^*)+ \frac{\lambda}{2}\norm{\bw}^2> \frac{1}{2}$, yet $F(\mathbf{0})= \frac{1}{2m}\sum_{i=1}^{m}y_i^2 \leq \frac{1}{2}$, so $\bw$ cannot be an optimal solution). Moreover, $F(\cdot)$ is $2$-smooth, so for any iterate $\bw_t$,
	\[
	F(\bw_t)-F(\bw^*) ~\leq~ \norm{\bw_t-\bw^*}^2 ~\leq~ \left(\norm{\bw_t}+\norm{\bw^*}\right)^2 ~=~ \left((5T)^S+\sqrt{\frac{1}{\lambda}}\right)^2.
	\]
	Since $(5T)^S$ and $\sqrt{1/\lambda}$ are both at least $1$, this can be upper bounded by $\left(\frac{2(5T)^S}{\sqrt{\lambda}}\right)^2 = \frac{4}{\lambda}\cdot (5T)^{2S}$. Taking a logarithm, the result follows.
\end{proof}


\end{document}